\def\eqref#1{equation~\ref{#1}}
\def\1{\bm{1}}
\DeclareMathAlphabet{\mathsfit}{\encodingdefault}{\sfdefault}{m}{sl}
\SetMathAlphabet{\mathsfit}{bold}{\encodingdefault}{\sfdefault}{bx}{n}
\newcommand{\cmark}{\ding{51}}
\newcommand{\xmark}{\ding{55}}
\newtheorem{lemma}{Lemma}
\newtheorem{proposition}{Proposition}
\tikzset{every path/.append style = {arrows = -latex}}
\NewDocumentCommand{\LeftComment}{s m}{%
  \Statex \IfBooleanF{#1}{\hspace*{\ALG@thistlm}}\(\triangleright\) #2}
\title{Learning a Fast Mixing Exogenous Block MDP using a Single Trajectory}
\author{Alexander Levine\thanks{Correspondence to: alevine0@cs.utexas.edu}  \\
    The University of \\Texas at Austin
    \And
    Peter Stone \\
   The University of Texas at Austin\\  Sony AI
    \And
    Amy Zhang \\
    The University of\\ Texas at Austin
    }
\begin{document}

\maketitle
\begin{abstract}
In order to train agents that can quickly adapt to new objectives or reward functions, efficient unsupervised representation learning in sequential decision-making environments can be important. Frameworks such as the Exogenous Block Markov Decision Process (Ex-BMDP) have been proposed to formalize this representation-learning problem \citep{efroni2022provably}. In the Ex-BMDP framework, the agent's high-dimensional observations of the environment have two latent factors: a \textit{controllable} factor, which evolves deterministically within a small state space according to the agent's actions, and an \textit{exogenous} factor, which represents time-correlated noise, and can be highly complex. The goal of the representation learning problem is to learn an encoder that maps from observations into the controllable latent space, as well as the dynamics of this space. \cite{efroni2022provably} has shown that this is possible with a sample complexity that depends only on the size of the \textit{controllable} latent space, and not on the size of the noise factor. However, this prior work has focused on the episodic setting, where the controllable latent state resets to a specific start state after a finite horizon.

By contrast, if the agent can only interact with the environment in a single continuous trajectory, prior works have not established sample-complexity bounds. We propose \textbf{STEEL}, the first provably sample-efficient algorithm for learning the controllable dynamics of an Ex-BMDP from a single trajectory, in the function approximation setting. STEEL has a sample complexity that depends only on the sizes of the \textit{controllable} latent space and the encoder function class, and (at worst linearly) on the \textit{mixing time} of the exogenous noise factor. We prove that STEEL is correct and sample-efficient, and demonstrate STEEL on two toy problems. Code is available at: \url{https://github.com/midi-lab/steel}.
\end{abstract}
\section{Introduction}

This work considers the \textit{unsupervised representation learning} problem in sequential control environments. Suppose an agent (e.g., a robot) is able to make observations and take actions in an environment for some period of time, but does not yet have an externally-defined task to accomplish. We want the agent to learn a \textit{model} of the environment that may be useful for many downstream tasks: the question is then how to efficiently explore the environment to learn such a model.

Sequential decision-making tasks are often modeled as \textit{Markov Decision Processes} (MDPs). In the unsupervised setting, an MDP consists of a set of possible observations $\mathcal{X}$, a set of possible actions $\mathcal{A}$, a distribution over initial observations $\pi_0 \in \mathcal{P}(\mathcal{X})$, and a transition function $\mathcal{T} :\mathcal{X} \times \mathcal{A} \rightarrow \mathcal{P}(\mathcal{X})$. The agent does not have direct access to $\mathcal{T}$. Instead, at each timestep $t$, the agent observes $x_t \in \mathcal{X}$ and selects action $a_{t}$. The next observation $x_{t+1}$ is then sampled as $x_{t+1} \sim \mathcal{T}(x_t, a_t)$. 

In the totally generic MDP setting, the only  model learning possible is to directly learn the transition function $\mathcal{T}$. However, if the space of possible observations is large, this task becomes intractable. Therefore, prior works have attempted to simplify the problem by assuming that the MDP has some underlying structure, which a learning algorithm can exploit. One such structural assumption is the Ex-BMDP (Exogenous Block MDP) framework, introduced by \cite{efroni2022provably}. The Ex-BMDP framework captures situations where the space of observations $\mathcal{X}$ is very large, but the parts of the environment that the agent has control over can be represented by a much smaller latent state.

An Ex-BMDP has an observation space $\mathcal{X}$, a controllable (or \textit{endogenous}) latent state space $\mathcal{S}$, and an exogenous state space $\mathcal{E}$. In the version of this  setting that we consider, the controllable state $s_t \in \mathcal{S}$ evolves \textit{deterministically} according to a latent transition function $T : \mathcal{S} \times \mathcal{A} \rightarrow \mathcal{S}$. That is: $s_{t+1}  = T(s_t,a_t)$. The exogenous state $e_t \in \mathcal{E}$ represents \textit{time correlated noise}: it evolves stochastically, \textit{independently of actions}, according to a transition function $\mathcal{T}_e : \mathcal{E} \rightarrow \mathcal{P}(\mathcal{E})$. That is: $e_{t+1}  \sim \mathcal{T}_e(e_t)$. Neither $s$ nor $e$ is directly observed. Instead, the observation $x_t$ is sampled as $x_t \sim \mathcal{Q}(s_t,e_t)$, where $ \mathcal{Q} \in \mathcal{S} \times \mathcal{E} \rightarrow \mathcal{P}(\mathcal{X})$ is the \textit{emission function}.
We make a \textit{block} assumption on $\mathcal{Q}$ with respect to $\mathcal{S}$: that is, we assume that for distinct latent states $s, s' \in \mathcal{S}$, the sets of possible observations that can be sampled from $\mathcal{Q}(s,\cdot)$ and $\mathcal{Q}(s',\cdot)$ are disjoint. In other words, there exists a deterministic partial inverse of $\mathcal{Q}$, which is $\phi^*: \mathcal{X} \rightarrow \mathcal{S}$, such that if $x \sim \mathcal{Q}(s,e)$, then $\phi^*(x) = s$. Hence, it is always possible in principle to infer $s$ from $x$.\footnote{Unlike most prior works on Ex-BMDPs, we do \textit{not} make a block assumption on $\mathcal{E}$: we allow the same $x$ to be emitted by $\mathcal{Q}(s,e)$ and $\mathcal{Q}(s,e')$, for distinct $e,e'$. Technically, then, our Ex-BMDP framework represents a restricted class of Partially-Observed MDPs (POMDPs), rather than MDPs, because the complete state is not encoded within the observed $\mathcal{X}$.}

As in the general MDP setting, the agent only directly observes $x_t \in \mathcal{X}$, and chooses actions $a_t$ in response. However, rather than attempting to learn the full transition dynamics $\mathcal{T}$ of the system (which is determined by $T$, $\mathcal{T}_e$, and $\mathcal{Q}$ together), the objective of the agent is instead to efficiently model only the \textit{latent encoder} $\phi^*$ and the \textit{latent transition function} $T$. Together, these models allow the agent to plan or learn in downstream tasks using the encoded representations $\phi^*(x)$, modeling only the parts of the environment that the agent can actually control (the latent state $s \in \mathcal{S}$) while ignoring the potentially-complex dynamics of time-correlated noise. 

Specifically, the aim of \textit{efficient} representation learning in this setting is to learn $\phi^*$ and $T$, using a number of environment steps of exploration that is dependent only on $|\mathcal{S}|$ and the size of the function class $\mathcal{F}$ that the encoder $\phi^*$ belongs to, and is \textit{not} dependent on the size of $\mathcal{X}$ or $\mathcal{E}$. This allows $\mathcal{X}$ and $\mathcal{E}$ to be very large or potentially even infinite, but still allows for representation learning to be tractable. \cite{efroni2022provably} proposes an algorithm, PPE, with this property. However, PPE only works in a finite-horizon setting, where the agent interacts with the environment in episodes of fixed length $H$. After each episode, the controllable state (almost) always resets to a deterministic start state $s_0 \in \mathcal{S}$ at the beginning of each episode. In this work, we instead consider the \textit{single-trajectory, no-reset setting}, where the agent interacts with the environment in a single episode of unbounded length, with no ability to reset the state. This better models real-world cases, where, for example, expensive human intervention would be required to ``reset'' the environment that a robot trains in: we would rather not require this intervention. This no-reset Ex-BMDP setting was previously considered by \cite{lamb2023guaranteed} and \cite{levine2024multistep}; however, the algorithms presented in those works do not have sample-complexity guarantees.

By contrast, the algorithm presented in this work is guaranteed to learn $\phi^*$ and $T$ using samples polynomial in $|\mathcal{S}|$ and $\log|\mathcal{F}|$, with no dependence on $|\mathcal{E}|$ and $|\mathcal{X}|$. We only require that the \textit{mixing time} of the exogenous noise is bounded. In other words, the requirement is that $t_{\text{mix}}$, the mixing time of the Markov chain on $\mathcal{E}$ induced by $\mathcal{T}_e$, is at most some known quantity $\hat{t}_{\text{mix}}$. Note that we do \textit{not} require that the \textit{endogenous} state $s$ mixes quickly under any particular policy (although we do require -- as do \cite{lamb2023guaranteed} and \cite{levine2024multistep} -- that all states in $\mathcal{S}$ are \textit{eventually} reachable from one another). In this setting, we derive an algorithm with the following asymptotic sample complexity (where $\mathcal{O}^*(f(x)) := \mathcal{O}(f(x) \log(f(x)))$):
\begin{equation}
      \mathcal{O}^*\Big( N D |\mathcal{S}|^2  |\mathcal{A}| \cdot\log \frac{|\mathcal{F}|}{\delta} 
      + |\mathcal{S}| |\mathcal{A}|  \hat{t}_{\text{mix}}  \cdot\log \frac{N|\mathcal{F}|}{\delta} 
      +\frac{|\mathcal{S}|^2  D}{\epsilon} \cdot 
    \log \frac{|\mathcal{F}|}{\delta} +\frac{|\mathcal{S}|\hat{t}_{\text{mix}}}{\epsilon} \cdot 
    \log \frac{|\mathcal{F}|}{\delta}\Big), 
\end{equation}
where $N$ is a predetermined upper-bound on $|\mathcal{S}|$, $\delta$ is the failure rate of the algorithm, $D$ is the maximum distance between any two latent states in $\mathcal{S}$ (at most $|\mathcal{S}|$), and $\epsilon$ is the minimum accuracy of the output learned encoder $\phi$ on any latent state class $s\in \mathcal{S}$. Note that this expression is at worst polynomial in $|\mathcal{S}|$, and linear in $\hat{t}_{\text{mix}}$ and $\log|\mathcal{F}|$.

Our algorithm proceeds iteratively, at each iteration taking a certain sequence of actions repeatedly in a loop. Because the latent state dynamics $T$ are deterministic, this process is guaranteed to (after some transient period) enter a cycle of latent states, of bounded length. Because the latent states in this cycle are repeatedly re-visited, the algorithm is then able to predictably collect many samples of the same latent state, without the need to ``re-set'' the environment. Furthermore, because this looping can be continued indefinitely, the algorithm can ``wait out'' the mixing time of the exogenous dynamics, in order to collect near-i.i.d. samples of each latent state. We call our algorithm \textbf{S}ingle-\textbf{T}rajectory \textbf{E}xploration for \textbf{E}x-BMDPs via \textbf{L}ooping, or \textbf{STEEL}.
In summary, we: (1) introduce \textbf{STEEL}, the first provably sample-efficient algorithm for learning Ex-BMDPs in a general function-approximation setting from a single trajectory, (2) prove the correctness and sample complexity of \textbf{STEEL}, and (3) empirically test \textbf{STEEL} on two toy problems to demonstrate its efficacy.

\section{Related Works}
\subsection{Representation Learning for Ex-BMDP and Exo-MDPs}
The Ex-BMDP model was originally introduced by \cite{efroni2022provably}, who propose the \textbf{PPE} algorithm to learn the endogenous state encoder $\phi(\cdot)$ and latent transition dynamics $T$ of an Ex-BMDP. PPE has explicit sample-complexity guarantees that are polynomial in $|\mathcal{S}|$ and $\log |\mathcal{F}|$: crucially, the sample complexity does not depend explicitly on $|\mathcal{E}|$ or $|\mathcal{X}|$. However, unlike the method proposed in this work, PPE is restricted to the \textit{episodic, finite horizon setting} with (nearly) \textit{deterministic resets}. After each episode, the endogenous state is (nearly) always reset to the \textit{same} starting latent state $s_0 \in \mathcal{S}$, and the exogenous state $e_0 \in \mathcal{E}$ is i.i.d. resampled from a fixed starting distribution. Similarly to this work, PPE assumes that the latent transition dynamics $T$ are (close to) deterministic. Because both $s_0$ and $T$ are nearly deterministic, PPE can collect i.i.d. samples of observations $x$ associated with any latent state $s\in \mathcal{S}$ simply by executing the same sequence of actions starting from $s_0$ after each reset. By contrast, in our setting, we cannot reset the Ex-BMDP state, so it is more challenging to collect samples of a given latent state $s$.

Other works \textit{have} considered the Ex-BMDP setting without latent state resets. \cite{lamb2023guaranteed} and \cite{levine2024multistep} consider a setting similar to ours, where the agent interacts with the environment in a single trajectory. However, these methods do not provide sample-complexity guarantees, and instead are only guaranteed to converge to the correct encoder in the limit of infinite samples.

\cite{efroni2022sample} considers a related ``Exo-MDP'' setting, and proposes the \textbf{ExoRL} algorithm. In this setting, while the environment is episodic, the latent state $s_0$ resets to a starting value sampled randomly from a fixed \textit{distribution} after each episode. Additionally, the latent transition dynamics may be non-deterministic. However, unlike our work, \cite{efroni2022sample} does \textbf{not} consider the general function-approximation setting for state encoders. Instead, the observation $x$ is explicitly factorized into $d$ factors, and the controllable state $s$ consists of some unknown subset of $k$ of these factors: the representation learning problem is reduced to identifying which $k$ of the $d$ factors are action-dependent. ExoRL guarantees a sample-complexity polynomial in $2^k = |\mathcal{S}|$ and $\log(d)$.\footnote{Because ExoRL allows for nondeterministic starting latent states, one might be able to adapt ExoRL to the infinite-horizon no-reset setting by considering the single episode as a chain of ``pseudo-episodes'' with nondeterministic start state (c.f., \cite{xu2024posterior}). However, one would have to ensure that $s$ mixes sufficiently between episodes, which may be challenging given that $s$ is not directly observed and has unknown, controllable dynamics. Even still, the resulting algorithm would not apply to our general~function-approximation~setting.}

Recently, \cite{mhammedi2024the} have proposed an algorithm for provably sample-efficient policy optimization in the episodic Ex-BMDP setting with rewards, that can handle nondeterministic latent dynamics. However, the algorithm requires \textit{simulator access} to the environment: this means that the agent is able to reset the environment to \textit{any observation $x\in \mathcal{X}$} that has been previously observed. This requirement is considerably stronger than even the requirement of deterministic resets to a single latent state found in \cite{efroni2022provably}.

In addition to our main claim that our proposed algorithm is the first provably sample-efficient algorithm for representation learning in the single-trajectory Ex-BMDP setting, another property of our method is that we do not make a ``block'' assumption on the \textit{exogenous} state $e$. For a fixed $s \in \mathcal{S}$, in our setting, the same observation $x\in \mathcal{X}$ may be emitted by multiple distinct exogenous states $e,e' \in \mathcal{E}$. Prior works \citep{efroni2022provably,efroni2022sample,lamb2023guaranteed,levine2024multistep} have stated assumptions that require that $e$ may be uniquely inferred from $x$.\footnote{Although it is not immediately clear why this restriction is necessary for the proposed algorithms.} Removing this restriction allows one to model a greater range of phenomena. For example, suppose an agent can turn on or turn off a ``noisy TV'': i.e., the agent can control whether or not some source of temporally-correlated noise is observable. This is allowed in our version of the Ex-BMDP formulation, but is not allowed if $e$ must be fully inferable from $x$. One prior work, \cite{wu2024generalizing}, also (implicitly) removes the block restriction on the exogenous state $e$. That work extends Ex-BMDP representation learning to the partially-observed state setting, with the assumption that the observation history within some known window is sufficient to infer the latent state $s$. However, \cite{wu2024generalizing} does not provide any sample-complexity guarantees. \cite{pmlr-v162-wang22c} and \cite{kooi2023interpretable} consider similar settings with \textit{continuous} controllable latent state. However, the proposed methods require explicitly modeling the exogenous noise state $e$, and there are no sample complexity guarantees. \cite{trimponias2023reinforcement} considers the sample-efficiency of reward-based reinforcement learning in Ex-BMDPs assuming known endogenous and exogenous state encoders; however, it does not address the sample complexity of the representation learning problem. 
\defcitealias{efroni2022provably}{Efroni'22b}
\defcitealias{efroni2022sample}{Efroni'22a}
\defcitealias{lamb2023guaranteed}{Lamb'23}
\defcitealias{levine2024multistep}{Levine'24}
\begin{table}[t]
    \centering
    \scriptsize
    \begin{tabular}{|c|c|c|c|c|c|c|}
    \hline
       &No-Reset&Sample-Complexity&Function &Nondeterministic& Partially-Observed&Nondeterministic\\
        &Setting& Guarantees& Approximation&Reset State&Exogenous State&Latent Transitions\\
    \hline
    \textbf{STEEL}& \color{Green}{\cmark}&\color{Green}{\cmark}&\color{Green}{\cmark}&\color{Green}{\cmark}&\color{Green}{\cmark}&\color{Red}{\xmark}\\
        \hline
\citepalias{efroni2022provably}& \color{Red}{\xmark}&\color{Green}{\cmark}&\color{Green}{\cmark}&\color{Red}{\xmark}&\textit{\textbf{?}}&\color{Red}{\xmark}\\
    \hline
\citepalias{lamb2023guaranteed} &\color{Green}{\cmark}& \color{Red}{\xmark}&\color{Green}{\cmark}&\color{Green}{\cmark}&\textit{\textbf{?}}&\color{Red}{\xmark}\\
    \hline
 \citepalias{levine2024multistep} &\color{Green}{\cmark}& \color{Red}{\xmark}&\color{Green}{\cmark}&\color{Green}{\cmark}&\textit{\textbf{?}}&\color{Red}{\xmark}\\
     \hline
\citepalias{efroni2022sample}& \color{Red}{\xmark}&\color{Green}{\cmark}&\color{Red}{\xmark}&\color{Green}{\cmark}&\color{Red}{\xmark}&\color{Green}{\cmark}\\  
    \hline
    \end{tabular}
    \caption{Comparison to Prior Works for learning Ex-BMDP Latent Dynamics}
\label{tab:prior_works_comparison}
\end{table}
\subsection{Representation Learning for Block MDPs and Low-Rank MDPs}
The Ex-BMDP framework can be considered as a generalization of the Block MDP framework \citep{NEURIPS2018_5f0f5e5f,du2019provably}. Like the Ex-BMDP setting, the Block MDP setting models environments where the observed state space $\mathcal{X}$ of the overall MDP is much larger than an action-dependent latent state space $\mathcal{S}$. Some works in the Block MDP framework \citep{mhammedi2023representation, pmlr-v119-misra20a} also allow for nondeterministic latent state transitions: that is $s_{t+1} \sim \mathcal{T}_s(s_t, a_t)$.
However, unlike the Ex-BMDP setting, there is no exogenous latent state $e \in \mathcal{E}$ or exogenous dynamics $\mathcal{T}_e$: the observation is simply sampled as $x_t \sim \mathcal{Q}(s_t)$. In other words, the Block MDP setting does not allow for \textit{time correlated noise outside of the modelled latent state $s$}. 
Therefore, even when stochastic latent-state transition are allowed, any time-correlated noise must be captured in $\mathcal{S}$, and so impacts the sample complexity (which is typically polynomial in $|\mathcal{S}|$).

The Low-Rank MDP framework can also be considered as an extension the Block MDP framework, but is an \textit{orthogonal} extension to the Ex-BMDP framework. In Low Rank MDPs, there exist functions $\phi: \mathcal{X} \times \mathcal{A} \rightarrow \mathbb{R}^d$ and $\mu: \mathcal{X} \rightarrow \mathbb{R}^d$, such that $\Pr(x_{t+1} = x'| x_t = x, a_t = a) = \phi(x,a)^T\mu(x')$. The sample complexity depends only polynomially on $d$ and logarithmically on the size of the function classes for the state encoders $\phi$ and $\mu$; it should not depend explicitly on $|\mathcal{X}|$. Works under this framework include \cite{agarwal2020flambe,uehara2022representation} and \cite{cheng2023improved}. Other works in the Low Rank MDP framework use a reward signal and only explicitly learn part of the representation (the encoder $\phi$), including \cite{mhammedi2023representation} and \cite{pmlr-v70-jiang17c}; see \cite{mhammedi2023representation} for a recent, thorough comparison of these works. Note that while BMDPs can be formulated as low-rank MDPs with $d = |\mathcal{S}|$, this does not hold for Ex-BMDPs: the rank of the transition probabilities on $\mathcal{X}$ depends on $|\mathcal{E}|$ -- as noted by \cite{efroni2022provably}.

\section{Notation and Assumptions}
\begin{itemize}
    \item The Ex-BMDP, $M$, has observation space $\mathcal{X}$, with discrete endogenous states $\mathcal{S}$ that have deterministic, controllable dynamics, and possibly continuous exogenous states $\mathcal{E}$ with nondeterministic dynamics that do not depend on actions. Concretely, we have that $s_{t+1} = T(s_t, a_t)$, where $T$ is a deterministic function, and $e_{t+1} \sim \mathcal{T}_e(e_t)$. Let $x_t \sim \mathcal{Q}(s_t,e_t)$, for $x_t \in \mathcal{X}$, $s_t \in \mathcal{S}$, $e_t \in \mathcal{E}$, with the \textit{block assumption} on $\mathcal{S}$. That is, a given $x \in \mathcal{X}$ can be emitted only by one particular $s \in \mathcal{S}$, which we define as $\phi^*(x_t) = s_t$. We assume that $M$ is accessed in one continuous trajectory. The initial endogenous state is an arbitrary $s_{\text{init}} \in \mathcal{S}$, and the initial exogenous state $e_{\text{init}} \sim \pi_{\mathcal{E}}^{\text{init}}$, where $\pi_{\mathcal{E}}^{\text{init}} \in \mathcal{P}(\mathcal{E})$.
    \item The exogenous dynamics on $\mathcal{E}$ are \textit{irreducible and aperiodic}, with stationary distribution $\pi_\mathcal{E}$. There is a known upper bound $\hat{t}_\text{mix}$ on the \textit{mixing time} $t_\text{mix}$, where (as defined in \hbox{\cite{levin2017markov}} and elsewhere) $t_\text{mix} := t_\text{mix}(1/4)$, where $t_\text{mix}(\epsilon)$ is defined such that:
    \begin{equation}
        \forall e \in \mathcal{E}, \,\,\, \| \Pr(e_{t+t_\text{mix}(\epsilon)} = e' | e_t = e )-  \pi_\mathcal{E}(e') \|_\text{TV} \leq \epsilon.
    \end{equation}
    This assumption bounds how ``temporally correlated'' the noise in the Ex-BMDP is: it ensures that the exogenous noise state $e_t$ at time $t$ is relatively unlikely to affect $e_{t + \hat{t}_{\text{mix}}}$.
    \item We have a known upper bound on the number of endogenous latent states, $N \geq |\mathcal{S}|$. Additionally, we assume that all endogenous latent states can be reached from one another in at most $D$ steps, for some finite $D$ (note that we \textit{do not} assume that all pairs of states in $\mathcal{S}$ can be reached from one another in \textit{exactly} $D$ steps). 
    We assume that there is a known upper bound on this diameter: $\hat{D} \geq D$. Trivially, if all endogenous latent states are reachable from one another then $D \leq N -1$, so if a tighter bound is not available then we can use $\hat{D}:=N-1$. (In fact, it is not very important to use a tight bound here: $\hat{D}$ does not appear in the \textit{asymptotic} sample complexity.)
    \item There is an encoder hypothesis class $\mathcal{F}: \mathcal{X} \rightarrow \{0,1\}$, with realizablity for one-vs-rest classification of endogenous states. That is,
    \begin{equation}
        \forall s \in \mathcal{S},\,\exists f \in \mathcal{F}: \forall x \in \mathcal{X},\, f(x) = \mathbbm{1}_{\phi^*(x) = s}. \label{eq:function_class_correct}
    \end{equation}
    In other words, for every latent state $s\in \mathcal{S}$, there is some function $f \in \mathcal{F}$ such that $f(x) = 1$ if and only if $\phi^*(x) =s$.
    \item The algorithm has access to a training oracle for $\mathcal{F}$, which, given two finite multi-sets $D_0$ and $D_1$ each with elements from $\mathcal{X}$, returns a classifier $f \in \mathcal{F}$. The only requirement that we have for this oracle is that, if there exist any classifiers  $\mathcal{F}^* \subset \mathcal{F}$, such that, for $f^* \in \mathcal{F}^* $, $\forall x \in D_0$, $f^*(x) = 0$ and  $\forall x \in D_1$, $f^*(x) = 1$, then the oracle will return some member of $ \mathcal{F}^* $. Note that an optimizer that minimizes the 0-1 loss on  $D_0 \cup D_1$ will satisfy this requirement. However, it is not strictly necessary to minimize the 0-1 loss in particular.
    \item General notation: Let $\mathcal{M}(\mathcal{A})$ be the set of all multisets of the set $\mathcal{A}$. Let $\bot$ represent an undefined value. For lists $x$, $y$, let $x \cdot y$ represent their concatenation: that is, $[a, b] \cdot [c, d] = [a,b,c,d]$. For multisets $A$ and $B$, let $A\uplus B$ be their union, where their multiplicities are additive. Let $\%$ be the modulo operator, so that $a \% b \equiv a \pmod{b}$ and $ 0 \leq a \% b \leq  b-1$.
\end{itemize}

\section{Algorithm} \label{sec:alg}
The STEEL algorithm is presented in full in Appendix \ref{sec:full_algorithm}; in this section, we give a high-level overview of the algorithm, and present bounds on its sample-complexity. We state the sample-complexity and correctness of STEEL in the following theorem, which is proved in Appendix \ref{sec:proofs}.
\begin{restatable}{theorem}{steelthm}
    For an Ex-BMDP $M =\langle \mathcal{X},\mathcal{A},\mathcal{S},\mathcal{E},\mathcal{Q},T,\mathcal{T}_e, \pi_\mathcal{E}^{\text{init}} \rangle$ starting at an arbitrary endogenous latent state $s_{\text{init}} \in \mathcal{S}$, with $|\mathcal{S}| \leq N$, where the exogenous Markov chain $\mathcal{T}_e$ has mixing time at most $\hat{t}_{\text{mix}}$, and where all states in $\mathcal{S}$ are reachable from one another in at most $\hat{D}$ steps; and corresponding encoder function class $\mathcal{F}$ such that Equation \ref{eq:function_class_correct} holds, the algorithm STEEL$(M,\mathcal{F}, N, \hat{D}, \hat{t}_{\text{mix}}, \delta, \epsilon)$ will output a learned endogenous state space $\mathcal{S}'$, transition model  $T'$, and encoder $\phi'$, such that, with probability at least $1-\delta$, 
    \begin{itemize}
        \item $|\mathcal{S}'| =|\mathcal{S}|$, and under some bijective function $\sigma: \mathcal{S} \rightarrow \mathcal{S}'$, it holds that 
        \begin{equation}
            \forall s \in \mathcal{S}, a \in \mathcal{A}: \sigma(T(s,a)) = T'(\sigma(s), a),\text{  and,}
        \end{equation}
        \item Under the same bijection $\sigma$,
        \begin{equation}
            \forall s \in \mathcal{S}, \Pr_{\substack{x \sim \mathcal{Q}(s,e),\\ e \sim \pi_\mathcal{E}}}(\phi'(x)  = \sigma (\phi^*(x)) )\geq 1- \epsilon,
        \end{equation}
        where $\pi_{\mathcal{E}}$ is the stationary distribution of $\mathcal{T}_e$. 
    \end{itemize}
    Furthermore, the number of steps that STEEL executes on $M$ scales as:
\begin{equation*}
      \mathcal{O}^*\Big( N D |\mathcal{S}|^2  |\mathcal{A}| \cdot\log \frac{|\mathcal{F}|}{\delta} 
      + |\mathcal{S}| |\mathcal{A}|  \hat{t}_{\text{mix}}  \cdot\log \frac{N|\mathcal{F}|}{\delta} 
      +\frac{|\mathcal{S}|^2  D}{\epsilon} \cdot 
    \log \frac{|\mathcal{F}|}{\delta} +\frac{|\mathcal{S}|\hat{t}_{\text{mix}}}{\epsilon} \cdot 
    \log \frac{|\mathcal{F}|}{\delta}\Big), \label{eq:sample_complexity}
\end{equation*}
where  $\mathcal{O}^*(f(x)) :=  \mathcal{O}(f(x) \log(f(x)))$. \label{thm:steel}
\end{restatable}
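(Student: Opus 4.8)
The plan is to factor the argument into a statistical primitive, a mixing-to-i.i.d.\ reduction, a deterministic cycle-structure lemma, and an inductive model-building invariant, and then to glue them together with a single high-probability event on which every subroutine call behaves as intended.

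The statistical core is a \emph{same-state test}: given two positions along a repeated action loop, decide whether they carry the same endogenous latent state. I would implement it with a train/test split --- collect two batches of observations at each position, invoke the oracle to find an $f \in \mathcal{F}$ separating one batch from the other, and measure held-out accuracy. By the block assumption on $\mathcal{S}$ and realizability (Equation \ref{eq:function_class_correct}), when the two positions have distinct latent states their observation sets are disjoint and a perfect separator exists in $\mathcal{F}$, so held-out accuracy is near $1$; when they share a latent state, both batches are (nearly) draws from the same $\mathcal{X}$-distribution and no classifier can beat chance by more than the residual distributional gap. A uniform-convergence bound over $\mathcal{F}$ using $O(\log(|\mathcal{F}|/\delta))$ samples per batch concentrates the held-out accuracy, so a fixed threshold separates the two cases; the $\log|\mathcal{F}|$ dependence enters here.

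Next I would convert the exogenous Markov chain into effectively i.i.d.\ stationary draws, which I expect to be the main obstacle. At a fixed loop position the endogenous coordinate is pinned but the exogenous coordinate follows $\mathcal{T}_e$ and is never reset, so the samples are genuinely dependent. Using the mixing-time hypothesis, samples taken at the same position but spaced at least $\hat{t}_{\text{mix}}$ apart in real time are, up to a geometrically-controlled total-variation error, independent with exogenous marginal within $1/4$ of $\pi_{\mathcal{E}}$; iterating the mixing bound drives this gap below any target accuracy. A coupling/telescoping argument over the sampling times then bounds the deviation of a whole batch from an i.i.d.-from-$\pi_{\mathcal{E}}$ batch, which is exactly what the same-state test needs and what feeds the final per-state $\epsilon$ guarantee. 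Enforcing the spacing is what produces the $\hat{t}_{\text{mix}}$ factors: each near-independent sample costs $O(\max(\text{loop length}, \hat{t}_{\text{mix}}))$ steps --- the chain is carried forward by a loop of length at most $O(|\mathcal{S}|\hat{D})$, with extra idle loops inserted only when the loop is shorter than $\hat{t}_{\text{mix}}$ --- so the cost splits additively into navigation terms and $\hat{t}_{\text{mix}}$ terms.

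For correctness I would first record that, since $T$ is deterministic and $|\mathcal{S}| \le N$, repeating any fixed action sequence induces a self-map on $\mathcal{S}$ whose orbit enters a cycle of length at most $|\mathcal{S}|$ after a transient of at most $|\mathcal{S}|$; CycleFind locates the period using the same-state primitive, guaranteeing a steady supply of repeatedly-revisited latent states. The main loop then maintains the invariant that $(\mathcal{S}', T', \phi')$ is isomorphic, through a partial injection $\sigma$, to an induced subsystem of the true Ex-BMDP. Each iteration navigates in at most $\hat{D}$ steps (realized via looping) to a frontier state, applies each action, and uses the same-state test to match the outcome against known states --- either confirming a transition or adding a new state --- so on the good event every iteration makes strict progress and the loop halts once all $|\mathcal{S}||\mathcal{A}|$ transitions are explained, at which point $\sigma$ is the claimed bijection. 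Finally I would count: the structure phase runs $O(|\mathcal{S}||\mathcal{A}|)$ tests against up to $|\mathcal{S}|$ candidates, each with $O(D)$ navigation realized through loops over an $O(|\mathcal{S}|)$-length cycle and $O(\log(|\mathcal{F}|/\delta))$ samples, yielding the first two terms (the extra factors of $N$, and the $N$ inside the second logarithm, coming from the looping overhead and the union bound over the at-most-$N$ cycle-finding repetitions); the encoder-refinement phase collects $O(\epsilon^{-1}\log(|\mathcal{F}|/\delta))$ near-stationary samples per state at cost $O(\max(|\mathcal{S}|D, \hat{t}_{\text{mix}}))$ each, yielding the two $\epsilon^{-1}$ terms. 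A union bound over all invoked tests, with $\delta$ suitably subdivided, gives overall success probability $1-\delta$, and the $\log$-of-$\log$ spacing overhead is absorbed into the $\mathcal{O}^*$ notation.
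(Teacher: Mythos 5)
Your high-level architecture (loop-based revisiting of states, $\hat{t}_{\text{mix}}$-spaced sampling, oracle-based same-state tests, a structure phase followed by an encoder phase) matches the paper's, but your structure-learning loop has a genuine gap: it is precisely the naive strategy that the paper constructs an explicit counterexample against (Appendix~\ref{sec:steel_phase_1}). You propose to ``navigate to a frontier state, apply each action, and use the same-state test to match the outcome against known states,'' with strict progress per iteration. In the single-trajectory setting this step is not implementable as stated: after crossing an unknown transition you hold exactly \emph{one} observation of an unknown state, while the same-state test needs a batch of many temporally-separated observations of that state. Without resets, the only way to obtain such a batch is to revisit the state by repeating a fixed action sequence and letting the deterministic dynamics settle into a cycle --- but the cycle induced by repeating ``navigate to frontier, probe'' can lie entirely inside the already-known subgraph, with the unknown transition traversed only transiently and never again. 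In the paper's six-state example, the current state is $\hat{3}$, the probe sequence $[R,R]$ does cross the unknown edges, yet the orbit settles into the already-known cycle $\hat{1}\leftrightarrow\hat{2}$, so CycleFind returns only known states and transitions and no progress is made. The missing idea is the paper's construction of $\hat{a}$: a sequence of length at most $(D+1)|\mathcal{S}'|$, built by iterating Dijkstra's algorithm over a shrinking set $\mathcal{B}$ of possible start states, with the property that executing $\hat{a}$ from \emph{every} currently-known state crosses some unknown transition. Only this property guarantees that any cycle reachable by repeating $\hat{a}$ contains at least one new edge, which is what makes each CycleFind call make progress, bounds Phase~1 by $|\mathcal{S}||\mathcal{A}|$ calls, and produces the $N D |\mathcal{S}|^2 |\mathcal{A}|$ leading term. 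Your invariant-based argument silently assumes away exactly this difficulty.

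Two secondary deviations are worth noting. First, your same-state test thresholds \emph{held-out accuracy}, which requires two-sided concentration of empirical accuracy for dependent (Markov-chain) samples --- a step you assert but do not supply --- and your coupling reduction to i.i.d.\ draws is vacuous at spacing $\hat{t}_{\text{mix}}$ (a per-sample total-variation gap of $1/4$ summed over a batch exceeds $1$); it forces spacing with extra logarithmic factors. The paper instead tests only whether a \emph{perfectly consistent} classifier exists, for which a one-sided bound suffices: by Lemma~\ref{lemma:near_iid}, each same-state pair contributes a factor $(p+1/4)\bigl(1-(p-1/4)\bigr)\le 9/16$, giving failure probability $|\mathcal{F}|\,(9/16)^{n}$ at spacing exactly $\hat{t}_{\text{mix}}$, and the final encoder guarantee uses the sharper Markov-chain bound of Lemma~\ref{lemma:MC_gen} rather than a coupling to i.i.d.\ samples. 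Second, both lemmas require that the times at which samples are collected, and their assignment to datasets, be chosen deterministically and independently of the exogenous randomness; the paper maintains this invariant explicitly through every phase (it is a hypothesis of Proposition~\ref{prop:cyclefind_correctness}), whereas your proposal never addresses it, and without it the mixing-time bounds cannot be applied to adaptively assembled datasets.
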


 We now present a high-level overview of STEEL. The algorithm proceeds in three phases. In the first phase, the algorithm learns the transition dynamics; in the second phase, it collects additional samples of observations of each latent state in $\mathcal{S}$; in the final phase, the encoder $\phi'$ is learned.

\textbf{STEEL Phase 1: Learning latent dynamics.} In the first phase, STEEL constructs a learned latent state space $\mathcal{S}'$ and learned transition dynamics $T'$ by iteratively adding \textit{cycles} to the known transition graph. At each iteration, a sequence of actions $\hat{a}$ is chosen such that, starting \textit{anywhere} in the known $T'$, taking the actions in $\hat{a}$ is guaranteed to traverse a transition not already in $T'$. (We explain the process of constructing $\hat{a}$ below.)

STEEL then takes the actions in $\hat{a}$ repeatedly,~collecting a sequence of observations $x_{CF}$. Because the transitions in $T$ are deterministic, this sequence of transitions must eventually (after at most $|\mathcal{S}||\hat{a}|$ steps) enter a \textit{cycle} of latent states, of length $n_{\text{cyc}}|\hat{a}|$, for some $n_{\text{cyc}} \leq |\mathcal{S}|$. Because $\hat{a}$ was chosen to always escape the known transitions in $T'$, this cycle cannot be contained in $T'$, so adding the states and transitions of the new cycle to $\mathcal{S}'$ and $T'$ is guaranteed to expand the known dynamics graph by at least one edge: this process will discover the full transition dynamics after at most $|\mathcal{S}||\mathcal{A}|$ iterations. See Figure \ref{fig:steel_phase_1} for an example of how STEEL constructs $\mathcal{S}'$ and $T'$ by adding cycles to the dynamics. Throughout this process, STEEL also collects a dataset $\mathcal{D}(s) \in \mathcal{M}(\mathcal{X})$ for each newly-discovered latent state $s \in \mathcal{S'}$, so that each observation in $\mathcal{D}(s)$ has latent state $s$. The observations within each dataset $\mathcal{D}(s)$ are collected at least $\hat{t}_{\text{mix}}$ steps apart from one another, so they are near-i.i.d.

\textit{\textbf{Constructing $\hat{a}$}}. To ensure that $\hat{a}$ always escapes the known transition graph given by $T'$, at each iteration, STEEL uses a recurrent procedure to construct $\hat{a}$. At the beginning of this procedure, $\hat{a}$ is initialized as empty, and a set of latent states $\mathcal{B}$ is initialized with all of the learned states in $\mathcal{S}'$. At each step of the procedure, $\mathcal{B}$ represents the set of known states  $s \in \mathcal{S}'$ that can be reached by starting at any arbitrary state  $s' \in \mathcal{S}'$ and taking the action sequence given by the partially-constructed action list $\hat{a}$, following the known dynamics in $T'$. At each step, STEEL chooses a state $s \in \mathcal{B}$, and plans the shortest path from $s$ to any unknown transition in $T'$. The corresponding sequence of actions, $\hat{a}'$, is then appended to $\hat{a}$, and $ \mathcal{B}$ is updated by replacing each state $b \in \mathcal{B}$ with the state that results from starting at $b$ and taking the sequence of actions $\hat{a}'$, according to the learned partial transition function $T'$. If this path reaches an unknown transition in $T'$, then $b$ is removed from $\mathcal{B}$ and not replaced. By construction, we know that the state $s$ will be removed, so $\mathcal{B}$ will shrink by at least one state at each step of the process. By the end, $\mathcal{B}$ is empty, and we are guaranteed that taking action sequence $\hat{a}$ from any state in $\mathcal{S}'$ will lead to an unknown transition in $T'$, as desired. Note that at each step, the shortest distance to an unknown transition has length at most $D+1$, and the procedure continues for at most $|\mathcal{S}|$ steps, so $|\hat{a}| \leq |\mathcal{S}'|\cdot (D+1)$.

\textit{\textbf{Identifying latent states}}.At each iteration, to identify the distinct latent states in the cycle, STEEL uses a subroutine called CycleFind. CycleFind additionally collects the datasets $\mathcal{D}(s)$ for each newly-discovered latent state. CycleFind itself has two phases:
\begin{figure}
    \centering
    \includegraphics[width=\linewidth]{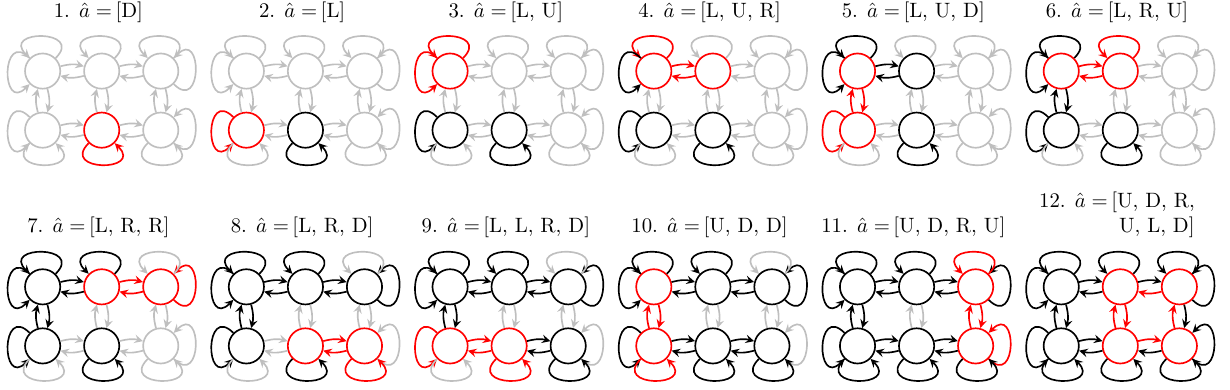}
    \caption{STEEL discovers the latent dynamics $\mathcal{S}$ and $T$ by iteratively adding cycles to the learned dynamics graph. In this simple example, the initially-unknown ``true'' dynamics consist of 6 states arranged in a grid, where the agent can move (U)p, (D)own, (L)eft, or (R)ight. STEEL takes 12 iterations to discover the full dynamics: each pane corresponds to an iteration, and shows the still-unknown parts of the dynamics graph in grey, the already-known parts of the dynamics graph in black, and the cycle being explored in red. States are represented as circles and transitions as arrows.   }
    \label{fig:steel_phase_1}
\end{figure}
\begin{figure}
    \centering
\includegraphics[width=\linewidth]{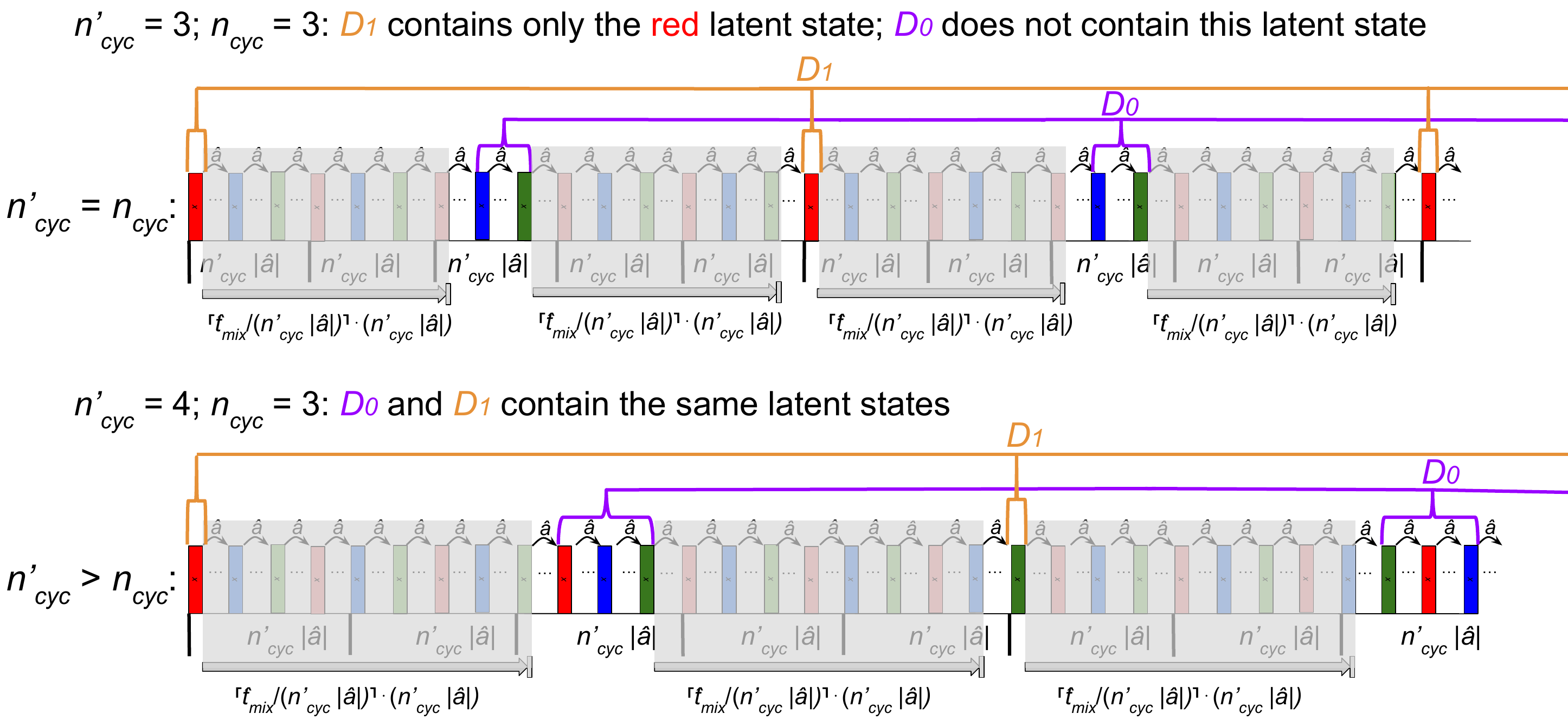}
    \caption{CycleFind determines the period of the cycle in $x_{CF}$. See Section \ref{sec:alg} under 
    \textbf{``CycleFind Phase 1.''} We show the sequence $x_{CF}$ sampled from $M$: specifically, we show every $|\hat{a}|$'th observation, where the same actions $\hat{a}$ are taken between each one. The observations' latent states are color-coded as red, blue, and green: a pattern repeats every $3|\hat{a}|$ steps, so $n_{cyc} = 3$. $D_1$ consists of the first observation in each $(n_{cyc}'|\hat{a}|)$-cycle, and $D_0$ the other observations taken between executions of $\hat{a}$. (Spans of length $\geq \hat{t}_{\text{mix}}$ are skipped to ensure certain subsets of the datasets are near-i.i.d.)}
    \label{fig:cyclefind_phase_1}
\end{figure}

\quad\textbf{CycleFind Phase 1: Finding the cycle's periodicity.} To identify the latent states in the cycle in $x_{CF}$, CycleFind first determines cycle's period, $n_{\text{cyc}}|\hat{a}|$. To find $n_{\text{cyc}}$, CycleFind tests all possible values of $n_{\text{cyc}}$ from $N$ to $1$, in decreasing order. To check whether some candidate value, $n_{\text{cyc}}'$, is in fact $n_{\text{cyc}}$, CycleFind constructs two datasets, $D_0$ and $D_1$ from $x_{CF} = [x_1, ...]$. These datasets are constructed so that if $n_{\text{cyc}}' = n_{\text{cyc}}$, then $D_1$ contains observations of only one controllable latent state $s$, and $D_0$ contains no observations of $s$. Therefore, if one attempts to train a classifier $f$ to perfectly distinguish all observations in $D_0$ from those in $D_1$, then such a classifier is unlikely to exist in $\mathcal{F}$ if $n_{\text{cyc}}' > n_{\text{cyc}}$, but is guaranteed to exist if  $n_{\text{cyc}}' = n_{\text{cyc}}$, by the realizability assumption. In this way, CycleFind uses the training oracle to determine $n_{\text{cyc}}$.
 
Specifically, the datasets used are (see Figure \ref{fig:cyclefind_phase_1} for illustration.):

\begin{equation}
    D_0 := \{x_{ \bar{t}_{\text{mix}} +(2 \bar{t}_{\text{mix}} +n_{\text{cyc}}' |\hat{a}|)i + j \cdot |\hat{a}| + \text{offs.}} |\,  i \in \{0,...,k-1\},\,\, j \in \{1,...,n_{\text{cyc}}'-1\}\}
\end{equation}

\begin{equation}
D_1 := \{x_{(2 \bar{t}_{\text{mix}} +n_{\text{cyc}}' \cdot |\hat{a}|)i  + \text{offs.}} |\,  i \in \{0,...,k-1\}\}
\end{equation}

where $\bar{t}_{\text{mix}} :=  \lceil \hat{t}_{\text{mix}} / (n_\text{cyc}'\cdot |\hat{a}|) \rceil \cdot  n_\text{cyc}' \cdot |\hat{a}| $ is $\hat{t}_{\text{mix}}$ rounded up to the nearest multiple of $ n_\text{cyc}' \cdot |\hat{a}|$; the number of samples needed for $D_1$ is $k$ (which depends on $\log(|F|)$, $\log(\delta)$, and other parameters); and $\text{offs.} := \max((N-1)\cdot |\hat{a}|, \hat{t}_{\text{mix}}) $ is a constant offset to ensure that the Ex-BMDP endogenous state has entered the terminal cycle induced by $\hat{a}$, and that the exogenous state has mixed. 

To see why $D_1$ and $D_0$ are perfectly distinguishable only if  $n_{\text{cyc}}' =n_{\text{cyc}}$, consider the sequence of repeated latent states that compose the cycle, $[s^{\text{cyc}}_0, ..., s^{\text{cyc}}_{n_{\text{cyc}} \cdot | \hat{a}| -1} ]$. If we only consider every $|\hat{a}|$'th state in the cycle, then the resulting sequence, $[s^{\text{cyc}}_0, s^{\text{cyc}}_{|\hat{a}|}, ..., s^{\text{cyc}}_{(n_{\text{cyc}} -1)\cdot | \hat{a}|} ]$, cannot contain any repeated states.\footnote{Otherwise, due to the deterministic dynamics and repeated application of the same actions $\hat{a}$, the endogenous dynamics would immediately enter an even shorter cycle the first time a state is repeated in $[s^{\text{cyc}}_0, s^{\text{cyc}}_{|\hat{a}|}, ..., s^{\text{cyc}}_{(n_{\text{cyc}} -1)\cdot | \hat{a}|} ]$, implying a shorter period.} Therefore,
 \begin{equation}
     \forall n, m,\,\,\,  n \equiv m  
 \pmod{n_{\text{cyc}}} \Leftrightarrow \phi^*(x_{|\hat{a}|n + \text{offs.}})  =  \phi^*(x_{|\hat{a}|m + \text{offs.}}), \label{eq:mod_eq_main_text}
 \end{equation}
 and in particular, if $n_{\text{cyc}}' = n_{\text{cyc}}$, then $ n \equiv m  
 \pmod{n_{\text{cyc}}'} \Leftrightarrow \phi^*(x_{|\hat{a}|n + \text{offs.}})  =  \phi^*(x_{|\hat{a}|m + \text{offs.}}) $. Then, through modular arithmetic, we can conclude that $D_1$ contains observations of only one latent state of the cycle, while $D_0$ contains observations of all of the other latent states.

 Meanwhile, if  $n_{\text{cyc}}' > n_{\text{cyc}}$, then for each observation in $D_1$, there is a corresponding observation in $D_0$ with the same latent state, that is nearly-identically and independently distributed (i.e, they are collected $\geq \hat{t}_{\text{mix}}$ steps apart). In particular, consider the (unknown) subset $D^{(j')}_0 \subseteq D_0$, defined as:

\begin{equation}
           D_0^{(j')}:= \{x_{ \bar{t}_{\text{mix}} +(2 \bar{t}_{\text{mix}} +n_{\text{cyc}}' |\hat{a}|)i + j' \cdot |\hat{a}| + \text{offs.}} |\,  i \in \{0,...,k-1\}\}
\end{equation} 
 where $j' :=  (- \lceil \hat{t}_{\text{mix}} / (n_\text{cyc}'\cdot |\hat{a}|) \rceil \cdot  n_\text{cyc}'-1) \% n_{cyc} +1$. From arithmetic and applying Equation \ref{eq:mod_eq_main_text}, we  see that all observations in $ D_0^{(j')}$ and all observations in $ D_1$ have the same endogenous latent state. Additionally, all observations in $ D_0^{(j')}$ and $D_1$ are collected at least $\hat{t}_{\text{mix}}$ steps apart. 
 
\quad\textbf{CycleFind Phase 2: Identifying latent states in the cycle.} Once $n_{\text{cyc}}$ is known, CycleFind can identify the latent states which re-occur every $n_{\text{cyc}}|\hat{a}|$ steps in $x_{CF}$. These latent states are not necessarily distinct from each other, and may also have been discovered already in a previous iteration of CycleFind. Therefore, CycleFind extracts from $x_{CF}$ datasets $\mathcal{D}'_i$ for each position in the cycle: $i \in \{0,..., n_{\text{cyc}}|\hat{a}| -1\}$. CycleFind 
also uses datasets $\mathcal{D}(s)$ collected in previous iterations representing the already-discovered states in $\mathcal{S}'$. CycleFind determines whether two datasets (either collected in this call to CycleFind, or collected in previous calls) represent the same latent state by attempting to learn a classifier $f \in \mathcal{F}$ that distinguishes them: if they both consist of near-i.i.d. samples of the same latent state, then it is highly unlikely that such a classifier exists.

To ensure the near-i.i.d. property ``well enough,'' we only need that samples are separated by $\hat{t}_{\text{mix}}$ steps within each individual $\mathcal{D}_i'$; and that, when trying to distinguish two datasets $\mathcal{D}_i', \mathcal{D}_j'$ which were both collected during this round of CyceFind, there are two (sufficiently large) \textit{subsets} of $\mathcal{D}_i'$ and $\mathcal{D}_j'$ respectively such that all samples in the two subsets are collected at least $\hat{t}_{\text{mix}}$ steps apart -- ensuring this second condition only doubles the number of samples we must collect. Thus, we do not need to ``wait'' $\hat{t}_{\text{mix}}$ steps between collecting each usable sample from $x_{CF}$; rather, we collect a usable sample \textit{for each latent state} once for every roughly $2\max(\hat{t}_{\text{mix}}, n_{\text{cyc}}|\hat{a}| )$ steps. This is why $\hat{t}_{\text{mix}}$ does not appear in the largest term  (in $|\mathcal{S}|$) of our asymptotic sample complexity.

If it is determined that some $\mathcal{D}_i$ represents a newly-discovered latent state, then a new state $s'$ is inserted into $\mathcal{S'}$ and $\mathcal{D}'(s')$ is initialized as $\mathcal{D}_i$. Once all states in $[s^{\text{cyc}}_0, ..., s^{\text{cyc}}_{n_{\text{cyc}} \cdot | \hat{a}| -1} ]$ have been identified, the action sequence $\hat{a}$ can be used to add them to the learned transition dynamics $T'$.

\textbf{STEEL Phase 2: Collecting additional samples to train encoder.}\footnote{In some scenarios, one might not need to learn an encoder at all. Note that the latent state $s$ of the agent is known at the last environment timestep $t$ of Phase 1 of STEEL. At this point, the full latent dynamics are already known. Thus, if the agent is ``deployed'' only once, immediately after training such that the latent state does not reset, then one could keep track of $s$ in an entirely open-loop manner while planning or learning rewards, without ever needing to use an encoder. In this case, the sample complexity terms involving $\epsilon$ disappear.} Once we have the complete latent dynamics graph, the determinism of the latent dynamics allows us to use open-loop planning to efficiently re-visit each latent state, in order to collect enough samples to learn a highly-accurate encoder. Note that we can navigate to any arbitrary latent state in $D$ steps, so we can visit every latent state in $|\mathcal{S}|D$ steps. STEEL collects datasets $\mathcal{D}(s)$ for each latent state $s$ where, within each $\mathcal{D}(s)$, the samples are collected at least $\hat{t}_{\text{mix}}$ steps apart: therefore, it can add one sample to \textit{each} dataset $\mathcal{D}(s)$ at worst roughly every $\max(|\mathcal{S}|D, \hat{t}_{\text{mix}})$ steps.

\textbf{STEEL Phase 3: Training the encoder.} Finally, STEEL trains the encoder. Specifically, for each latent state $s\in\mathcal{S}'$, it trains a binary classifier $f_s \in \mathcal{F}$ to distinguish $\mathcal{D}(s)$ from $\uplus_{s' \in \mathcal{S}' \setminus\{s\}} \mathcal{D}(s')$. To ensure that \textit{only} the correct binary classifier, $f_{\sigma(\phi^*(x))}(x)$, returns 1, we ensure that  \textit{each} $f_s$ has an accuracy of $1- \epsilon/|\mathcal{S}|$ on \textit{each} latent state. We guarantee the accuracy of each classifier on each latent state separately and apply a union bound: note that because we use a union bound here, we do not need the samples in different datasets $\mathcal{D}(s)$, $\mathcal{D}(s')$  to be independent, which is why we are able to collect samples more frequently that every $\hat{t}_{\text{mix}}$ steps. Finally, we define $\phi'(x) := \arg\max_s f_s(x) $.
\section{Simulation Experiments}
\begin{figure}
    \centering
    \includegraphics[width=0.975\linewidth]{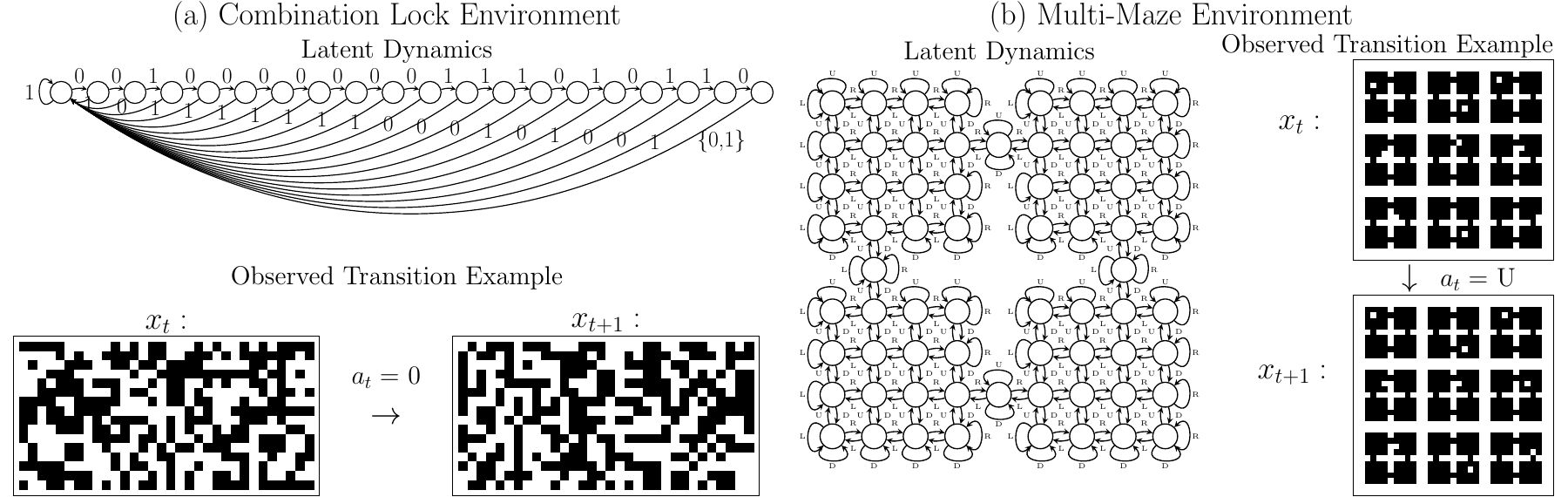}
    \caption{Visualisations of the simulation experiment environments. For both environments, we show the ground-truth latent dynamics $T$ (in the case of the combination lock, we show an arbitrary instance of $T$, for some $[a^*_0, ... a^*_{K-1}]$), and an example transition in the observed space $\mathcal{X}$.  }
    \label{fig:env_examples}
\end{figure}
\begin{table}[b]
    \centering
    \begin{tabular}{|c|c|c|c|c|}
    \hline
      &   Combo. Lock & Combo. Lock & Combo. Lock& \\
      & $(K=20)$ &   $(K=30)$&$(K=40)$ &  Multi-Maze  \\
      \hline
   Fixed Env. Accuracy&20/20&20/20&20/20&20/20\\
\hline
Fixed Env. Steps&1886582$\pm$0&4286241$\pm$0&7914856$\pm$0&41003875$\pm$0\\
\hline
Variable Env. Accuracy&20/20&20/20&20/20&20/20\\
\hline
Variable Env. Steps&2.00$\cdot 10^6$&4.78$\cdot 10^6$&9.59$\cdot 10^6$&4.13$\cdot 10^7$\\
&$\pm$1.28$\cdot 10^5$&$\pm$4.36$\cdot 10^5$&$\pm$1.13$\cdot 10^6$&$\pm$1.11$\cdot 10^6$\\
\hline
    \end{tabular}
    \caption{Success rate and number of steps taken for STEEL on both simulation environments. For all experiments, we set $\delta = \epsilon = .05$. For the combination lock experiments, we set $L = 512$, and use the (intentionally loose) upper bounds $N = \hat{D} = K + 10 \,\,(= |\mathcal{S}| + 10)$ and $\hat{t}_{\text{mix}} = 40$. For the multi-maze environment, we use $N = \hat{D} = 80\,\, (> |\mathcal{S}| = 68)$, and $\hat{t}_{\text{mix}} = 300$. See Appendix \ref{sec:exp_facts} for how we chose the (loose) bounds  $\hat{t}_{\text{mix}} \geq t_{\text{mix}}$. }
    \label{tab:results}
\end{table}
We test the STEEL algorithm on two toy problems: an infinite-horizon environment inspired by the ``combination lock'' environment from \cite{efroni2022provably}, and a version of the "multi-maze" environment from \cite{lamb2023guaranteed}. In our    combination lock environment, $\mathcal{A} = \{0,1\}$, $\mathcal{S} = \{0, .., K-1\}$, and there is some sequence of ``correct'' actions $[a^*_0, ... a^*_{K-1}]$, such that $T(i,a^*_i) = i+1$, but $T(i,1 - a^*_i) = 0$. In other words, in order to progress through the states, the agent must select the correct next action from the (arbitrary) sequence $[a^*_0, ... a^*_{K-1}]$; otherwise, the latent state is reset to $0$. The observation space $\mathcal{X} = \{0,1\}^{L}$, where $L \gg K$. Some arbitrary subset of size $K$ of the components in $\mathcal{X}$ are indicators for each latent state in $\mathcal{S}$: that is, $\forall i \in \mathcal{S}, \exists j \in \{0,...,L-1\}: (x_t)_j = 1 \leftrightarrow s_t = i$. The other $L-K$ components in $\mathcal{X}$ are independent two-state Markov chains with states \{0,1\}, each with different arbitrary transition probabilities (bounded such that no transition probability for any of the two-state chains is less than $0.1$). Because each component is time-correlated, they all must be contained in $\mathcal{E}$, so $|\mathcal{E}| = 2^{L-K}$. In the multi-maze environment, the agent learns to navigate a four-room maze (similar to the one in \cite{sutton1999between}) using actions $\mathcal{A}= \{$Up, Down, Left, Right$\}$. The latent state space has size $|\mathcal{S}| = 68$. However, the observation $x \in \mathcal{X}$ in fact consists of \textit{nine copies} of this maze, each containing a different apparent ``agent.'' Eight of these ``agents'' move according to random actions: the true controllable agent is only present in one of the mazes. Because the eight distractor mazes can be in any configuration and have temporally-persistent state, we have that $|\mathcal{E}|= 68^8$. For both environments, we use the hypothesis class $\mathcal{F} := \{(x \rightarrow (x)_i | i \in \{0, \dim(\mathcal{X}) -1\}\}$. In other words, the hypothesis class assumes that for each latent state $s$, there is some component of $i$ of the observations such that $\phi^*(x) = s$ if and only if $(x)_i = 1$. The two environments are visualized in Figure \ref{fig:env_examples} .

There are four sources of potential variability in these simulation experiments: (1) the random elements of the environments' dynamics, $\mathcal{T}_e$, $\mathcal{Q}$, and $e_{\text{init}}$; (2) the starting latent state $s_{\text{init}}$; (3) steps in Algorithm \ref{alg:steel} that allow for arbitrary choices (e.g., the choice of action $\hat{a} = [a]$ in the first invocation of CycleFind); and (4) the parameters of the environment, such as the ``correct'' action sequence $[a^*_0, ... a^*_{K-1}]$ in the combination lock. STEEL is designed in such a way that, with high probability (i.e., if the algorithm succeeds), \textit{no choice that the algorithm makes} in terms of control flow or actions will depend on exogenous noise.\footnote{This property is theoretically important because it ensures that the decision to collect a given observation is independent of all previous observations, given the ground truth dynamics and initial latent state.} Therefore, if we hold (2-4) constant, we expect the number of environment steps taken to be constant, regardless of the exogenous noise. To verify this, we test both environments for 20 simulations, in both a ``fixed environment'' setting with (2-4) held constant, and a ``variable environment'' setting with (2-4) set randomly. We test the combination lock environment with latent states $K\in \{20,30,40\}$. We measure the success rate in exactly learning $\phi^*(x)$ and $T$ (up to permutation) and the number of steps taken. Results are shown in Table~\ref{tab:results}.

STEEL correctly learned the latent dynamics $T$ and optimal encoder $\phi^*$ in every simulation run; and we verify that the step counts do not depend on exogenous noise. In the combination lock experiments for large $K$, which are hard exploration problems, the total step counts were many orders of magnitude smaller than either the size of the observation space ($\approx 10^{154}$); or the reciprocal-probability of a uniformly-random policy navigating from state $0$  to state $K-1$ ($\approx 10^{12}$ for $K = 40$). This shows that STEEL is effective at learning latent dynamics for hard exploration problems under high-dimensional, time-correlated noise. For the multi-maze experiment (which is \textit{not} a hard exploration task), STEEL took a few orders of magnitude \textit{greater} steps than reported in \cite{lamb2023guaranteed} or \cite{levine2024multistep} for the same environment ($\approx 10^{3} - 10^4$ steps). However, unlike these prior methods, STEEL is \textit{guaranteed} to discover the correct encoder with high probability; this requires the use of conservative bounds when defining sample counts $d$, $n_{\text{samp. cyc.}}$ and $n_{\text{samp.}}$ in Algorithms \ref{alg:steel} and \ref{alg:cyclefind}, and in making other adversarial assumptions in the design of the algorithm that ensure that it is correct and sample-efficient even in pathological cases. Additionally, note that the encoder hypothesis class $\mathcal{F}$ used in this experiment has no spatial priors. By contrast, \cite{lamb2023guaranteed} choose a neural-network encoder for this environment with strong spatial priors that favor focusing attention on a single maze, using sparsely-gated patch encodings (and \cite{levine2024multistep} use this same network architecture in order to compare to \cite{lamb2023guaranteed}) -- this difference in priors over representations may also account for some of the gap in apparent sample efficiency.

In Appendix \ref{sec:double_prime}, we present an additional set of experiments on a family of tabular Ex-BMDPs which are known to be particularly challenging to ``multistep inverse'' methods, such as those proposed by \cite{lamb2023guaranteed} and \cite{levine2024multistep}. We find that, for sufficiently large instances of these environments, STEEL can in fact empirically outperform these prior ``practical'' methods.

\section{Limitations and Conclusion}
A major limitation of STEEL that may constrain its real-world applicability is its strict determinism assumption on $T$. In the episodic setting, \cite{efroni2022provably} can get away with allowing rare deviations from deterministic latent transitions (no more often on average than once every $4|\mathcal{S}|$ \textit{episodes}) because the environment resets ``erase'' these deviations before they can propagate for too long. By contrast, in the single-trajectory setting, the STEEL algorithm is fragile to even rare deviations in latent dynamics; ameliorating this issue may require significant changes to the algorithm.

A second barrier to practical applicability of STEEL is the need for an optimal training oracle for $\mathcal{F}$. While this is tractable for, e.g., linear models (with the realizability assumption ensuring linear separability), it becomes computationally intractable for anything much more complicated. However, this kind of assumption is common in sample-complexity results; and could be worked around in adapting STEEL to practical settings.\footnote{Similarly to how \cite{efroni2022provably} adapts PPE to practical settings in their experimental section.}

Two additional limitations to our work are the assumption of reachability of all endogenous latent states $s \in \mathcal{S}$, and the requirement that an upper-bound on the mixing time of the exogenous noise be known \textit{a priori}. However, in Appendix \ref{sec:discuss_assumptions}, we argue that these assumptions are in fact \textit{necessary}, for any algorithm in the single-trajectory, no-resets Ex-BMDP setting.

Finally, the core assumption that $\mathcal{S}$ is finite and small is of course a major limitation: sample-efficient reinforcement learning in combinatorial and continuous state spaces is a broad area of ongoing and future work. Despite these limitations, STEEL represents what we hope is an important contribution to representation learning in scenarios where resetting the environment during training is not possible, and observations are impacted by high-dimensional, time-correlated noise.
\section*{Acknowledgements}
A portion of this research has taken place in the Learning Agents Research Group (LARG) at the Artificial Intelligence Laboratory, The University of Texas at Austin. LARG research is supported in part by the National Science Foundation (FAIN-2019844, NRT-2125858), the Office of Naval Research (N00014-18-2243), Army Research Office (W911NF-23-2-0004, W911NF-17-2-0181), Lockheed Martin, and Good Systems, a research grand challenge at the University of Texas at Austin. The views and conclusions contained in this document are those of the authors alone. Peter Stone serves as the Executive Director of Sony AI America and receives financial compensation for this work. The terms of this arrangement have been reviewed and approved by the University of Texas at Austin in accordance with its policy on objectivity in research. Alexander Levine is supported by the NSF Institute for Foundations of Machine Learning (FAIN-2019844). Amy Zhang and Alexander Levine are supported by National Science Foundation (2340651) and Army Research Office (W911NF-24-1-0193).
\bibliography{main}
\bibliographystyle{iclr2025_conference}
\newpage
\appendix
\section{Full Algorithm}  \label{sec:full_algorithm}
The STEEL algorithm is presented here in full as Algorithm \ref{alg:steel}, with a major subroutine, CycleFind, split out as  Algorithm \ref{alg:cyclefind}. 
\begin{algorithm}[h]
\SetAlgoNoEnd 
\small{
       \KwIn{Access to Ex-BMDP $M$, access to training oracle for function class $\mathcal{F}$, knowledge of upper bounds $N$, $\hat{D}$, $\hat{t}_{\text{mix}}$, parameters $\delta$, $\epsilon$.}
       Initialize learned latent state set $\mathcal{S}'$, initially empty;
       
       Initialize table of collected datasets for each latent state: $\mathcal{D} : \mathcal{S}' \rightarrow \mathcal{M}(\mathcal{X})$;
       
       Initialize learned latent dynamics: $T' : (\mathcal{S}' \cup \{\bot\}) \times \mathcal{A} \rightarrow (\mathcal{S}' \cup \{\bot\})$.  (When a state $s$ is added to $\mathcal{S}'$, we initially set  $ \forall a \in \mathcal{A},\,\,\, T(s,a) := \bot$. Also, we set  $ \forall a \in \mathcal{A},\,\,\, T(\bot,a) := \bot$ as a permanent definition.);
       
       \tcp{Phase 1: Discover latent dynamics $T$.}
       Chose arbitrary $a \in \mathcal{A}$;
       
       $ \mathcal{S'},\mathcal{D}, T',s_{\text{curr.}} \leftarrow \text{CycleFind}( [a], \mathcal{S'},\mathcal{D}, T')$; \tcp{Special case for first iteration}
       \While{$\exists s \in \mathcal{S}', a\in \mathcal{A}: \,\, T'(s,a) := \bot$}{
       Initialize $\mathcal{B} \leftarrow \mathcal{S}'$, and initialize action list $\hat{a} \leftarrow [\,]$;
       
       \While{$\mathcal{B}$ non-empty}{
       Chose arbitrary $s \in \mathcal{B}$;
       
       $\mathcal{B}\leftarrow \mathcal{B}\setminus \{s\}$;
       
      Let $\hat{a}' :=$ a minimum-length sequence of actions such that $T'(T'(T'(...T'(s,\hat{a}'_0),\hat{a}'_1),\hat{a}'_2),...,\hat{a}'_{|\hat{a}'|-1}) = \bot $. (This can be found using Dijkstra's~algorithm.);
      
      $\hat{a} \leftarrow \hat{a} \cdot \hat{a}'$;
      
        $\mathcal{B} \leftarrow \{s'' \in \mathcal{S}' \,|\, \exists\, s' \in \mathcal{B}:\,  T'(T'(T'(...T'(s',\hat{a}'_0),\hat{a}'_1),\hat{a}'_2),...,\hat{a}'_{|\hat{a}'|-1}) = s''   \} $;
       }
       $ \mathcal{S'},\mathcal{D}, T',s_{\text{curr.}} \leftarrow \text{CycleFind}( \hat{a},\mathcal{S'},\mathcal{D}, T')$;
        }
        \tcp{Phase 2: Collect additional latent samples to train encoder.}
        Let  $d :=\lceil  \frac{3|\mathcal{S}'|\ln(16|\mathcal{S}'|^2|\mathcal{F}|/\delta)}{\epsilon} \rceil$;
        
       \While{$\exists s \in \mathcal{S'}:\, |\mathcal{D}(s)| < d$}{
        Let $\mathcal{C} := \{s \in \mathcal{S}' | |\mathcal{D}(s) | < d \land s \neq s_{\text{curr.}}\}$;
        
        Use $T'$ to plan a cycle of actions $\bar{a}$ starting at  $s_{\text{curr.}}$ that visits all states $\mathcal{C}$ and then returns to $s_{\text{curr.}}$, by greedily applying Dijkstra's algorithm repeatedly;
        
        If $|\bar{a}| < \hat{t}_{\text{mix}}$, use $T'$ to extend $\bar{a}$ by repeatedly inserting the shortest-length self-loop of some state visited in $\bar{a}$ into $\bar{a}$ until $|\bar{a}| \geq  \hat{t}_{\text{mix}}$ ;
        
       Execute all actions in $\bar{a}$ once without collecting data;
       
        \While{$\forall s \in \mathcal{C}:\, |\mathcal{D}(s)| < d$}{
        \For{$a$ in $\bar{a}_0,...,\bar{a}_{|\bar{a}|-1}$}{
             Take action $a$ on $M$;
             
             $s_{\text{curr.}} \leftarrow T'(s_{\text{curr.}},a)$;
             
             \If{ $s_{\text{curr.}}$ is being visited for the first time in this execution through $\bar{a}$ }{
              Let $x_{\text{curr.}} :=$ the observed state of $M$;
              
       $\mathcal{D}(s_{\text{curr.}} ) \leftarrow  \mathcal{D}(s_{\text{curr.}})  \uplus \{x_{\text{curr.}}\}$;
            }
       }
       }
        }
        \tcp{Phase 3: Train latent state encoder $\phi'$.}
        \For{$s \in \mathcal{S}'$}{
        Let $D_0:= \mathop{\uplus}_{s' \in \mathcal{S}' \setminus \{s\}}  \mathcal{D}(s') $; $D_1:=  \mathcal{D}(s)$;
        
        Apply training oracle to distinguish $D_0$ from  $D_1$, yielding $f_s\in \mathcal{F}$;
        }
        \textbf{return } $\mathcal{S}'$, $T'$, and $\phi'(x) := \arg\max_s f_s(x) $;
        }
\caption{STEEL} \label{alg:steel}
\end{algorithm}

\begin{algorithm}[p]
\small{
\SetAlgoNoEnd 
       \KwIn{Action list $\hat{a}$; current learned state set $\mathcal{S}'$, datasets $\mathcal{D}$, and transition dynamics $T'$. Also, access to Ex-BMDP $M$, access to training oracle for function class $\mathcal{F}$, knowledge of upper bounds $N$, $\hat{t}_{\text{mix}}$, and $\hat{D}$, and  parameters $\delta$, $\epsilon$.}
         \tcp{Phase 1: find length of cycle, $n_{\text{cyc}} \cdot |\hat{a}|$.}
      Let $n_{\text{samp. cyc.}} := \Bigg\lceil \ln \left(\frac{ \delta}{ 4 |\mathcal{A}|\cdot N \cdot (N-1)  \cdot |\mathcal{F}|}\right) \Big / \ln\left(\frac{9}{16}\right)\Bigg\rceil$;
      
       Let $c_{\text{init}} :=(2\hat{t}_{\text{mix}} + 3N\cdot|\hat{a}| -2) \cdot n_{\text{samp. cyc.}}  - \hat{t}_{\text{mix}} - N\cdot|\hat{a}| + 1 + \max((N-1)\cdot|\hat{a}|,  \hat{t}_{\text{mix}})$;
       
       Collect a sequence of observation $x_{CF} := [x_1,... x_{c_{\text{init}}}]$ from $M$ by taking the actions  in $\hat{a}$ repeatedly in a loop, for a total of $c_{\text{init}}$ actions. (Action $\hat{a}_{i\,\%\,|\hat{a}|}$ is taken after observing $x_i$.);
       
       Let $\bar{x}_i := x_{i\cdot |\hat{a}| + \max((N-1)\cdot |\hat{a}|, \hat{t}_{\text{mix}}) }$;
       
      Initialize $n_{\text{cyc}} \leftarrow 1$; \tcp{Default value if no other $n_{\text{cyc}}'$ is $n_{\text{cyc}}$}
      
       \For{$n_{\text{cyc}}'$ in [N, N-1...,3,2]}{
       Let $q := \lceil \hat{t}_{\text{mix}} / (n_\text{cyc}'\cdot |\hat{a}|) \rceil$, $r := q \cdot  n_\text{cyc}'$,   $k:=  \lfloor \frac{c_{\text{init}} + r \cdot |\hat{a}| - \max((N-1)\cdot |\hat{a}|, \hat{t}_{\text{mix}})}{2r\cdot |\hat{a}|+n_{\text{cyc}}'\cdot |\hat{a}|}   \rfloor$;

Let $D_0 := \{ \bar{x}_{r +(2r+n_{\text{cyc}}')i + j} |\,  i \in \{0,...,k-1\},\,\, j \in \{1,...,n_{\text{cyc}}'-1\}\}$;

Let $D_1 := \{ \bar{x}_{(2r+n_{\text{cyc}}')i } |\,  i \in \{0,...,k-1\}\}$;

      Apply training oracle to distinguish $D_0$ from  $D_1$, yielding $f\in \mathcal{F}$;
      
       \If{ $(\forall x \in D_0$,\,$f(x) = 0$ and $\forall x \in D_0$,\,$f(x) = 1)$}{
       $n_{\text{cyc}} \leftarrow n_{\text{cyc}}'$;
       
       \textbf{break};
       }
       }
         \tcp{Phase 2: Assemble datasets for observations from cycle, identify new latent states, and update $\mathcal{S}'$, $\mathcal{D}$, and $T'$.}
        Let $    n_{\text{samp.}} := \Bigg\lceil \ln \left(\frac{ \delta}{ 4 |\mathcal{A}|\cdot N^4 \cdot (\hat{D}+1)  \cdot |\mathcal{F}|}\right) \Big / \ln\left(\frac{9}{16}\right)\Bigg\rceil$;
        
        Let $ c := 2 \cdot n_{\text{cyc}} \cdot  |\hat{a}| \cdot \left((n_{\text{samp.}}-1) \cdot  \left\lceil \frac{\hat{t}_{\text{mix}}}{ |\hat{a}| \cdot n_{\text{cyc}}} \right\rceil + 1 \right) + \hat{t}_{\text{mix}}+\max((N-n_{\text{cyc}})\cdot |\hat{a}|,\hat{t}_{\text{mix}})$;
        
Extend the  sequence of observation $x_{CF}$ to length at least $c$ by taking the actions  $\hat{a}$  repeatedly in a loop on $M$, for  $\max(0, c- c_{\text{init}})$ additional steps, so that $x_{CF}= [x_1,... x_{c}]$;

Let $    n_{0} := \max((N- n_{\text{cyc}})\cdot |\hat{a}|, \hat{t}_{\text{mix}})$ , $ n_{0}' :=  n_0 + (n_{\text{samp.}}-1) \cdot     |\hat{a}| \cdot n_{\text{cyc}} \cdot \left\lceil \frac{\hat{t}_{\text{mix}}}{ |\hat{a}| \cdot n_{\text{cyc}}} \right\rceil + |\hat{a}| \cdot n_{\text{cyc}} + \hat{t}_{\text{mix}}$;

$\forall i \in \{0,..., n_{\text{cyc}} \cdot |\hat{a}| -1\}$, Let:
        \vspace{-0.3cm}
        \begin{equation*}
        \begin{split}
      \mathcal{D}'_i = \Big\{x_j|&  \exists k\in \{0,...,n_{\text{samp.}}-1\},\,\,  \exists \text{ offset } \in \{n_0,n_0'\}:\\
      &j = k \cdot   |\hat{a}| \cdot n_{\text{cyc}} \cdot \left\lceil \frac{\hat{t}_{\text{mix}}}{ |\hat{a}| \cdot n_{\text{cyc}}} \right\rceil +  \text{ offset }  + (i  -\text{ offset} ) \%  ( n_{\text{cyc}} \cdot |\hat{a}|  \Big\}  ;
    \end{split}
\end{equation*}
         \For{$i \in \{0,..., n_{\text{cyc}} \cdot |\hat{a}|-1\}$}{
 Initialize \textbf{StateAlreadyFound?} $\leftarrow $ False;
 
         \For{$s \in \mathcal{S}'$}{
        Let $D_0 := \mathcal{D}(s)$; $D_1 := \mathcal{D}_i'$;
        
      Apply training oracle to distinguish $D_0$ from  $D_1$, yielding $f\in \mathcal{F}$;
      
       \If{not $(\forall x \in D_0$,\,$f(x) = 0$ and $\forall x \in D_0$,\,$f(x) = 1)$}{
      $s_i^{\text{cyc}} \leftarrow s$;
      
      (Optionally; and only if $\mathcal{D}(s)$ was not initialized or modified already during this call to CycleFind:) $\mathcal{D}(s) \leftarrow   \mathcal{D}(s)\uplus\mathcal{D}_i'$;
      
       \textbf{StateAlreadyFound?} $\leftarrow $ True;
       
       \textbf{break};
      }
       }
       \If{not \textbf{StateAlreadyFound?}}{
       Insert new state $s'$ into $\mathcal{S}'$;
       
       $\mathcal{D}(s') \leftarrow \mathcal{D}_i'$;
       
        $s_i^{\text{cyc}} \leftarrow s'$;
       }
       }
         \For{$i \in \{0,..., n_{\text{cyc}} \cdot |\hat{a}|-1\}$}{
$T'(s_i^{\text{cyc}},a_{i\%|\hat{a}|}) \leftarrow s_{(i+1)\%(|\hat{a}| \cdot n_{\text{cyc}})}^{\text{cyc}}$;
         }
         $s_{\text{curr.}} :=s^{\text{cyc}}_{ \max(c,c_{\text{init}}) \% (n_{\text{cyc}} \cdot |\hat{a}|)}$;
         
          \textbf{return } $\mathcal{S'}, \mathcal{D},T', s_{\text{curr.}}$; 
    \caption{CycleFind Subroutine} \label{alg:cyclefind}
    }
\end{algorithm}

\section{Proofs} \label{sec:proofs}

\subsection{STEEL} \label{sec:steel_proof}
Here, we explain the STEEL algorithm (Algorithm \ref{alg:steel}), and prove the correctness of Theorem \ref{thm:steel}. STEEL proceeds in three phases: in the first phase, we learn a tabular representation of the endogenous latent states $\mathcal{S}$ and associated dynamics  $T$ of the Ex-BMDP. 
For each $s\in\mathcal{S}$, we also begin to collect a dataset $\mathcal{D}(s)$, where for each $x\in \mathcal{D}(s)$, we have that $\phi^*(x) = s$, and additionally where all samples in $ \mathcal{D} := \bigcup_{s \in \mathcal{S}} \mathcal{D}(s)$ were collected from the Ex-BMDP $M$ at least $\hat{t}_{\text{mix}}$ time steps apart.

Then, in the second phase, we use the learned dynamics model $T'$ to efficiently collect additional samples for each learned latent state $s \in \mathcal{S}'$ and add them to $\mathcal{D}(s)$, until $|\mathcal{D}(s)| \geq d$, where:
\begin{equation}
    d :=\lceil  \frac{3|\mathcal{S}'|\ln(16|\mathcal{S}'|^2|\mathcal{F}|/\delta)}{\epsilon} \rceil.
\end{equation}

Finally, in the third phase, we use $ \mathcal{D}$ to learn an encoder $\phi'$, which approximates $\phi^*$ with high probability when the exogenous state $e$ of the Ex-BMDP is sampled from its stationary distribution $\pi_e$.

STEEL relies on the CycleFind subroutine, which is described in detail and proven correct in Section \ref{sec:cyclefind_proof}. This subroutine is given a list of actions $\hat{a}$ and the previously-learned states $\mathcal{S}'$, datasets $\mathcal{D}$, and dynamics $T'$.  It identifies and collects samples of all latent states in \textit{some} state cycle which is traversed by taking the actions $\hat{a}$ repeatedly, and also identifies the latent state transitions in this cycle.

We restate Theorem \ref{thm:steel} here:

\steelthm*

For the sake of our proof, we will treat the ground-truth properties of the Ex-BMDP, such as the latent dynamics $T$, as (unknown, arbitrary) \textit{fixed quantities}, not as random variables. Similarly, we will treat the initial latent state $s_{init}$ as an arbitrary but fixed quantity, rather than a random variable. Furthermore, in the proof, we will treat decisions that are specified (implicitly or explicitly) as ``arbitrary'' in Algorithms \ref{alg:steel} and \ref{alg:cyclefind} (such as the choice of the action $\hat{a} = [a]$ in the first invocation of CycleFind, or the choice of ``shortest'' paths in cases of ties when Dijkstra's algorithm is used) as being made deterministically, such as by a  pseudorandom process -- crucially, we require that these choices are made in a way that does not depend of the observations of the Ex-BMDP. 

This leaves the exogenous noise transitions $\mathcal{T}_e$, the emission function $\mathcal{Q}$, and the initial exogenous latent state $e_{init}$ as the \textit{only} sources of randomness in the algorithm. We notate the sample space over these three processes together as $\Omega$. Throughout the algorithm, we will ensure that decisions such as control flow choices, choices of actions, and choices of how to assemble datasets, are made \textit{deterministically, independently of} $\Omega$, with high probability. That is, unless the algorithm \textit{fails}, these decisions will be fully determined by $s_0$, $T$, and algorithm parameters. While \textit{whether or not} the algorithm fails will depend (solely) on $\Omega$, we will ultimately bound the \textit{total probability of failure} as less that $\delta$ by union bound, so that at each step in the proof, we can treat the algorithm's choices as statistically \textit{independent} of $\Omega$.

STEEL begins by repeatedly applying the CycleFind subroutine. CycleFind identifies a \textit{cycle} in the latent dynamics $T$ of the Ex-BMDP, and collects observations of the states in that cycle. In Phase 1 of STEEL, throughout these application of CycleFind, the algorithm maintains a representation of the learned Ex-BMDP ``so far'', in the form of an incomplete set of learned states $\mathcal{S}'$,  learned latent dynamics  $T' : (\mathcal{S}' \cup \{\bot\}) \times \mathcal{A} \rightarrow (\mathcal{S}' \cup \{\bot\})$, and a table of datasets corresponding to each latent state  $\mathcal{D} : \mathcal{S}' \rightarrow \mathcal{M}(\mathcal{X})$.

We first describe the CycleFind subroutine in detail:

\subsubsection{CycleFind Subroutine} \label{sec:cyclefind_proof}
Here, we describe CycleFind, and prove its correctness. CycleFind accepts as input a list of actions $\hat{a} := [\hat{a}_0,...,\hat{a}_{|\hat{a}|-1}]$, and the Ex-BMDP $M$ starting at an arbitrary state $x_0$. CycleFind also takes the representation of the learned Ex-BMDP ``so far'', in the form $\mathcal{S}'$, $\mathcal{D}$ and $T'$.
We assume that, for each $s \in \mathcal{S}'$ all of the previously-observed observations in $\mathcal{D}(s)$ were collected with gaps of at least $\hat{t}_\text{mix}$ steps between them. Also, we assume that  $\forall s \in \mathcal{S}', \, |\mathcal{D}(s)| \geq n_{\text{samp.}}$, where, in terms of the upper bounds $N \geq |S|$ and $\hat{t}_{\text{mix}} \geq t_{\text{mix}}$, and total failure probability $\delta$, 
\begin{equation}
    n_{\text{samp.}} := \Bigg\lceil \ln \left(\frac{ \delta}{ 4 |\mathcal{A}|\cdot N^4 \cdot (\hat{D}+1)  \cdot |\mathcal{F}|}\right) \Big / \ln\left(\frac{9}{16}\right)\Bigg\rceil. \label{eq:def_n_samp}
\end{equation}

CycleFind first proceeds to take the actions $\hat{a}_0,...,\hat{a}_{|\hat{a}|-1}$ repeatedly in a loop. 
CycleFind then uses this collected sequence of observations (which may need to be extended by cycling through $\hat{a}$ for additional iterations) to learn new states and update $\mathcal{S}'$, $\mathcal{D}$, and $T'$.

We will show that sequence of latent states visited by CycleFind is eventually periodic; that is, it guaranteed to eventually get stuck in a cycle of latent states, with a period in the form $n_{\text{cyc}}\cdot |\hat{a}|$, for some $n_{\text{cyc}} \leq N$. The goal of CycleFind is to:
\begin{enumerate}
    \item Identify the period of this cycle. (That is, determine $n_{\text{cyc}}$.)
    \item Use this period to extract from the sequence of observed states some new multisets of observations  $\mathcal{D}_i' \in \mathcal{M}(\mathcal{X})$ for $i \in \{0,...,n_{\text{cyc}}\cdot |\hat{a}|-1\}$, which each contain only one unique latent state, corresponding to the position $i$ in the cycle.  These multisets will only contain observations collected at least $\hat{t}_{\text{mix}}$ timesteps apart, so will be close-to-i.i.d. samples. (Depending on $n_{\text{cyc}}$, we may need to perform additional cycles of data collection at this step.)
    \item Identify which of these multisets $\mathcal{D}_i'$ have the same latent states that have been previously identified in $\mathcal{S}'$, and which have a new latent state, and determine among the new multisets which ones have the same latent states to each other, and which are distinct. This allows us to update $\mathcal{S}'$ and  $\mathcal{D}$ with the new samples from $\mathcal{D}_i'$, while maintaining the property that  $ \forall s \in \mathcal{S'}, \forall x, x' \in \mathcal{D}(s) , \phi^*(x) =  \phi^*(x')$, and  $\forall s, s' \in \mathcal{S}', \forall x  \in \mathcal{D}(s),   x'  \in \mathcal{D}(s'),  \phi^*(x) \neq  \phi^*(x')$.) We also update the learned transitions $T'$. 
    \item Return the updated learned state set $\mathcal{S}'$, datasets $\mathcal{D}$, transition function $T'$, and the current latent state of $M$, $s_{\text{curr.}} \in \mathcal{S}'$.
\end{enumerate}
Specifically, CycleFind has the following property:
\begin{proposition}
    For any action sequence $\hat{a}$ of length at most $(D+1)N$, there exists at least one sequence of ground-truth states in $\mathcal{S}$, $[s^{cyc*}_0,s^{cyc*}_1,...,s^{cyc*}_{|\hat{a}|\cdot n_{\text{cyc}}-1}]$, for some $n_{\text{cyc}} \leq N$, such that $\forall i \in \{0,1,...,|\hat{a}|\cdot n_{\text{cyc}}-1\},\,\,T(s^{cyc*}_i,\hat{a}_{i\%|\hat{a}|}) = s^{cyc*}_{(i+1)\%(|\hat{a}|\cdot n_{\text{cyc}})}$. Given a sequence of actions $\hat{a}$, learned partial state set $\mathcal{S}'$, transition dynamics $T'$, and datasets $\mathcal{D}$ which meet the following inductive assumptions:
    \begin{itemize}
        \item There exists an injective mapping $\sigma^{-1}: \mathcal{S}' \rightarrow \mathcal{S}$ such that 
        \begin{equation}
            \forall s \in \mathcal{S}', a \in  \mathcal{A},\,\,\,  T'(s,a) = \bot \lor  \sigma^{-1}(T'(s,a)) = T(\sigma^{-1}(s),a) 
        \end{equation}
         and additionally,
        \begin{equation}
            \forall s \in \mathcal{S}',\forall x \in \mathcal{D}(s), \phi^*(x) = \sigma^{-1}(s).
        \end{equation}
         \item $\forall s \in \mathcal{S}', \, |\mathcal{D}(s)| \geq n_{\text{samp.}}$; and for each $s$, the samples in  $\mathcal{D}(s)$ were all sampled from $M$ at least $\hat{t}_{\text{mix}}$ steps apart. Additionally, the choice to add any sample $x$ to $\mathcal{D}(s)$ was made fully deterministically (as a function of $T$, $s_{init.}$, the timestep $t$ at which $x$ was collected, and algorithm parameters), and independently of the random processes captured by  $\Omega$.
         \item The choice of action sequence $\hat{a}$ is similarly fully deterministic and independent of $\Omega$.
    \end{itemize}
    then, with probability at least:
    \begin{equation}
        1 - \frac{\delta}{2\cdot |\mathcal{A}| \cdot N }
    \end{equation}
    CycleFind will return updated $\mathcal{S}'$,  $\mathcal{D}$,  $T'$, and $s_{\text{curr.}}$ which meet the same inductive assumptions, and for which additionally:
    \begin{itemize}
        \item The image of the updated $\mathcal{S}'_{(new)}$, $\sigma^{-1}(\mathcal{S}'_{(new)})$ is a (non-strict) superset of $\sigma^{-1}(\mathcal{S}')$, which additionally includes all unique states in some $[s^{cyc*}_0,s^{cyc*}_1,...,s^{cyc*}_{|\hat{a}|\cdot n_{\text{cyc}}-1}]$.
        \item The transition matrix $T'_{(new)}$ is a (non-strict) superset of the old transition matrix $T'$ (in the sense that its domain is now $\mathcal{S}'_{(new)} \supseteq \mathcal{S}'_{(old)}$, and if $T'_{(old)}(s,a) \neq \bot$  then $T'_{(new)}(s,a) \neq \bot$), and  $T'_{(new)}$ additionally includes the transitions corresponding to the cycle; that is:
        \begin{equation}
        \begin{split}
                \forall i \in \{0,...,|\hat{a}|\cdot n_{\text{cyc}}-1\}, \exists s,s' \in \mathcal{S}': &\sigma^{-1}(s) = s^{cyc*}_i \land \sigma^{-1}(s') = s^{cyc*}_{(i+1)\%(|\hat{a}|\cdot n_{\text{cyc}})} \land 
                \\&T'_{(new)}(s, \hat{a}_{i\%|\hat{a}|}) = s'.    
        \end{split}
        \end{equation}
        \item The final observation $x$ sampled by CycleFind from $M$ is such that $\sigma^{-1}(s_{\text{curr.}}) = \phi^*(x)$.
    \end{itemize} 
    Additionally, CycleFind will take at most:
    \begin{equation}
\begin{split}
     \max\Big(&(2\hat{t}_{\text{mix}} + 3N\cdot|\hat{a}| -2) \cdot n_{\text{samp. cyc.}}   - N\cdot|\hat{a}|,
     2 \cdot (\hat{t}_{\text{mix}} +  |\mathcal{S}|\cdot |\hat{a}|  -1 ) \cdot n_{\text{samp.}}  + 1
     \Big)  \\&+   \max(N \cdot |\hat{a}| - |\hat{a}| -\hat{t}_{\text{mix}},0) + 1 \text{ actions},
\end{split}
\end{equation}
where $n_{\text{samp.}}$ is defined in Equation \ref{eq:def_n_samp} and $n_{\text{samp. cyc.}}$ is  defined in Equation \ref{eq:def_n_cyc_samp}.
    \label{prop:cyclefind_correctness}
\end{proposition}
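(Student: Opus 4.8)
The plan is to prove the four asserted properties (correct periodicity, correct state identification, correct transition and $s_{\text{curr.}}$ updates, and the action budget) by first isolating a single deterministic ``skeleton'' of the execution and then controlling the handful of statistical tests that could make the run deviate from it. First I would establish the purely combinatorial claim that iterating $\hat{a}$ drives the endogenous state into a cycle: considering the map $g(s) := T(s,\hat{a})$ (the composition of $T$ along $\hat{a}$), since $|\mathcal{S}| \le N$ the orbit $s_{\text{init}}, g(s_{\text{init}}), g^2(s_{\text{init}}), \dots$ repeats within $N$ steps, so it enters a cycle of some period $n_{\text{cyc}} \le N$ after a transient of at most $N - n_{\text{cyc}}$ applications of $\hat{a}$, i.e.\ at most $(N-n_{\text{cyc}})|\hat{a}| \le (N-1)|\hat{a}|$ steps. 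This yields the sequence $[s^{cyc*}_0,\dots,s^{cyc*}_{|\hat{a}| n_{\text{cyc}}-1}]$ and shows that the leading offset $\max((N-1)|\hat{a}|,\hat{t}_{\text{mix}})$ defining $\bar{x}_i$ places every $\bar{x}_i$ strictly inside the cycle, with $\bar{x}_i$ and $\bar{x}_{i+n_{\text{cyc}}}$ always sharing a latent state. Crucially this skeleton depends only on $T$, $s_{\text{init}}$, $\hat{a}$, and the parameters, hence is independent of $\Omega$; conditioned on all statistical tests behaving as intended, every subsequent control-flow decision, action, and dataset assignment is likewise a deterministic function of these quantities, so the whole execution is independent of $\Omega$ on the success event.

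The heart of the argument is a single lemma about the oracle tests, reused in both CycleFind phases: given two multisets each collected with gaps $\ge \hat{t}_{\text{mix}}$, (i) if they observe distinct latent states, the block assumption plus realizability (Eq.~\ref{eq:function_class_correct}) guarantee a one-vs-rest separator in $\mathcal{F}$, so by the oracle contract the test deterministically reports ``separable''; whereas (ii) if they are near-i.i.d.\ observations of the same state, then with high probability no $f \in \mathcal{F}$ separates them. For (ii) I would fix $f$ and a state $s$ and pair one sample from each multiset with temporal gap $\ge \hat{t}_{\text{mix}} = t_{\text{mix}}(1/4)$. Writing $g(e) := \Pr_{x\sim\mathcal{Q}(s,e)}(f(x)=1)$ and $p := \E_{\pi_\mathcal{E}}[g]$, the mixing bound controls each endpoint's exogenous law to within total variation $1/4$ of $\pi_\mathcal{E}$, so the probability that $f$ labels the pair $(1,0)$ is at most $(p+1/4)(1-p+1/4) \le 9/16$, the maximum attained at $p=1/2$. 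Conditioning block-by-block (each new block's exogenous states are again within $1/4$ of stationary given the past), the chain rule gives separation probability $\le (9/16)^k$ over $k$ pairs, and a union bound over $\mathcal{F}$ makes this $\le |\mathcal{F}|(9/16)^k$; the definitions of $n_{\text{samp. cyc.}}$ and $n_{\text{samp.}}$ are exactly the values of $k$ driving this below the per-test budgets $\delta/(4|\mathcal{A}|N(N-1))$ and $\delta/(4|\mathcal{A}|N^4(\hat{D}+1))$.

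To assemble Phase 1, I would show by arithmetic on the indices that when the candidate $n_{\text{cyc}}' = n_{\text{cyc}}$ the anchors $D_1$ all sit at cycle position $0$ while $D_0$ avoids that position, so test (i) passes; and that for every $n_{\text{cyc}}' > n_{\text{cyc}}$ the range $j\in\{1,\dots,n_{\text{cyc}}'-1\}\supseteq\{1,\dots,n_{\text{cyc}}\}$ contains an offset making some $D_0$ element share the anchor's state, so test (ii) rejects the candidate w.h.p. Since candidates are scanned from $N$ downward, the first accepted value is exactly $n_{\text{cyc}}$ after a union bound over the $\le N-1$ larger candidates. For Phase 2 the same lemma matches each $\mathcal{D}_i'$ to its unique already-known state or certifies it new; here the two-offset construction $(n_0, n_0')$ is what supplies the cross-dataset pairs needed for test (ii) at separation $\ge \hat{t}_{\text{mix}}$ while only doubling the sample count. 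Correct identification then makes the update $T'(s^{\text{cyc}}_i, a_{i\%|\hat{a}|}) = s^{\text{cyc}}_{(i+1)\%(|\hat{a}| n_{\text{cyc}})}$ consistent with $T$ under the extended $\sigma^{-1}$, preserves injectivity and the ``$\ge \hat{t}_{\text{mix}}$-apart, $\ge n_{\text{samp.}}$-many'' dataset invariants, and fixes $s_{\text{curr.}}$ by reading off the cycle phase $\max(c,c_{\text{init}}) \% (n_{\text{cyc}}|\hat{a}|)$. Summing the lengths $c_{\text{init}}$ and $c$ (and using $|\hat{a}| \le (D+1)N$) yields the stated action bound, and a final union over the test families gives total failure $\le \delta/(2|\mathcal{A}|N)$.

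I expect the main obstacle to be part (ii) of the test lemma: turning ``near-i.i.d.\ samples of one state cannot be separated'' into a rigorous quantitative bound. Two subtleties need care. First, the $9/16$ constant hinges on bounding \emph{both} endpoints of each pair to within $1/4$ of $\pi_\mathcal{E}$, which requires even the earliest anchor to sit $\ge \hat{t}_{\text{mix}}$ beyond the trajectory's arbitrary start — this is precisely what the leading offsets in $c_{\text{init}}$ and $n_0$ secure. Second, the blocks are only approximately independent, so the clean product $(9/16)^k$ must come from sequential conditioning rather than literal independence, and I would need to verify the conditional total-variation bound still holds given the entire observed past (which it does, because the exogenous chain is Markov and the gaps exceed $\hat{t}_{\text{mix}}$). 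The remaining work — the modular-arithmetic verification that the index sets in Algorithm~\ref{alg:cyclefind} realize the claimed latent-state memberships and temporal separations in every case (including when $n_{\text{cyc}}'$ is or is not a multiple of $n_{\text{cyc}}$) — is intricate but routine bookkeeping rather than a conceptual difficulty.
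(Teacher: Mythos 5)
Your proposal follows essentially the same route as the paper's proof: the deterministic-orbit argument giving a cycle of period $n_{\text{cyc}}\le N$ after a transient of at most $(N-1)|\hat{a}|$ steps, the realizability-based zero-false-negative direction of each oracle test, the mixing-based per-pair bound $(p+1/4)(5/4-p)\le 9/16$ combined via sequential conditioning on the past and a union bound over $\mathcal{F}$ (this is exactly the paper's Lemma~\ref{lemma:near_iid} plus its product argument), the downward scan over candidates $n_{\text{cyc}}'$ with an offset pairing $D_0$ elements to $D_1$ anchors, and the two-offset $(n_0,n_0')$ construction supplying cross-dataset pairs separated by $\hat{t}_{\text{mix}}$ in the state-identification step. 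Your choice of the offset in $\{1,\dots,n_{\text{cyc}}\}\subseteq\{1,\dots,n_{\text{cyc}}'-1\}$ is in fact slightly cleaner than the paper's $j':=(-r)\,\%\,n_{\text{cyc}}$, which nominally allows $j'=0$, an index not present in $D_0$.

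One bookkeeping slip: the total action count is $\max(c_{\text{init}},c)$ plus lower-order terms, not the sum of $c_{\text{init}}$ and $c$. CycleFind's second phase extends the very trajectory collected in its first phase, taking only $\max(0,\,c-c_{\text{init}})$ additional steps, and this reuse is what produces the $\max(\cdot,\cdot)$ form of the bound stated in the proposition; literally summing the two lengths would overshoot it (though only by a constant factor, leaving the asymptotics intact). This is arithmetic rather than a conceptual gap, but as written that step would not reproduce the stated constant.
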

\begin{proof}
\textbf{Determining   \texorpdfstring{$n_{\text{cyc}}$}{n cyc}:} 

 CycleFind initially takes $c_{\text{init}}$ actions, where, in terms of the upper bounds $N \geq |S|$ and $\hat{t}_{\text{mix}} \geq t_{\text{mix}}$, 
\begin{equation}
     c_{\text{init}} :=(2\hat{t}_{\text{mix}} + 3N\cdot|\hat{a}| -2) \cdot n_{\text{samp. cyc.}}  - \hat{t}_{\text{mix}} - N\cdot|\hat{a}| + 1 + \max((N-1)\cdot|\hat{a}|,  \hat{t}_{\text{mix}}),
\end{equation}
where
    \begin{equation}
    n_{\text{samp. cyc.}} := \Bigg\lceil \ln \left(\frac{ \delta}{ 4 |\mathcal{A}|\cdot N \cdot (N-1)  \cdot |\mathcal{F}|}\right) \Big / \ln\left(\frac{9}{16}\right)\Bigg\rceil. \label{eq:def_n_cyc_samp}
\end{equation}
 CycleFind first takes action $\hat{a}_0$, then $\hat{a}_1$, then $\hat{a}_2$, etc, until taking action $\hat{a}_{|\hat{a}|-1}$, at which point it repeats the process starting at $\hat{a}_0$, for a total of $c_{\text{init}}$ steps. The observation after each of these actions is recorded as $x_{CF} := [x_1,...,x_{c_{\text{init}}}]$. Let $s_{CF} := [s_1,...,s_{c_{\text{init}}}]$ be the (initially unknown) latent states corresponding to these observations; that is, $\phi^*(x)$ for each $x$ in  $x_{CF}$. (For indexing purposes, $x_0$ and $s_0$ will refer to the observation and latent state, respectively, of the Ex-BMDP \textit{before} the first action was taken by CycleFind. However, these will not be used by the algorithm.)

First, we show that $s_{CF}$ must in fact end in a cycle of period $n_{\text{cyc}}\cdot |\hat{a}|$, for some $n_{\text{cyc}} \leq N$. Let $s_{per.}$ consist of every $|\hat{a}|$'th element in $ s_{CF}$ starting at an offset $m:=  \max(0, \hat{t}_{\text{mix}} - (N-1)\cdot |\hat{a}|)$; that is, $s_{per.} := [s_m, s_{m + |\hat{a}|}, s_{m+2|\hat{a}|},..., s_{ m +   \lfloor( c_{\text{init}} - m )/ |\hat{a} | \rfloor |\hat{a}| }]$. Note that the evolution from one state to the next in $s_{per.}$ is deterministic, because it is caused by the same sequence of actions, $[\hat{a}_{m\, \%\, |\hat{a}|}, \hat{a}_{(m+1)\, \%\, |\hat{a}|}, ... , \hat{a}_{(m+|\hat{a}|-1)\, \%\, |\hat{a}|}]$, being taken after each state.  That is, if $s_{m+i\cdot |\hat{a}|} = s $ and $s_{m+j\cdot |\hat{a}|} = s $ and  $s_{m+(i+1)\cdot |\hat{a}|} = s' $, then $s_{m+(j+1)\cdot |\hat{a}|} = s' $. As a consequence,  if $s_{m+i\cdot |\hat{a}|} = s$, and the next occurrence of the latent state $s$ in the sequence $s_{per.}$ is $s_{m+(i+t)\cdot |\hat{a}|} = s$, then all subsequent states in  the sequence $s_{per.}$ will consist of repetitions of the sequence of  $s_{ m+ (i+1)\cdot |\hat{a}|}$ through  $s_{m+(i+t)\cdot |\hat{a}|}$.

Then $s_{per.}$ must consist of some sequence of `transient' latent states which occur only once at the beginning of the sequence, followed by a repeated cycle. Because these states never re-occur, the transient part lasts length $n_\text{trn} \leq N-1$.

Then, the period of the cycle is $n_\text{cyc}$, where  $n_\text{trn}  + n_\text{cyc} \leq N$. Note that the cycle in $s_{per.}$ contains no repeated states. (Otherwise, the span between the first two repetitions of a state in the cyclic sequence in $s_{per.}$ will itself repeat indefinitely, so we can analyse this smaller cycle as the cycle of length $n_\text{cyc}$.)

There is a  corresponding cycle in $s_{CF}$, of length $n_{\text{cyc}}\cdot |\hat{a}|$. To see this, note that for all $i \geq n_{\text{trn}}$, we have that $s_{m + i|\hat{a}|}  = s_{m + i|\hat{a}| + n_{\text{cyc}}\cdot |\hat{a}|} $. Furthermore, for all $j$ in $\{0, ... |\hat{a}| -1\}$, the sequence of actions taken between $s_{m + i|\hat{a}|}  $ and $s_{m + i|\hat{a}|  +j}$ is the same as the sequence of actions taken between $ s_{m + i|\hat{a}| + n_{\text{cyc}}\cdot |\hat{a}|} $ and $ s_{m + i|\hat{a}| +j + n_{\text{cyc}}\cdot |\hat{a}|}$. Therefore  $s_{m + i|\hat{a}|  +j} =  s_{m + i|\hat{a}| +j + n_{\text{cyc}}\cdot |\hat{a}|}$. Thus, for any general $i' \geq n_{\text{trn}} |\hat{a}| $ (which always can be written as  $i' = i|\hat{a}| +j$) we have that $s_{m + i'}  = s_{m + i' + n_{\text{cyc}}\cdot |\hat{a}|} $. However, this cycle \textit{may} contain repeated states.

In order to avoid the transient part of $s_{CF}$, and to prevent sampling observations with  exogenous noise that is correlated to samples taken in previous iterations of CycleFind, we skip the first $\max((N-1)\cdot |\hat{a}|, \hat{t}_{\text{mix}} ) $ transitions in $s_{CF}$. For convenience, we will let $\bar{s}_i := s_{i\cdot |\hat{a}| + \max((N-1)\cdot |\hat{a}|, \hat{t}_{\text{mix}}) }$, and similarly $\bar{x}_i := x_{i\cdot |\hat{a}| + \max((N-1)\cdot |\hat{a}|, \hat{t}_{\text{mix}}) }$. Note that the sequence $[\bar{s}_0, \bar{s}_1,...]$ is equivalent to  $s_{per.}$ after skipping the first $N-1 \geq n_{trn.}$ elements of the sequence.

Because the cycle in  $s_{per.}$ contains no repeated states, we have that 
\begin{equation}
    \forall i,j  \in \mathbb{N},\,\,\,  \bar{s}_{i} = \bar{s}_{j}  \Leftrightarrow i \equiv j \pmod{n_{\text{cyc}}} \label{eq:cf_pt1_equiv}
\end{equation}

In order to find $n_{\text{cyc}}$, we test the hypothesis that $n_{\text{cyc}} = n_{\text{cyc}}'$, for each  $n_{\text{cyc}}' \in \{N,...,2\}$, in order, until we identify $n_{\text{cyc}}$.  If none of the tests pass, then we know that $n_{\text{cyc}} = 1$. The test for each hypothesis $n_{\text{cyc}} = n_{\text{cyc}}'$ has a zero false-negative rate. Consequently, the loop will always end before $n_{\text{cyc}} > n_{\text{cyc}}'$, so at each iteration, it must always be the case that  $n_{\text{cyc}} \leq n_{\text{cyc}}'$. A failure can only occur if the test that  $n_{\text{cyc}} = n_{\text{cyc}}'$ has a false positive, when in fact  $n_{\text{cyc}} < n_{\text{cyc}}'$.

Each test proceeds as follows:
\begin{itemize}
    \item      Let $q := \lceil \hat{t}_{\text{mix}} / (n_\text{cyc}'\cdot |\hat{a}|) \rceil$ and $r := q \cdot  n_\text{cyc}'$.
    \item     Let $k:=  \lfloor \frac{c_{\text{init}} + r \cdot |\hat{a}| - \max((N-1)\cdot |\hat{a}|, \hat{t}_{\text{mix}})}{2r\cdot |\hat{a}|+n_{\text{cyc}}'\cdot |\hat{a}|}   \rfloor$
    \item Let $D_0 := \{ \bar{x}_{r +(2r+n_{\text{cyc}}')i + j} |\,  i \in \{0,...,k-1\},\,\, j \in \{1,...,n_{\text{cyc}}'-1\}\}$.
    \item Let $D_1 := \{ \bar{x}_{(2r+n_{\text{cyc}}')i } |\,  i \in \{0,...,k-1\}\}$.
    \item  Use the training oracle to try to learn to distinguish  $D_0$ from  $D_1$, yielding $f\in \mathcal{F}$.
    \item If $n_{\text{cyc}} = n_{\text{cyc}}'$, then,
    \begin{itemize}
        \item Note that, because $n_{\text{cyc}} | r$  
        \begin{equation}
            \forall i \in \mathbb{N},\,\,\, (2r + n'_{\text{cyc}})i \equiv 0 \pmod{n_{\text{cyc}}}, 
        \end{equation}
        but \begin{equation}
        \begin{split}
         \forall i \in \mathbb{N}, j \in \{1,...,n_{\text{cyc}}'-1\},&\\r +(2r+n_{\text{cyc}}')i + j&\equiv j \not \equiv 0 \pmod{n_{\text{cyc}}},
        \end{split} 
        \end{equation}
        \item Consequently, by Equation \ref{eq:cf_pt1_equiv}, all elements of $D_1$ will have the same latent state, and none of the elements of $D_0$ have this latent state. By realizability, $f$ will have 100\% accuracy on the training set. (This is the ``true positive'' case of the test.)
    \end{itemize}
     \item Conversely, if  $n_\text{cyc} < n_\text{cyc}'$, there is only a small chance that any classifier $f$ will have 100\% accuracy on the training set.
     \begin{itemize}

        \item  Define $j'$ as the (unknown) value $j':=  (-r-1)\,\%\,n_{\text{cyc}}$ +1. Noting, by assumption, that $ n_{\text{cyc}} < n_{\text{cyc}}'$, we have that $j' \in \{1,..,n_{\text{cyc}}'-1\}$. Then, $\forall i \in \{0,...,k-1\}$, we have that $\bar{x}_{r +(2r+n_{\text{cyc}}')i + j'} \in  D_0$, while  $\bar{x}_{(2r+n_{\text{cyc}}')i } \in  D_1$. However, we also have that: 
        \begin{equation}
          r +(2r+n_{\text{cyc}}')i + j'  \equiv  (2r+n_{\text{cyc}}')i \pmod{n_{\text{cyc}}}
        \end{equation}
        which by Equation \ref{eq:cf_pt1_equiv} implies that 
        \begin{equation}
            \bar{s}_{r +(2r+n_{\text{cyc}}')i + j'}  =\bar{s}_{(2r+n_{\text{cyc}}')i}. \label{eq:cf_pt1_d_0_d_1_same_states}
        \end{equation}
       \item  Now, we can define $D_0^{(j')} \subseteq D_0$ as  
       \begin{equation}
           D_0^{(j')}:= \{ \bar{x}_{r + (2r+n_{\text{cyc}}')i + j' } |\,  i \in \{0,...,k-1\}\}.
       \end{equation} 
        \item Fix any arbitrary classifier  $f' \in \mathcal{F}$.
       \item In order for $f'$ to have 100\% accuracy on the training set, we must have $f'(x) = 1$ for all $x\in D_1$, and $f'(x) = 0$ for all $x\in D_0^{(j')}$.
       
       \item Note that all observations in $D_1 \uplus D_0^{(j')}$ are collected at least $t_{\text{mix}}$ steps apart from one another. (Specifically, $\bar{x}_{r + (2r+n_{\text{cyc}}')i + j' } $ is collected $(r+j') \cdot |\hat{a}| \geq r \cdot |\hat{a}| \geq t_{mix} $ steps after $\bar{x}_{(2r+n_{\text{cyc}}')i }$, and $(r + n_{\text{cyc}}' - j') \cdot |\hat{a}| \geq r \cdot |\hat{a}|   \geq t_{mix} $ steps before $\bar{x}_{(2r+n_{\text{cyc}}')(i+1) }$.)

        \item Because, additionally, $D_0^{(j')}$ and $D_1$ are defined independently of $\Omega$, by Lemma \ref{lemma:near_iid} we have:
        \begin{equation}
        \begin{split}
                    \forall t \in &\{t'|  \bar{x}_{t'} \in D_1 \uplus D_0^{(j')}  \},\\ &p_s - 1/4 \leq  \Pr(f'(\bar{x}_t) =1 | (D_1 \uplus D_0^{(j')})_{<t}  ,  \phi^*(\bar{x}_t) = s  ) \leq p_s + 1/4    
        \end{split}
        \end{equation}
        where $(D_1 \uplus D_0^{(j')})_{<t}$ refers to the samples in  $D_1 \uplus D_0^{(j')}$ collected before $\bar{x}_t$ and:
                \begin{equation}
          \forall s \in \mathcal{S},\,\,\,  p_s:= \Pr(f'(x) =1 | x \sim \mathcal{Q}(s,e) , e \sim \pi_{\mathcal{E}}). \label{eq:def_ps_ineq_2}
        \end{equation}
       Then, by Equation \ref{eq:def_ps_ineq_2}, the probability that  $f'$ returns $1$ on all samples in $D_1$, and $0$ on all samples in $D_0^{(j')}$ is at most:
       \begin{equation}
         \Pi_{i = 0}^{k-1} (p_{\bar{s}_{(2r+n_{\text{cyc}}')i}} + 1/4) \cdot    \Pi_{i = 0}^{k-1} ( 1 -( p_{\bar{s}_{r + (2r+n_{\text{cyc}}')i + j'}} - 1/4) ).
       \end{equation}
        By Equation \ref{eq:cf_pt1_d_0_d_1_same_states}, this is:
            \begin{equation}
             \Pi_{i = 0}^{k-1} (p_{\bar{s}_{(2r+n_{\text{cyc}}')i}} + 1/4) \cdot    \Pi_{i = 0}^{k-1} ( 1 -( p_{\bar{s}_{(2r+n_{\text{cyc}}')i}}  - 1/4) ).
            \end{equation}
        Rearranging gives us:
        \begin{equation}
             \text{FPR}(f') \leq    \Pi_{i = 0}^{k-1} ( -p_{\bar{s}_{(2r+n_{\text{cyc}}')i}}^2 + p_{\bar{s}_{(2r+n_{\text{cyc}}')i}} + 5/16).
        \end{equation}
        Because $\forall p,\,-p^2 +p + 5/16 \leq 9/16$, we can upper-bound this as:
        \begin{equation}
             \text{FPR}(f') \leq    \Pi_{i = 0}^{k-1} ( -p_{\bar{s}_{(2r+n_{\text{cyc}}')i}}^2 + p_{\bar{s}_{(2r+n_{\text{cyc}}')i}} + 5/16) \leq \left(\frac{9}{16}\right)^k
        \end{equation}
       \item As a uniform convergence bound:
       \begin{equation}
           FPR(f)  \leq |\mathcal{F}| \left(\frac{9}{16}\right)^{ k}
       \end{equation}
\end{itemize}
    \item Finally, we take a union bound over all values of $n'_{\text{cyc}}$. To do this, we must lower bound $k$ for all values of $n'_{\text{cyc}}$.
    First, note that
    \begin{equation}
      \hat{t}_{\text{mix}} +N|\hat{a}| -1  \geq \hat{t}_{\text{mix}} + n'_{\text{cyc}}|\hat{a}| -1  \geq r |\hat{a}|
    \end{equation}

    Then:
    \begin{equation}
        \begin{split}
            k  &=
            \lfloor \frac{ (\hat{t}_{\text{mix}} + N\cdot|\hat{a}| -1) \cdot (2 \cdot n_{\text{samp. cyc.}} -1) + N\cdot|\hat{a}|\cdot n_{\text{samp. cyc.}}    + r \cdot |\hat{a}| }{2r\cdot |\hat{a}|+n_{\text{cyc}}'\cdot |\hat{a}|}   \rfloor\\
            &\geq
            \lfloor \frac{ r\cdot|\hat{a}|  \cdot (2 \cdot n_{\text{samp. cyc.}} -1) + N\cdot|\hat{a}|\cdot n_{\text{samp. cyc.}}    + r \cdot |\hat{a}| }{2r\cdot |\hat{a}|+N\cdot |\hat{a}|}   \rfloor\\
            &\geq \lfloor n_{\text{samp. cyc.}} \rfloor \\
            &\geq  n_{\text{samp. cyc.}}.
        \end{split}
    \end{equation}
    So we have that, by union bound over all values of $n'_{\text{cyc}}$:
       \begin{equation}
           FPR(f)  \leq (N-1) |\mathcal{F}| \left(\frac{9}{16}\right)^{n_{\text{samp. cyc.}}} \label{eq:cyclefind_pt_2_bound}
       \end{equation}
\end{itemize}

\textbf{Collecting \texorpdfstring{$\mathcal{D}_i'$}{Di'}:}

We now know that $s_{CF}$ eventually enters a cycle of length $n_{\text{cyc}} \cdot |\hat{a}|$, where $n_{\text{cyc}}$ is known. This is the latent-state cycle $[s^{cyc*}_0,s^{cyc*}_1,...,s^{cyc*}_{|\hat{a}|\cdot n_{\text{cyc}}-1}]$ mentioned in Proposition \ref{prop:cyclefind_correctness}.

Depending on the value of  $n_{\text{cyc}}$, we might now need to extend $x_{CF}$ (and, respectively $s_{CF}$) by making additional loops through $\hat{a}$, until the length of $x_{CF}$ is at least $c$, where:
\begin{equation}
     c := 2 \cdot n_{\text{cyc}} \cdot  |\hat{a}| \cdot \left((n_{\text{samp.}}-1) \cdot  \left\lceil \frac{\hat{t}_{\text{mix}}}{ |\hat{a}| \cdot n_{\text{cyc}}} \right\rceil + 1 \right) + \hat{t}_{\text{mix}}+\max((N-n_{\text{cyc}})\cdot |\hat{a}|,\hat{t}_{\text{mix}}).
\end{equation}

This will entail taking an additional $\max(c-c_{\text{init}}, 0)$ steps on $M$.
Note that in the worst case, this means that CycleFind takes a total of at most:
\begin{equation}
\begin{split}
     \max(c_{\text{init}}, c) \leq \max\Big(&(2\hat{t}_{\text{mix}} + 3N\cdot|\hat{a}| -2) \cdot n_{\text{samp. cyc.}}   - N\cdot|\hat{a}|,\\
     &2 \cdot (\hat{t}_{\text{mix}} +  |\mathcal{S}|\cdot |\hat{a}|  -1 ) \cdot n_{\text{samp.}}  + 1
     \Big)  \\&+   \max(N \cdot |\hat{a}| - |\hat{a}| -\hat{t}_{\text{mix}},0) + 1 \text{ actions}.\label{eq:cyclefind_actions}
\end{split}
\end{equation}
We now define how to collect two datasets for each position in the cycle in $s_{CF}$, $\mathcal{D}^A_i$ and $\mathcal{D}^B_i$ for each $i \in \{0,..., n_{\text{cyc}}\cdot |\hat{a}| -1\}$. Specifically we take:

\begin{equation}
\begin{split}
      \mathcal{D}^A_i = &\Bigg\{x_j|  \exists k\in \{0,...,n_{\text{samp.}}-1\} :\\
      &  j = k \cdot\Bigg(   |\hat{a}| \cdot n_{\text{cyc}} \cdot \left\lceil \frac{\hat{t}_{\text{mix}}}{ |\hat{a}| \cdot n_{\text{cyc}}} \right\rceil \Bigg) + n_{0} + (i -n_{0}) \%  \Big( n_{\text{cyc}} \cdot |\hat{a}|\Big)  \Bigg\}  
\end{split}
\end{equation}
where we let
\begin{equation}
    n_{0} := \max((N- n_{\text{cyc}})\cdot |\hat{a}|, \hat{t}_{\text{mix}}), 
\end{equation}
and 
\begin{equation}
\begin{split}
      \mathcal{D}^B_i = &\Bigg\{x_j|  \exists k\in \{0,...,n_{\text{samp.}}-1\} :\\
      & j = k \cdot\Bigg(   |\hat{a}| \cdot n_{\text{cyc}} \cdot \left\lceil \frac{\hat{t}_{\text{mix}}}{ |\hat{a}| \cdot n_{\text{cyc}}} \right\rceil \Bigg) + n_{0}' + (i -n_{0}') \%  \Big(n_{\text{cyc}} \cdot |\hat{a}|\Big)  \Bigg\}  
\end{split}
\end{equation}
where 
\begin{equation}
    n_{0}' :=  n_0 + (n_{\text{samp.}}-1) \cdot  \Bigg(   |\hat{a}| \cdot n_{\text{cyc}} \cdot \left\lceil \frac{\hat{t}_{\text{mix}}}{ |\hat{a}| \cdot n_{\text{cyc}}} \right\rceil \Bigg) + |\hat{a}| \cdot n_{\text{cyc}} + \hat{t}_{\text{mix}}.
\end{equation}

Note that, because we know that $s_{CF}$ enters a cycle of length $n_{\text{cyc}} \cdot |\hat{a}|$ after at most $(N-n_{\text{cyc}})\cdot |\hat{a}| \leq n_0$ transitions, we have that,
    \begin{equation}
    \forall i,j \geq n_0,\,\,\, i \equiv j \pmod{n_{\text{cyc}}  \cdot |\hat{a}|} \rightarrow s_{i} = s_{j}  \label{eq:cf_pt2_equiv}.
\end{equation}
    Therefore, because:
    \begin{equation}
         k \cdot\Bigg(   |\hat{a}| \cdot n_{\text{cyc}} \cdot \left\lceil \frac{\hat{t}_{\text{mix}}}{ |\hat{a}| \cdot n_{\text{cyc}}} \right\rceil \Bigg) + n_{0} + (i -n_{0}) \%  \Big(n_{\text{cyc}} \cdot |\hat{a}|\Big)   \equiv i \pmod{n_{\text{cyc}}  \cdot |\hat{a}|} 
    \end{equation}
    (and a similar equivalence holds for $n_0'$), we have that, for any fixed $i$, all observations in $\mathcal{D}_i^A \uplus \mathcal{D}_i^B$ must share the same latent state. Also, for any fixed $i$, all observations in  $\mathcal{D}_i^A$ and $\mathcal{D}_i^B$ are collected at least $\hat{t}$ steps apart. Additionally, $\mathcal{D}_i^A$ and $\mathcal{D}_i^B$ are defined solely in terms of $n_{\text{cyc}}$ and $\hat{a}$, and so the selection of samples to put in these sets only depends on the sequence of \textit{latent} states $s$ that the Ex-BMDP traverses, and is therefore defined deterministically and independently of  of $\Omega$ (assuming $n_{\text{cyc}}$ is correctly determined).
    We therefore define 
    \begin{equation}
        \mathcal{D}_i' := \mathcal{D}_i^A \uplus \mathcal{D}_i^B,
    \end{equation}
    and note that all elements in this set both share the same latent state and were collected at least $\hat{t}$ steps apart from one another.
    
    Additionally, for any fixed \textbf{pair} $i,j$, all observations in $\mathcal{D}_i^A \uplus \mathcal{D}_j^B$ are collected at least $\hat{t}$ steps apart from one another.

Using the $c$  samples in $x_{CF}$, this allows us to construct $\mathcal{D}_i^A$ and $\mathcal{D}_i^B$, for each $i \in \{0, n_{\text{cyc}} \cdot |\hat{a}| -1\}$, where 
   $ |\mathcal{D}_i^A| = |\mathcal{D}_i^B|= n_{\text{samp.}}$. See Figure \ref{fig:cyclefind_part_2} for an illustration of the sampling procedure.

\begin{figure}
    \centering
    \includegraphics[width=\linewidth]{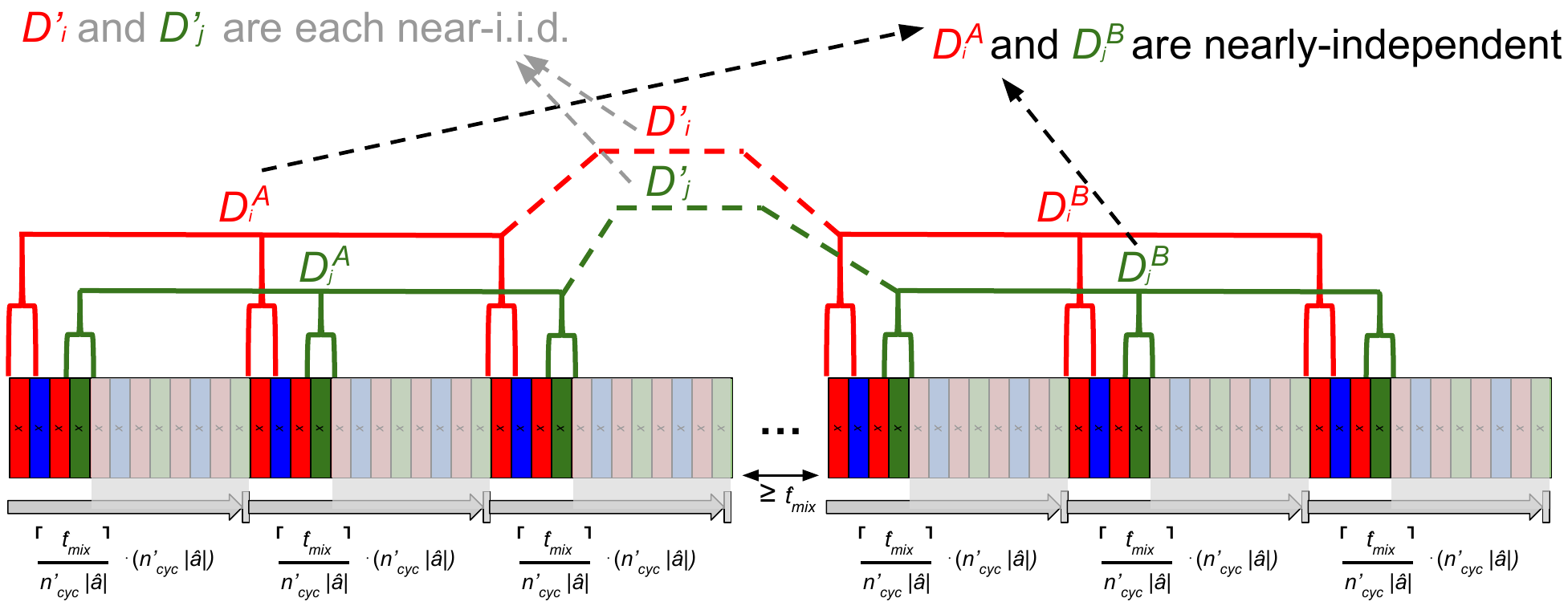}
    \caption{Illustration of the sampling procedure for datasets $\mathcal{D}'_i$. The first goal is to ensure that for \textit{each} cycle position $i$, the samples in $\mathcal{D}'_i$ are sampled $t_{\text{mix}}$ steps apart from each other, and are therefore  nearly i.i.d. We \textit{also} want to ensure that for any \textit{pair} of cycle positions $i,j$, there is a large subset of $\mathcal{D}'_i$ that only contains samples retrieved at least $t_{\text{mix}}$ steps apart from some large subset of $\mathcal{D}'_j$. (This second goal is meant to guarantee that if $\mathcal{D}'_i$ and $\mathcal{D}'_j$ represent the same latent state, then it is unlikely that any classifier exists than can separate the two subsets perfectly, which would be strictly necessary to perfectly separate  $\mathcal{D}'_i$ and $\mathcal{D}'_j$). However, it is \textit{not} necessary for all samples in $\cup_i \mathcal{D}_i$ to be collected $t_{\text{mix}}$ steps apart. Therefore, we collect an observation of \textit{each} cycle position $i \in \{0, n_{\text{cyc}} \cdot |\hat{a}|\}$, all together, every $\hat{t}_{\text{mix}}$ steps (rounded up to the cycle period). We continue until $n_\text{samp}$ observations of each position are collected; then wait $\hat{t}_{\text{mix}}$ steps and collect $n_\text{samp}$  additional observations of each cycle position. This process ensures that  $\mathcal{D}'_i$ and $\mathcal{D}'_j$ contain complementary subsets $\mathcal{D}^{A}_i$ and $\mathcal{D}^{B}_j$, each with at least $n_\text{samp}$ samples, such all samples in $\mathcal{D}^{A}_i \cup \mathcal{D}^{B}_j$ are near-i.i.d.   }
    \label{fig:cyclefind_part_2}
\end{figure}
\textbf{Identifying new latent states from  \texorpdfstring{$\mathcal{D}'$}{D'}:} 

At this point, each set  $\mathcal{D}'_i$  consists of observations of a single latent state $s$, but two such sets   $\mathcal{D}'_i$ and  $\mathcal{D}'_j$ may represent the same latent state, and  $\mathcal{D}'_i$ may contain the same latent state as some previously-collected  $\mathcal{D}(s)$ for some $s \in \mathcal{S}'$. 

In order to identify the newly-discovered latent states to add to $\mathcal{S}'$, and appropriately update $\mathcal{D}(\cdot)$ and $T'$, we proceed as follows:
\begin{itemize}
    \item For $i \in \{0,...,n_{\text{cyc}}\cdot |\hat{a}|-1\}$:
        \begin{itemize}
        \item For each $s \in \mathcal{S}'$, use the training oracle to learn a classifier $f \in \mathcal{F}$, with  $D_0 := \mathcal{D}(s)$  and $D_1 := \mathcal{D}'_i $. If $f$ can distinguish $D_0$ from $D_1$ with 100\% training set accuracy, then we conclude (with high probability) that $\mathcal{D}(s)$ and $\mathcal{D}'_i$ represent two different latent states.  Otherwise, we conclude that $\mathcal{D}(s)$ and $\mathcal{D}'_i$ both represent the same latent state.
        \item If $\mathcal{D}'_i$ is identified as representing some already-discovered latent state $s \in \mathcal{S}'$ then discard $\mathcal{D}'_i$. (Or, we can update $\mathcal{D}(s)$ by merging the samples in $\mathcal{D}'_i$ into it; this choice does not affect our analysis -- however, we should avoid doing this if $\mathcal{D}(s)$ was either defined, or already updated, during this call of CycleFind: this is because two datasets $\mathcal{D}'_i$ and $\mathcal{D}'_j$ from the same iteration of CycleFind may contain observations that were sampled fewer than $\hat{t}_{\text{mix}}$ steps apart from each other, which would break the inductive assumption on $\mathcal{D}(s)$ if they are both merged into $\mathcal{D}(s)$.) Record this latent state $s$ as:
        \begin{equation}
            s_{i}^{\text{cyc}} := s
        \end{equation}
        \item Otherwise, if $\mathcal{D}'_i$ does not represent the any latent state $s \in \mathcal{S}'$, then $\mathcal{D}'_i$ (and $\mathcal{D}''_i$) represents a newly-discovered state. We update $\mathcal{S}'$ by inserting a new state $s'$ into it, and update $\mathcal{D}(s)$ by associating $s'$ with $\mathcal{D}'_i$ :
        \begin{equation}
        \begin{split}
                \mathcal{S}' \leftarrow   \mathcal{S}'  \cup \{s'\}\\
                \mathcal{D}(s') := \mathcal{D}'_i
        \end{split}
        \end{equation}
        Finally, we also record this new latent state as :
        \begin{equation}
            s_{i}^{\text{cyc}} := s'
        \end{equation}
        \end{itemize}
        \item To analyse the success rate of using the training oracle to determine if a given  $\mathcal{D}(s)$ and $\mathcal{D}'_i$  represent the same latent state, consider the following:
        
        \begin{itemize}
            \item If $\mathcal{D}(s)$ and $\mathcal{D}'_i$ contain different latent states, then $f$ will be able to distinguish $D_0$ from $D_1$, deterministically, with 100\% accuracy on the training set (due to our realizability assumption.)
            \item Otherwise,  $\mathcal{D}(s)$ and $\mathcal{D}'_i$ both contain samples entirely of the \textit{same} latent state, $s$. Then, either:
            \begin{itemize}
                \item The latent state $s$ was identified before the current run of the CycleFind subroutine.  Therefore, some subset of samples $D_0' \subseteq D_0 = \mathcal{D}(s)$  were added to $\mathcal{D}(s)$ before the current run of CycleFind, such that $|D_0'| \geq n_{\text{samp.}}$ Let $D_1' := D_1 = \mathcal{D}'_i$, and note also that all samples in $D_0' \uplus D_1'$ were collected at least $\hat{t}_{\text{mix}}$ steps apart from one another. (This is by inductive hypothesis for $\mathcal{D}(s)$, by construction for $\mathcal{D}'_i$, and by the fact that each run of CycleFind starts by ``wasting'' at least $\hat{t}_{\text{mix}}$ steps.)

                \item The latent state $s$ was identified during the current run of CycleFind, such that $\mathcal{D}(s) = \mathcal{D}_j'$ for some $j < i$. Then let $D_0' := \mathcal{D}_j^A \subseteq D_0$ and $D_1' := \mathcal{D}_i^B \subseteq D_1$. Note that $|D_0'|, \, |D_1'|\geq n_{\text{samp.}}$, and all observations in  $D_0' \uplus D_1'$ were collected at least $\hat{t}_{\text{mix}}$ steps apart from one another.
                \item In either case, the choice of samples to include in  $D_0' \uplus D_1'$ was made deterministically and independently of $\Omega$ (by construction and/or assumption).
            \end{itemize}
            
            We define $p_s$ as in Equation \ref{eq:def_ps}. Note that the samples in $D_0'$ and $D_1'$ were observed at least $\hat{t}_{\text{mix}}$ steps apart, at deterministically-chosen timesteps. Then Lemma \ref{lemma:near_iid} is applicable, and  we have that  the probability that an arbitrary $f' \in \mathcal{F}$ returns 1 on all samples from  $D_1 \supseteq D_1'$; and also returns 0 on all samples from  $D_0\supseteq D_0'$ is at most:
       \begin{equation}
       \begin{split}
                     (p_{s} + 1/4)^{n_{\text{samp.}}} \cdot    (1 - (p_{s} - 1/4))^{n_{\text{samp.}}} &=\\
                      (-p_{s} ^2  + p_{s}  + 5/16)^{n_{\text{samp.}}} &\leq
                      \left(\frac{9}{16}\right)^{n_{\text{samp.}}} 
                     \end{split}
       \end{equation}
        As a uniform convergence bound, we then have that:
               \begin{equation}
                   FPR(f)  \leq |\mathcal{F}| \left(\frac{9}{16}\right)^{n_{\text{samp.}}}
               \end{equation}
        \end{itemize}
        \item Note that at all iterations, $|\mathcal{S}'| \leq |\mathcal{S}|$, so  we train at most $ |\mathcal{S}| \cdot n_{\text{cyc}} \cdot |\hat{a}| $ classifiers. Therefore, by union bound, the total failure rate is bounded by:
        \begin{equation}
            \Pr(\text{fail}) \leq  |\mathcal{S}| \cdot n_{\text{cyc}} \cdot |\hat{a}| \cdot  |\mathcal{F}| \left(\frac{9}{16}\right)^{n_{\text{samp.}}} \label{eq:cyclefind_pt_4_bound}
        \end{equation}
        \item Note that the states  $s_{i}^{\text{cyc}}$ now represent the latent states associated with the cyclic part of $x_{CF}$. Because we know the actions in the cycle, we can use this information to update $T'$. Specifically, $\forall i\in \{0,1,...,|\hat{a}| \cdot n_{\text{cyc}}-1\}$, the action taken after $s_{i}^{\text{cyc}}$ and before $s_{(i+1)\%(|\hat{a}| \cdot n_{\text{cyc}})}^{\text{cyc}}$ is  $\hat{a}_{i\%|\hat{a}|}$. We can then update:
        \begin{equation}
T'(s_i^{\text{cyc}},a_{i\%|\hat{a}|}) \leftarrow s_{(i+1)\%(|\hat{a}| \cdot n_{\text{cyc}})}^{\text{cyc}}
        \end{equation}
       
    \end{itemize}

    \textbf{Return the updated \texorpdfstring{$\mathcal{S}'$,$\mathcal{D}$, and $T'$, as well as $s_{\text{curr.}}$}{S', D', and T', as well as s curr}:}
    
    Returning the updated $\mathcal{S}'$,$\mathcal{D}$, and $T'$ is straightforward.  Note that the choice to assign or merge a given $\mathcal{D}'_i$ in to a given $\mathcal{D}(s)$ depends only on the latent states $s$ in the datasets, and so is independent of $\Omega$. 
    
    We have then shown that, if CycleFind succeeds, then states $[s_0^{\text{cyc}},.., s_{n_{\text{cyc}} \cdot |\hat{a}| -1}^{\text{cyc}}]$, have been added to $\mathcal{S'}$, if they were not present already. These states correspond to the states in the cycle $[s_0^{cyc*},.., s_{n_{\text{cyc}} \cdot |\hat{a}| -1}^{cyc*}]$, and the corresponding transitions have been added to $T'$; furthermore, the datasets $\mathcal{D}(s)$ have been updated appropriately. 
    
    To determine the learned latent state of the Ex-BMDP $M$ after CycleFind is run, simply note that this is equivalently the state corresponding to the observation $x_{c}$, which we know belongs to dataset $\mathcal{D}'_{ c \% (n_{\text{cyc}} \dot |\hat{a}|)}$. We then know that this observation must have the same latent state as the rest of $\mathcal{D}'_{ c \% (n_{\text{cyc}} \dot |\hat{a}|)}$; that is, the observation $s^{\text{cyc}}_{ c \% (n_{\text{cyc}} \cdot |\hat{a}|)}$.

    The total failure rate for the CycleFind algorithm can be bounded by union bound from the failure rates of Parts 1 and 3 of the algorithm; that is, Equations \ref{eq:cyclefind_pt_2_bound} and \ref{eq:cyclefind_pt_4_bound}. That is:
    \begin{equation}
    \begin{split}
                    \Pr(\text{fail}) &\leq (N -1) \cdot  |\mathcal{F}| \left(\frac{9}{16}\right)^{n_{\text{samp. cyc.}}} +
                      |\mathcal{S}| \cdot n_{\text{cyc}} \cdot |\hat{a}| \cdot  |\mathcal{F}| \left(\frac{9}{16}\right)^{n_{\text{samp.}}}\\ 
                    &\leq (N-1) \cdot  |\mathcal{F}| \frac{ \delta}{ 4 |\mathcal{A}| N  (N-1)   |\mathcal{F}|} +
                    N^3 \cdot (D+1) \cdot  |\mathcal{F}| \frac{ \delta}{ 4 |\mathcal{A}| N^4  (\hat{D}+1)   |\mathcal{F}|} \\
                    &\leq  \frac{ \delta}{ 2 |\mathcal{A}|\cdot N } .         
        \end{split}
    \end{equation}
   All claims of Proposition \ref{prop:cyclefind_correctness} have therefore been proven.
\end{proof}
\subsubsection{STEEL Phase 1}
\label{sec:steel_phase_1}
Note that given a fixed $\hat{a}$, there might be multiple different state cycles that could be discovered by CycleFind. However, only one will \textit{actually} be discovered, depending on the state that the Ex-BMDP starts in as well as the \textit{not-yet-discovered parts of the state dynamics}. For example, consider an Ex-BMDP $\mathcal{A} := \{L,R\}$, $\mathcal{S} := \{\hat{1},\hat{2} \}$ with the following latent dynamics:
 \begin{center}
\begin{tikzpicture}
\node[draw=black,shape=circle](s1){$\hat{1}$};
\node[draw=black,shape=circle](s2)[ right = of s1]{$\hat{2}$};
\path  (s1) edge [bend right]["R",swap](s2);
\path  (s2) edge [bend right]["R"](s1);
\path  (s1) edge [loop left]["L"](s1);
\path  (s2) edge [loop right]["L"](s2);
\end{tikzpicture}
 \end{center}
If we set $\hat{a} := [L]$, then, depending on the initial state, CycleFind will \textit{either} collect samples of $\hat{1}$ and discover its self-loop transition, \textit{or} collect samples of $\hat{2}$  and discover its self-loop transition.

In order to learn the complete latent dynamics of the Ex-BMDP, we maintain a representation $T'$ of the partial transition graph that has been discovered so far, and iteratively apply CycleFind using, at each step, an action sequence $\hat{a}$ that is guaranteed to produce a cycle that is \textit{not} entirely contained in the partial graph discovered so far.

Note that this is \textit{not} as simple as choosing a sequence of actions that leads to an unknown state transition from the final latent state of the Ex-BMDP reached in the previous iteration of CycleFind. For example, consider the following partially-learned latent state dynamics (with  $\mathcal{A} := \{L,R\}$):
 \begin{center}
\begin{tikzpicture}
\node[draw=black,shape=circle](s1){$\hat{1}$};
\node[draw=black,shape=circle](s2)[ right = of s1]{$\hat{2}$};
\node[draw=black,shape=circle](s3)[ below = of s2]{$\hat{3}$};
\node[draw=black,shape=circle](s4)[ left = of s3]{$\hat{4}$};
\node[draw=white,shape=circle](s5)[ left = of s4]{};

\path  (s1) edge [bend right]["R",swap](s2);
\path  (s2) edge [bend right]["R"](s1);
\path  (s3) edge [bend right]["R",swap](s4);
\path  (s4) edge [draw=red,bend right,swap][red,"R"](s5);
\path  (s3) edge [loop right]["L"](s3);
\path  (s1) edge [loop left]["L"](s1);
\path  (s2) edge [bend left]["L",swap](s3);
\path  (s4) edge [bend left]["L"](s1);
\end{tikzpicture}
 \end{center}
Here, the only unknown transition from an known state is the effect of the `R' action from $\hat{4}$. Suppose we know from the previous iteration of CycleFind that the current latent state of the Ex-BMDP is $s_{\text{curr.}} = \hat{3}$. Naively, it might seem as if running CycleFind with $\hat{a} = [R,R]$ would learn some new transition dynamics or states, because it would navigate through the unknown transition. However, this might not be the case in fact. In particular, the `R'-transition from $\hat{4}$ might only be visited \textit{transiently}. For example, suppose the \textit{full} latent dynamics of the Ex-BMDP are as follows (with the currently unknown parts shown in gray): 
 \begin{center}
\begin{tikzpicture}
\node[draw=black,shape=circle](s1){$\hat{1}$};
\node[draw=black,shape=circle](s2)[ right = of s1]{$\hat{2}$};
\node[draw=black,shape=circle](s3)[ below = of s2]{$\hat{3}$};
\node[draw=black,shape=circle](s4)[ left = of s3]{$\hat{4}$};
\node[draw=lightgray,shape=circle](s5)[lightgray, left = of s4]{$\hat{5}$};
\node[draw=lightgray,shape=circle](s6)[lightgray, above = of s5]{$\hat{6}$};

\path  (s1) edge [bend right]["R",swap](s2);
\path  (s2) edge [bend right]["R"](s1);
\path  (s3) edge [bend right]["R",swap](s4);
\path  (s4) edge [draw=lightgray,bend right,swap][lightgray,"R"](s5);
\path  (s3) edge [loop right]["L"](s3);
\path  (s1) edge [loop left]["L"](s1);
\path  (s2) edge [bend left]["L",swap](s3);
\path  (s4) edge [bend left]["L"](s1);
\path  (s5) edge [draw=lightgray,bend left,swap][lightgray,"R"](s6);
\path  (s5) edge [draw=lightgray,bend right,swap][lightgray,"L"](s4);
\path  (s6) edge [draw=lightgray,bend right,swap][lightgray,"R"](s1);
\path  (s6) edge [draw=lightgray,loop left][lightgray,"L"](s6);
\end{tikzpicture}
 \end{center}
 Then, if we run CycleFind with $\hat{a} = [R,R]$, it will converge to a cycle between the nodes $\hat{1}$ and $\hat{2}$:

 \begin{center}
\begin{tikzpicture}
\node[draw=black,shape=circle](s1){$\hat{1}$};
\node[draw=black,shape=circle](s2)[ right = of s1]{$\hat{2}$};
\node[draw=black,shape=circle](s3)[ below = of s2]{$\hat{3}$};
\node[draw=black,shape=circle](s4)[ left = of s3]{$\hat{4}$};
\node[draw=lightgray,shape=circle](s5)[lightgray, left = of s4]{$\hat{5}$};
\node[draw=lightgray,shape=circle](s6)[lightgray, above = of s5]{$\hat{6}$};

\path  (s1) edge [bend right]["R",swap](s2);
\path  (s2) edge [bend right]["R"](s1);
\path  (s3) edge [bend right]["R",swap](s4);
\path  (s4) edge [draw=lightgray,bend right,swap][lightgray,"R"](s5);
\path  (s3) edge [loop right]["L"](s3);
\path  (s1) edge [loop left]["L"](s1);
\path  (s2) edge [bend left]["L",swap](s3);
\path  (s4) edge [bend left]["L"](s1);
\path  (s5) edge [draw=lightgray,bend left,swap][lightgray,"R"](s6);
\path  (s5) edge [draw=lightgray,bend right,swap][lightgray,"L"](s4);
\path  (s6) edge [draw=lightgray,bend right,swap][lightgray,"R"](s1);
\path  (s6) edge [draw=lightgray,loop left][lightgray,"L"](s6);

\path[draw=red,fill=red,line width=0.1cm](s3) edge [bend left=70]["{\color{red} R}"](s4);
\path[draw=red,fill=red,line width=0.1cm](s4) edge [bend left=70]["{\color{red} R}"](s5);
\path[draw=red,fill=red,line width=0.1cm](s5) edge [bend left=70]["{\color{red} R}"](s6);
\path[draw=red,fill=red,line width=0.1cm](s6) edge [bend left=70]["{\color{red} R}"](s1);
\path[draw=red,fill=red,line width=0.1cm](s1) edge [bend left=70]["{\color{red} R}"](s2);
\path[draw=red,fill=red,line width=0.1cm](s2) edge [bend left=70, pos=.35]["{\color{red} R}"](s1);
\end{tikzpicture}
 \end{center}

Note that the states ($\hat{1}$ and $\hat{2}$) and associated transitions that CycleFind converges on were \textit{already} explored, so we learn no new information from this application of CycleFind. 

Instead, at each iteration, we design $\hat{a}$ so that \textit{no cycle of the actions $\hat{a}$ can be entirely contained within the currently-known partial transition graph.} We show that the length of the resulting $\hat{a}$ is at most $(D+1)|\mathcal{S}|$.

We proceed as follows. Note that in the first iteration, before any latent states are known, we can simply use $\hat{a} = [a]$ for some arbitrary $a \in \mathcal{A}$. Otherwise, we use the following algorithm:
\begin{itemize}
    \item Initialize $\mathcal{B}$ with all of the previously-learned latent states (that is, $\mathcal{B} \leftarrow \mathcal{S}'$.)
    \item While  $\mathcal{B}$ is non-empty:
    \begin{itemize}
        \item Remove some latent state $s$ from $\mathcal{B}$.
        \item Use Dijkstra's algorithm to compute a shortest path in the partial transition graph that starts at  $s$ and ends at any not-yet-defined transition. (that, is, any transition for which $T'(\cdot,\cdot) = \bot$). ( Note this must be possible. Otherwise, because all states can reach each other in the full latent dynamics, if there were no such undiscovered edge in the same connected component as $s$, then we would know that we have \textit{already} found the complete latent dynamics.) Also note that the shortest path through such an edge can have length at most $D+1$, simply because:
        \begin{itemize}
            \item The length of the shortest path from $s$ to the state with the undefined edge in the \textit{full} transition graph $T$ is at most $D$.
            \item Suppose that some transition on this shortest path is missing in the partial transition graph $T'$. Concretely, let $d$ be the first state along this path such that the transition out of it is missing. Then, we have a path from $s$ to $d$ of length less than $D$, and we know that $d$ is itself missing a transition. Then $d$ can be used in place of the original state with the undefined edge: it has a missing transition, and it is at most $D$ steps, in $T'$, from $s$.
        \end{itemize}
        \item Let $\hat{a}'$ be the list of actions on the path we have found from $s$ through an undefined edge. Note that taking actions $\hat{a}'$ from $s$ will result in taking an unknown transition, and that $|\hat{a}'| \leq D+1$.
        \item Replace $\mathcal{B}$ with the set of states that can result from starting at any state $s' \in  \mathcal{B}$ and then taking actions $\hat{a}'$, according to the learned partial transition graph $T'$. For a given $s' \in  \mathcal{B}$, if this path leads to an unknown transition, do not insert any state corresponding to $s'$ into the new $\mathcal{B}$ .

        \item Concatenate $\hat{a}'$ to the end of $\hat{a}$.
    \end{itemize}
\end{itemize}

Note that at every iteration, $|\mathcal{B}|$ decreases by at least 1, so the algorithm runs for at most $|\mathcal{S}'|$ iterations, so the final length of $\hat{a}$ is at most $(D+1)|\mathcal{S}'|$. Also note that, by construction, taking all actions in $\hat{a}$  will traverse an unknown transition at some point, starting at \textit{any} latent state that has been learned so far. As a consequence, any cycle traversed by taking  $\hat{a}$ repeatedly must involve at least one transition (and possibly some states) that are not yet included in the partial transition graph. Therefore, applying CycleFind using an $\hat{a}$ constructed in this way is guaranteed to learn at least one new transition. Therefore, to fully learn the transition dynamics, we must apply CycleFind at most $|\mathcal{A}|\cdot |\mathcal{S}|$ times. 

Also, note that the process of constructing $\hat{a}$ at each iteration depends only on the partial latent dynamics model $T'$, which in turn depends only on the choices of $\hat{a}$ in previous invocations of CycleFind, and ultimately these depend only on the starting latent state $s_{\text{init}}$ and the ground-truth dynamics $T$. Therefore $\hat{a}$ is at every iteration independent of $\Omega$, as required by Proposition \ref{prop:cyclefind_correctness}.

Then, assuming CycleFind succeeds at each invocation, by the end of Phase 1, STEEL will have discovered the complete state set $\mathcal{S}$ and transition function $T$, up to permutation. 

\subsubsection{STEEL Phase 2}

In the next phase, once we have completely learned $T'$ (that is, once there are no state-action pairs  $s \in \mathcal{S}', a \in \mathcal{A}$ for which $T'(s,a)$ is undefined), we collect additional samples of each latent state, until the total number of samples collected for each is at least $d$, where:
\begin{equation}
  d :=  \lceil\frac{3|\mathcal{S}'| \ln(16|\mathcal{S}'|^2|\mathcal{F}|/\delta)}{\epsilon} \rceil . \label{eq:main_alg_d}
\end{equation}
We can leverage the fact that we now have a complete latent transition graph as well as knowledge of the current latent state $s_{\text{curr.}}$ from the last iteration of CycleFind. 

To do this, we proceed as follows:
\begin{itemize}
    \item Use $T'$ to plan a sequence of actions $\bar{a}$  such that:
    \begin{itemize}
        \item $\hat{t}_{\text{mix}} \leq |\bar{a}| \leq \max(|\mathcal{S}|\cdot D,\hat{t}_{\text{mix}} + D)$, and 
        \item Taking the actions in $\bar{a}$ starting at $s_{\text{curr.}}$ traverses a cycle. That is, \begin{equation}
T'(T'(T'(...T'(s_{\text{curr.}},\bar{a}_0),\bar{a}_1),\bar{a}_2),...,\bar{a}_{|\bar{a}|-1}))=  s_{\text{curr.}}
\end{equation} and,
        \item Taking the actions in $\bar{a}$ starting at $s_{\text{curr.}}$ visits all latent states in $s \in \mathcal{S}' \setminus \{s_{\text{curr.}}\}$ such that $|\mathcal{D}(s)|< d$ at least once.
    \end{itemize}
    Note that planning such a sequence $\bar{a}$ always must be possible. For example, starting at $s_{\text{curr.}}$, we can greedily plan a route to the nearest as-of-yet-unvisited latent state $s \in \mathcal{S}' \setminus \{s_{\text{curr.}}\}$ such that $|\mathcal{D}(s)| < d$ and repeat until all such states have been visited, and then navigate back to $s_{\text{curr.}}$. This takes at most $|\mathcal{S}'|\cdot D$ steps. If this sequence has length less than $\hat{t}_{\text{mix}}$, then we can insert a self-loop at any state in the sequence (such as the state with the shortest self-loop) and repeat this self-loop as many times as necessary until $|\bar{a}| \geq  \hat{t}_{\text{mix}}$. Because all self-loops are of length at most $D+1$, this can ``overshoot'' by at most $D$, so we have that  $|\bar{a}| \leq \max(|\mathcal{S}|\cdot D,\hat{t}_{\text{mix}} + D)$.
    \item Execute the actions in $\bar{a}$ on $M$ once without collecting data, in order to ensure that within each set $\mathcal{D}(s)$, the newly-collected observations are collected at least $\hat{t}_{\text{mix}}$ steps after observations added in previous phases of STEEL.
    \item Repeatedly take the actions $\bar{a}$ on $M$, collecting the observation of each latent state $s$ the first time in the cycle that it is visited and inserting the observation into $\mathcal{D}(s)$, until $ \forall s \in \mathcal{S}',\,\,|\mathcal{D}(s)| \geq d$. Note that for a given latent state $s$, we collect observations of $s$ exactly $|\bar{a}|$ steps apart. Because $|\hat{a}| \geq \hat{t}_{\text{mix}}$, this ensures that the observation added to  $\mathcal{D}(s)$ are collected at least $\hat{t}_{\text{mix}}$ steps apart. Because each $\mathcal{D}(s)$ will already contain at least one sample (from CycleFind), this process will take at most $d-1$ iterations.
    \item Note that if for some state $s \in \mathcal{S}'$, $|\mathcal{D}(s)|$ reaches $d$ during some iteration of taking the actions $\bar{a}$, then for the next iteration, we can re-plan a shorter $\bar{a}$ that does not necessarily visit $s$. However, when we do this, we must execute the newly-planned cycle $\bar{a}$ once without collecting data, in order to ensure that all observation added to any particular $\mathcal{D}(s)$ are collected at least $\hat{t}_{\text{mix}}$ steps apart. This could require at most $|\mathcal{S}|$ additional iterations through some $\bar{a}$.
        \end{itemize}

   This process will ensure that  $ \forall s \in \mathcal{S}',\,\,|\mathcal{D}(s)| \geq d$, in at most 
   \begin{equation}
     (d -1 + |\mathcal{S}|)\cdot \max(D + \hat{t}_{\text{mix}}, |\mathcal{S}| \cdot D)  \text{ steps.}\label{eq:main_thm_phase_2_samples}
   \end{equation}
Also, note that all samples collected during this phase are sorted into the appropriate dataset $\mathcal{D}(s)$ entirely by open-loop planning on $T'$, so the choice of samples in each $\mathcal{D}(s)$ remains independent of $\Omega$, and, in principle, can be a deterministic function of $s_{\text{init}}$.
\subsubsection{STEEL Phase 3}

Finally, for each learned latent state $s \in \mathcal{S}'$, we train a classifier $f_s$ to distinguish  $D_0:= \mathop{\uplus}_{s' \in \mathcal{S}' \setminus \{s\}}  \mathcal{D}(s') $ from  $D_1:=  \mathcal{D}(s)$. This set of classifiers allows us to perform one-versus-rest classification to identify the latent state of any observation $x$, by defining:
\begin{equation}
    \phi'(x) := \arg\max_s f_s(x).
\end{equation}
Along with the learned transition dynamics $T'$, this should be a sufficient representation of the latent dynamics. 

We want to guarantee that when the \textit{exogenous} state $e$ of the Ex-BMDP is at equilibrium (that is, is sampled from its stationary distribution), for any latent state $s$,  if $x \sim \mathcal{Q}(s,e)$, then the probability that $f_s(x) = 1$ and, $\forall$ $s' \neq s$, $f(s') = 0$ is at least $1 - \epsilon$. By union bound, we can do this by ensuring that the accuracy of \textit{each} classifier $f_s$, on each latent state $s' \in \mathcal{S}$, is at least  $1 - \epsilon/|\mathcal{S}|$. By realizability, we know that $\forall s,$ there exists some classifier $f_s^* \in \mathcal{F}$ for which $f_s^*(s) = 1$ iff $\phi^*(x) = s$. Therefore, we need to upper-bound the probability that $\exists$  $f'  \in \mathcal{F}$, for which $ \forall x \in D_1, f(x) = 1$ and  $ \forall x \in D_0, f(x) = 0$, but for which either 
\begin{equation}
  \Pr_{ x \sim \mathcal{Q}(s,e); e \sim \pi} ( f'(x) = 0) \geq \frac{\epsilon}{|\mathcal{S}|} \label{eq:steel_pt_3_fpr_1}
\end{equation}
Or, for any $s' \neq s$,
\begin{equation}
  \Pr_{ x \sim \mathcal{Q}(s',e); e \sim \pi} ( f'(x) = 1) \geq \frac{\epsilon}{|\mathcal{S}|}. \label{eq:steel_pt_3_fpr_2}
\end{equation}
For all $s$, all samples in $\mathcal{D}(s)$ are collected at least $t_\text{mix}$ samples apart at timesteps chosen deterministically and independently of $\Omega$. Therefore, for any single fixed classifier $f$,we can use Lemma \ref{lemma:MC_gen} and the fact that $\forall s, |\mathcal{D}(s)| \geq d$ to bound the false-positive rates in Equations \ref{eq:steel_pt_3_fpr_1} and \ref{eq:steel_pt_3_fpr_2} as:
\begin{equation}
   \Pr\left(  \forall x \in \mathcal{D}(s),\,f'(x) = 1 \bigwedge  \Pr_{ x \sim \mathcal{Q}(s,e); e \sim \pi} f'(x) = 0 \geq \frac{\epsilon}{|\mathcal{S}|}  \right) \leq  8 e^{- \frac{\epsilon \cdot d}{3|\mathcal{S}|}}.
\end{equation}
and, $  \forall s' \in \mathcal{S}' \setminus \{s\},$
\begin{equation}
 \Pr\left(  \forall x \in \mathcal{D}(s'),\,f'(x) = 0 \bigwedge  \Pr_{ x \sim \mathcal{Q}(s',e); e \sim \pi} f'(x) = 1 \geq \frac{\epsilon}{|\mathcal{S}|}  \right) \leq  8 e^{- \frac{\epsilon \cdot d}{3|\mathcal{S}|}}.
\end{equation}
Taking the union bound bound over $s$ and all  latent states $s' \in \mathcal{S}' \setminus \{s\}$ gives a total false positive rate for learning $f'$ as $f_s$ as:
\begin{equation}
   \text{FPR}(f',s) \leq  8 |\mathcal{S}|e^{- \frac{\epsilon \cdot d}{3 |\mathcal{S}|}}.
\end{equation}
Taking the union bound over all $f \in \mathcal{F}$ gives:
\begin{equation}
   \text{FPR}(f_s) \leq  8 |\mathcal{S}| |\mathcal{F}| e^{- \frac{\epsilon \cdot d}{3 |\mathcal{S}|}}.
\end{equation}
Finally, taking the union bound over each classifier $f_s$ gives:
\begin{equation}
   \text{FPR} \leq  8 |\mathcal{S}|^2 |\mathcal{F}| e^{- \frac{\epsilon \cdot d}{3 |\mathcal{S}|}}.
\end{equation}
\subsubsection{Bounding the overall failure rate and sample complexity}
Here, we bound the overall failure rate of the STEEL algorithm. We do this by separately bounding the failure rate of the first phase of the algorithm (the repeated applications of CycleFind) and the final phase of the algorithm, the learning of classifiers $f_s$. We let each of these failure rates be at most $\delta/2$. Therefore, we must have, over the at most $|\mathcal{S}|\cdot|\mathcal{A}|$ iterations of CycleFind, a failure rate of at most
\begin{equation}
    \frac{\delta}{2} \geq |\mathcal{S}| \cdot |\mathcal{A} | \cdot     \Pr(\text{CycleFind Fails}).
\end{equation}
This is satisfied by Proposition \ref{prop:cyclefind_correctness} (noting that $N 
\geq |\mathcal{S}|$). The number of samples needed for these  $|\mathcal{S}|\cdot|\mathcal{A}|$ iterations of CycleFind, each with $|\hat{a}| \leq |\mathcal{S}|\cdot (D+1)$, is (by Equation \ref{eq:cyclefind_actions}) at most:

\begin{equation}
\begin{split}
 |\mathcal{S}|\cdot|\mathcal{A}| \cdot \Bigg(\max\Big(&(2\hat{t}_{\text{mix}} + 3N\cdot |\mathcal{S}|\cdot (D+1) -2) \cdot n_{\text{samp. cyc.}}   - N\cdot|\mathcal{S}|\cdot (D+1),\\
     &2 \cdot (\hat{t}_{\text{mix}} +  |\mathcal{S}|^2\cdot (D+1)  -1 ) \cdot n_{\text{samp.}}  + 1
     \Big)  \\&+   \max((N  -1) \cdot |\mathcal{S}|\cdot (D+1) -\hat{t}_{\text{mix}},0) + 1 \Bigg)
\end{split}
\end{equation}

Which is upper-bounded by:
\begin{equation}
\begin{split}
 |\mathcal{S}|\cdot|\mathcal{A}| \cdot \Bigg(\max\Big(&(2\hat{t}_{\text{mix}} + 3N\cdot |\mathcal{S}|\cdot (D+1) -2) \cdot n_{\text{samp. cyc.}}, \\
     &2 \cdot (\hat{t}_{\text{mix}} +  |\mathcal{S}|^2\cdot (D+1)  -1 ) \cdot n_{\text{samp.}}  \\
    &\,\,\,\,\,\,\, +  2+ \max((N  -1) \cdot |\mathcal{S}|\cdot (D+1) -\hat{t}_{\text{mix}},0) 
     \Big)  \Bigg)
\end{split}
\end{equation}

where $n_{\text{samp.}}$ is given by Equation \ref{eq:def_n_samp} and $n_{\text{samp. cyc.}}$ is given by Equation \ref{eq:def_n_cyc_samp}.
Meanwhile, the overall failure rate of the second phase is at most 
\begin{equation}
    \frac{\delta}{2} \geq 8|\mathcal{S}|^2|\mathcal{F}| e^{- \frac{\epsilon \cdot d}{3 |\mathcal{S}|}}. \label{eq:main_thm_phase_3_bound}
\end{equation}
Solving for $d$ in Equation \ref{eq:main_thm_phase_3_bound} gives:
\begin{equation}
   \frac{3|\mathcal{S}| \ln(16|\mathcal{S}|^2|\mathcal{F}|/\delta)}{\epsilon} \leq d. \label{eq:main_thm_phase_3_bound_2}
\end{equation}
Which is indeed satisfied by Equation \ref{eq:main_alg_d}, given that the structure of the latent dynamics were correctly learned using CycleFind in the first phase of the algorithm, so that $|\mathcal{S'}| = |\mathcal{S}|$.

By Equation \ref{eq:main_thm_phase_2_samples}, we then know that the number of samples needed for this phase is at most:
\begin{equation}
  \max(D + \hat{t}_{\text{mix}}, |\mathcal{S}| \cdot D) \cdot \left(\lceil\frac{3|\mathcal{S}| \ln(16|\mathcal{S}|^2|\mathcal{F}|/\delta)}{\epsilon} \rceil - 1  + |\mathcal{S}| \right). 
\end{equation}
Combining the number of samples over both phases and simplifying gives us an overall upper-bound of the number of required samples of:
\begin{equation}
\begin{split}
     \max(D + \hat{t}_{\text{mix}}, |\mathcal{S}| \cdot D)\cdot&\left(\frac{3|\mathcal{S}| \ln(16|\mathcal{S}|^2|\mathcal{F}|/\delta)}{\epsilon} + |\mathcal{S}|  \right)  +\\
   |\mathcal{S}|\cdot|\mathcal{A}| \cdot \Bigg(\max\Big(&(2\hat{t}_{\text{mix}} + 3N\cdot |\mathcal{S}|\cdot (D+1) -2) \cdot \\
     &\;\;\;\;\;\;\;\;\;\; (\ln \Big( 4 |\mathcal{A}|\cdot N \cdot (N-1)  \cdot |\mathcal{F}|/\delta\Big) / \ln(16/9) +1), \\
     &2 \cdot (\hat{t}_{\text{mix}} +  |\mathcal{S}|^2\cdot (D+1)  -1 ) \cdot \\
     &\;\;\;\;\;\;\;\;\;\; (\ln \Big( 4 |\mathcal{A}|\cdot N^4 \cdot (\hat{D}+1)  \cdot |\mathcal{F}|/\delta\Big) / \ln(16/9) +1) \\
    &\,\,\,\,\,\,\,\,\,\,\,\, +  2+ \max((N  -1) \cdot |\mathcal{S}|\cdot (D+1) -\hat{t}_{\text{mix}},0) 
     \Big)  \Bigg)
\end{split}
\end{equation}
This gives us a big-O sample complexity of (using that $\hat{D} \leq N$ and  $|\mathcal{S}| \leq N$  ):
\begin{equation}
\begin{split}
      \mathcal{O}\Big(&|\mathcal{S}|^2 \cdot N \cdot D \cdot |\mathcal{A}|  \cdot(\log |\mathcal{A}| +\log(N) + \log |\mathcal{F}| + \log(1/\delta)) +\\
    &|\mathcal{S}| \cdot |\mathcal{A}|   \cdot \hat{t}_{\text{mix}} 
    \cdot(\log |\mathcal{A}| +\log(N) + \log |\mathcal{F}| + \log(1/\delta)) +\\
    &|\mathcal{S}|^2 \cdot D \cdot (1/\epsilon) \cdot 
    (\log(|\mathcal{S}|) + \log |\mathcal{F}| + \log(1/\delta)) +\\
    &|\mathcal{S}| \cdot \hat{t}_{\text{mix}}\cdot (1/\epsilon) \cdot 
    (\log(|\mathcal{S}|) + \log |\mathcal{F}| + \log(1/\delta) )\Big)  
\end{split} 
\end{equation}
Using the notation $\mathcal{O}^*(f(x)) :=  \mathcal{O}(f(x) \log(f(x)))$, we can write this as:
\begin{equation}
      \mathcal{O}^*\Big( N D |\mathcal{S}|^2  |\mathcal{A}| \cdot\log \frac{|\mathcal{F}|}{\delta} 
      + |\mathcal{S}| |\mathcal{A}|  \hat{t}_{\text{mix}}  \cdot\log \frac{N|\mathcal{F}|}{\delta} 
      +\frac{|\mathcal{S}|^2  D}{\epsilon} \cdot 
    \log \frac{|\mathcal{F}|}{\delta} +\frac{|\mathcal{S}|\hat{t}_{\text{mix}}}{\epsilon} \cdot 
    \log \frac{|\mathcal{F}|}{\delta}\Big). 
\end{equation}
Therefore, we have shown that, with high probability, STEEL returns (up to permutation) the correct latent dynamics for the Ex-BMDP, and a high-accuracy latent-state encoder $\phi'$, within the sample-complexity bound stated in Theorem \ref{thm:steel}. This completes the proof.

\section{Useful Lemmata}
\begin{lemma}
Consider an Ex-BMDP $M =\langle \mathcal{X},\mathcal{A},\mathcal{S},\mathcal{E},\mathcal{Q},T,\mathcal{T}_e,\pi_{\mathcal{E}}^\text{init} \rangle$ starting at an arbitrary latent endogenous state $s_{\text{init}} \in \mathcal{S}$. Let $\Omega$ represent the sample space of the three sources of randomness in $M$: that is, $\mathcal{T}_e$, $\mathcal{Q}$, and the initial exogenous latent state $e_{init}$. Assume that all actions on $M$ are taken deterministically and independently of $\Omega$. Let $f \in \mathcal{X} \rightarrow \{0,1\}$ be a fixed arbitrary function, and for each $s \in \mathcal{S}$ let 
\begin{equation}
            p_s:= \Pr(f(x) =1 | x \sim \mathcal{Q}(s,e) , e \sim \pi_{\mathcal{E}}).
            \label{eq:def_ps}
\end{equation}
where $\pi_{\mathcal{E}}$ is the stationary distribution of $\mathcal{T}_e$. Consider a trajectory sampled from this Ex-BMDP  denoted as $x_\text{traj} := x'_0, x'_1,x'_2...$, with endogenous latent states $s_\text{traj} := s'_0, s'_1,s'_2...$ (so that $s'_0 = s_{\text{init}}$), and exogenous states $e_\text{traj} := e'_0, e'_1,e'_2...$ . Then, for any \textit{fixed} $t_1,t_2 \in \mathbb{N}$, selected \textit{independently of} $\Omega$, where $t_2 - t_1 \geq t_{\text{mix}}$, we have that:
\begin{equation}
            p_s - 1/4 \leq  \Pr(f(x'_{t_2}) =1 | x'_{\leq t_1},  s'_{t_2} = s  ) \leq p_s + 1/4, \label{eq:def_ps_ineq}
\end{equation}
where $x'_{\leq t_1}$ denotes the observations in the trajectory $x_\text{traj}$ up to and including $x'_{t_1}$.

Note that this does \text{not} necessarily hold if $t_1,\, t_2$ depend on $\Omega$.
\label{lemma:near_iid}
\end{lemma}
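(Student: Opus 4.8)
The plan is to recast the two-sided estimate \eqref{eq:def_ps_ineq} as a single total-variation bound on the conditional law of the exogenous state $e'_{t_2}$, and then control that distance via the Markov property of $\mathcal{T}_e$ together with the mixing-time hypothesis. First I would introduce, for each fixed $e \in \mathcal{E}$, the quantity $g_s(e) := \Pr(f(x) = 1 \mid x \sim \mathcal{Q}(s,e))$, so that $g_s : \mathcal{E} \to [0,1]$ and, by \eqref{eq:def_ps}, $p_s = \E_{e \sim \pi_{\mathcal{E}}}[g_s(e)]$. Because the endogenous states are a deterministic function of $s_{\text{init}}$ and the ($\Omega$-independent) action sequence, conditioning on $s'_{t_2} = s$ pins down $s'_{t_2}$ without constraining the exogenous chain; and because $x'_{t_2}$ is emitted from $\mathcal{Q}(s'_{t_2}, e'_{t_2})$ conditionally independently of the history given $(s'_{t_2}, e'_{t_2})$, I can write $\Pr(f(x'_{t_2}) = 1 \mid x'_{\leq t_1}, s'_{t_2} = s) = \E[g_s(e'_{t_2}) \mid x'_{\leq t_1}]$, the expectation being over the posterior law $\nu := \Pr(e'_{t_2} = \cdot \mid x'_{\leq t_1})$. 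Since $g_s$ is $[0,1]$-valued, the elementary inequality $|\E_\mu[g] - \E_\nu[g]| \leq \|\mu - \nu\|_{\text{TV}}$ then reduces the entire lemma to proving $\|\nu - \pi_{\mathcal{E}}\|_{\text{TV}} \leq 1/4$.

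Next I would establish the decomposition $\nu(e') = \sum_{e} \mu(e)\, \Pr(e'_{t_2} = e' \mid e'_{t_1} = e)$, where $\mu(e) := \Pr(e'_{t_1} = e \mid x'_{\leq t_1})$ is the posterior over the exogenous state at time $t_1$. This is the crux of the argument: I must show that, conditioned on $e'_{t_1}$, the future state $e'_{t_2}$ is independent of the observation history $x'_{\leq t_1}$. The point is that $x'_{\leq t_1}$ is a function of $(e'_0, \dots, e'_{t_1})$ and the emission randomness up to time $t_1$, whereas $e'_{t_2}$ is a function of $e'_{t_1}$ and the exogenous transition randomness strictly after $t_1$; by the Markov property of $\mathcal{T}_e$ and the conditional independence of the emissions given the latent states, these two are independent given $e'_{t_1}$. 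Crucially, this uses that the actions, and hence the entire endogenous trajectory and the indices $t_1, t_2$, are fixed independently of $\Omega$, so that conditioning on the past observations introduces no selection effect coupling them to the future noise.

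Finally I would invoke the mixing assumption. For $t_2 - t_1 \geq t_{\text{mix}}$, the defining property of $t_{\text{mix}} = t_{\text{mix}}(1/4)$ together with the monotonicity in $k$ of the mixing profile $d(k) := \max_e \|\Pr(e_{t+k} = \cdot \mid e_t = e) - \pi_{\mathcal{E}}\|_{\text{TV}}$ (a contraction/data-processing fact) gives $\|\Pr(e'_{t_2} = \cdot \mid e'_{t_1} = e) - \pi_{\mathcal{E}}\|_{\text{TV}} \leq 1/4$ for every $e$. Writing $\pi_{\mathcal{E}} = \sum_e \mu(e)\, \pi_{\mathcal{E}}$ and applying the triangle inequality (convexity of total variation) to the decomposition of $\nu$ then yields $\|\nu - \pi_{\mathcal{E}}\|_{\text{TV}} \leq \sum_e \mu(e)\cdot \tfrac14 = \tfrac14$, which closes the proof. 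I expect the \emph{main obstacle} to be the conditional-independence step of the second paragraph: one must be careful to condition on the \emph{full} exogenous state $e'_{t_1}$ rather than merely on the observation $x'_{t_1}$ (which need not determine $e'_{t_1}$, since we make no block assumption on $\mathcal{E}$), and to use the $\Omega$-independence of the actions — precisely the hypothesis whose necessity the lemma's closing remark emphasizes.
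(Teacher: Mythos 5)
Your proof is correct and takes essentially the same route as the paper's: both arguments reduce the claim, via the Markov property (conditional independence of $e'_{t_2}$ from $x'_{\leq t_1}$ given $e'_{t_1}$, which is exactly the paper's step ``$e'_{t_2}$ depends on $x'_{\leq t_1}$ only through $e'_{t_1}$'') together with the mixing-time definition, to a total-variation bound of $1/4$ on the conditional law of $e'_{t_2}$, which is then converted into the stated two-sided bound on $\Pr(f(x'_{t_2})=1 \mid x'_{\leq t_1}, s'_{t_2}=s)$. Your write-up is simply more explicit than the paper's terse version --- the posterior decomposition over $e'_{t_1}$, the inequality $|\E_\mu[g]-\E_\nu[g]| \leq \|\mu-\nu\|_{\mathrm{TV}}$ for $[0,1]$-valued $g$, and the monotonicity of the mixing profile needed when $t_2 - t_1 > t_{\text{mix}}$ are all left implicit there.
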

\begin{proof}
    From the definition of mixing time:
    \begin{equation}
\forall e \in \mathcal{E}, \| \Pr(e_{t_2}' = \cdot | e_{t_1}' = e  )-  \pi_\mathcal{E}\|_\text{TV} \leq \frac{1}{4}.   
    \end{equation}
Then,
\begin{equation}
   \forall e \in \mathcal{E}, \Big| \Pr(f(x) =1 | x \sim \mathcal{Q}(s'_{t_2},e') , e' \sim \pi_{\mathcal{E}})  -  \Pr_{ x \sim \mathcal{Q}(s'_{t_2},e'_{t_2})}(f(x) =1 |e_{t_1}' = e ) \Big| \leq \frac{1}{4}.
\end{equation}
Because $e'_{t_2}$ depends on $ x'_{\leq t_1}$ only through $e_{t_1}'$, we have:
\begin{equation}
   \Big| \Pr(f(x) =1 | x \sim \mathcal{Q}(s'_{t_2},e') , e' \sim \pi_{\mathcal{E}})  -  \Pr_{ x \sim \mathcal{Q}(s'_{t_2},e'_{t_2})}(f(x) =1 | x'_{\leq t_1}) \Big| \leq \frac{1}{4}.
\end{equation}
Then by Equation \ref{eq:def_ps},
\begin{equation}
   \Big|p_{s'_{t_2}} -  \Pr_{ x \sim \mathcal{Q}(s'_{t_2},e'_{t_2})}(f(x) =1 | x'_{\leq t_1}) \Big| \leq \frac{1}{4},
\end{equation}
    which directly implies Equation \ref{eq:def_ps_ineq}. Note that this is does not hold if $t_1$ and $t_2$ can depend on $\Omega$. For example, if we define $t_2 := (\min t \text{ such that } f(x'_{t}) = 0 \text{ and }t \geq t_1 + t_\text{mix}) $, then trivially Equation \ref{eq:def_ps_ineq} may not apply.
\end{proof}
\begin{lemma}
    Consider an irreducible, aperiodic Markov chain $\mathcal{T}_e$  on states $\mathcal{E}$ with mixing time $t_{\text{mix}}$ and stationary distribution $\pi_\mathcal{E}$, and an arbitrary function $f : \mathcal{E} \rightarrow \{0,1\}$. Suppose $\Pr_{e \sim \pi_\mathcal{E}}(f(e) = 1) \leq 1 - \epsilon $. Consider a fixed sequence of $N$ timesteps $t_1, ..., t_N$, where $\forall i,\,t_{i} - t_{i-1} \geq t_{\text{mix}}$. Now, for a trajectory $e_0,...,e_{t_N}$ sampled from the Markov chain, starting at an arbitrary $e_0$, we have that:
    \begin{equation}
        \Pr(\bigcap_{i = 1} ^{N} f(e_{t_i}) = 1) \leq 8 e ^{-\frac{\epsilon \cdot N}{3}}.
    \end{equation} \label{lemma:MC_gen}
\end{lemma}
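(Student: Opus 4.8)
The plan is to recast the event as \emph{avoidance} of the ``good'' set $G := \{e \in \mathcal{E} : f(e) = 0\}$, which by hypothesis has stationary mass $\pi_\mathcal{E}(G) \ge \epsilon$, and to show that the chain, observed at the widely-spaced times $t_1 < \cdots < t_N$, cannot dodge $G$ at every one of them except with exponentially small probability. The basic engine is the same conditional near-independence already used in Lemma~\ref{lemma:near_iid}: since each gap satisfies $t_i - t_{i-1} \ge t_{\text{mix}}$, the Markov property gives $\Pr(f(e_{t_i}) = 1 \mid \mathcal{H}_{t_{i-1}}) = \Pr(f(e_{t_i}) = 1 \mid e_{t_{i-1}})$, and the definition of mixing time forces this conditional probability to lie within $1/4$ of $q := \Pr_{\pi_\mathcal{E}}(f = 1) \le 1 - \epsilon$, uniformly in the conditioning state.

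First I would dispatch the easy regime. Telescoping this per-step bound through the filtration gives $\Pr(\bigcap_i f(e_{t_i}) = 1) \le (q + 1/4)^N \le (1 - \epsilon + 1/4)^N$. Whenever $\epsilon$ is bounded away from $0$ (concretely $\epsilon \gtrsim 1/3$) this base is already below $e^{-\epsilon/3}$, so the product is at most $e^{-\epsilon N/3} \le 8\,e^{-\epsilon N/3}$, with room to spare.

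The substantive case is small $\epsilon$, where $1 - \epsilon + 1/4 > 1$ and the single-step bound is vacuous: with only a $t_{\text{mix}}(1/4)$ separation, one sample carries too little resampling to make progress. Here I would exploit the geometric form of mixing, $d(k\,t_{\text{mix}}) \le 2^{-k}$ (submultiplicativity of total-variation distance, using $\bar d(t_{\text{mix}}) \le 2\,d(t_{\text{mix}}) \le 1/2$), to look back several samples: conditioning $f(e_{t_i})$ on $\mathcal{H}_{t_{i-k}}$ replaces the slack $1/4$ by $2^{-k}$, so for $k \approx \log_2(1/\epsilon)$ the conditional probability of $f=1$ drops to $1 - \Theta(\epsilon)$. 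Grouping indices into residue classes modulo $k$ makes each class a subsequence separated by $\ge k\,t_{\text{mix}}$, whose telescoped product is genuinely contracting. To avoid paying a $\log(1/\epsilon)$ factor in the exponent (which a single subsampled class would cost), I would instead argue via an exponential-moment / multiplicative-Chernoff bound on the visit count $\sum_i \mathbbm{1}[f(e_{t_i}) = 0]$, whose conditional mean is $\ge \epsilon - 2^{-k}$ per far-apart sample and whose deviations are controlled by the same geometric mixing; the target event is exactly $\{\sum_i \mathbbm{1}[f(e_{t_i}) = 0] = 0\}$, a lower-tail large-deviation event. The prefactor $8$ and the constant $1/3$ should emerge from (i) the initial transient, since $e_0$ is arbitrary rather than stationary, and (ii) the boundary and rounding losses of the blocking, both absorbable into a single constant factor.

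The main obstacle is precisely this small-$\epsilon$ regime: the $1/4$ total-variation slack at the mixing-time scale makes the naive per-step supermartingale useless, so the decay cannot be read off one sample at a time. The delicate part is combining the geometric mixing across \emph{all} $N$ samples — rather than a lossy subsample — so that the exponent stays linear in $\epsilon N$ with no spurious $\log(1/\epsilon)$, while tracking the constants so that they collapse to the clean $8\,e^{-\epsilon N/3}$. I expect this Markov-chain concentration step, and not the reduction or the large-$\epsilon$ case, to be where essentially all the work lies.
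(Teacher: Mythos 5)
Your large-$\epsilon$ case and your diagnosis of where the difficulty lies are both correct, and they match the paper: for large $\epsilon'$ the paper also just telescopes the per-sample bound $\Pr(f(e_{t_i})=1\mid\text{past}) \le 5/4 - \epsilon'$ and uses $(5/4-\epsilon') \le e^{-\epsilon'/3}$ (note this inequality only holds for $\epsilon' \gtrsim 0.37$, not $\epsilon \gtrsim 1/3$ as you state, a minor quantitative slip). The problem is that for the small-$\epsilon$ regime --- which you yourself call ``where essentially all the work lies'' --- your proposal contains no proof, only the name of one. Invoking ``an exponential-moment / multiplicative-Chernoff bound on the visit count, controlled by the same geometric mixing'' begs the question: the subsampled process $Y_i := e_{t_i}$ is a time-inhomogeneous, generally non-reversible chain of which you know only a uniform Doeblin-type property ($\|\Pr(Y_{i+1}\in\cdot \mid Y_i = y) - \pi_\mathcal{E}\|_{\mathrm{TV}} \le 1/4$ for all $y$), and the standard Markov-chain Chernoff bounds (Gillman, Lezaud, Paulin) assume reversibility or spectral-gap-type hypotheses and carry different constants, so none can be cited off the shelf; one would first have to prove the needed concentration inequality in this setting. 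Indeed, if you actually run your MGF plan, $\Pr\left(\sum_i \mathbbm{1}[f(Y_i)=0] = 0\right) \le \E\left[\prod_i e^{-\lambda \mathbbm{1}[f(Y_i)=0]}\right]$ leads you to bound the total mass of a product of operators of the form (transition kernel) $\times$ (multiplication by a two-valued function), which is exactly the object the paper analyzes directly --- so the concentration step you deferred \emph{is} the lemma.

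The paper's proof fills precisely this hole in a self-contained way. It decomposes any unnormalized distribution as $a\pi_\mathcal{E} + b\bar v$ with $\int\bar v = 0$, $\|\bar v\|_1 = 1$, and shows that one ``survive-and-advance'' operator $M^{t_{i+1}-t_i}\Gamma$ (with $\Gamma$ denoting multiplication by $\mathbbm{1}[f=1]$) obeys the elementwise recursion $(a',b')^T \le \left(\begin{smallmatrix} 1-\epsilon' & 1/2 \\ \epsilon'/2 & 1/2\end{smallmatrix}\right)(a,b)^T$: conditioning on avoiding a set of stationary mass $\epsilon'$ removes an $\epsilon'$ fraction of the stationary component and creates deviation mass at rate at most $\epsilon'/2$, while mixing contracts existing deviation by $1/2$ but can feed at most half of it back into the stationary component. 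The zero-visit probability is the $a$-component after $N$ iterations, so it is controlled by the dominant eigenvalue $\bigl((3/2-\epsilon') + \sqrt{\epsilon'^2+1/4}\,\bigr)/2 \approx 1 - \epsilon'/2$, which the paper verifies is $\le e^{-\epsilon'/3}$ on $[0,0.44]$ --- an interval overlapping the easy regime, which yields the stated $8e^{-\epsilon N/3}$. That two-dimensional bookkeeping, showing the per-sample decay is $1-\Theta(\epsilon)$ rather than the vacuous $1-\epsilon+1/4$ without losing a $\log(1/\epsilon)$ factor, is the substance of the lemma and is exactly what your proposal leaves undone.
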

\begin{proof}
Define $\epsilon'$ as:
\begin{equation}
    \epsilon' := \Pr_{e \sim \pi_\mathcal{E}}(f(e) = 0). \label{eq:eq:mk_sampling_lemma_pf_eps_prime}
\end{equation}
Note that we know that $\epsilon' \geq \epsilon$. Now, fix any $i \geq t_{\text{mix}}$. Let $M^i$ denote the linear operator on state distributions corresponding to taking $i$ steps of the Markov chain: that is, $M^i\pi$ gives the distribution after $i$ time steps. Also, let $\Pi$ be the linear operator defined as:
 \begin{equation}
     \Pi \pi := \left( \int_{e\ \in \mathcal{E}}  \pi(e) de \right) \pi_e. \label{eq:mk_sampling_lemma_pf_Pi} 
 \end{equation}
Define the linear operator $\Delta^i$ as:
 \begin{equation}
     \Delta^i := M^i - \Pi.
 \end{equation}
 By linearity and noting that both $M^i$ and $\Pi$ are  stochastic operators, we have that, for any $\pi$,
  \begin{equation}
    \int_{e\in\mathcal{E}}  (\Delta^i \pi)(e) de = 0.  \label{eq:mk_sampling_lemma_pf_Delta} 
 \end{equation}
 Also, from the definition of mixing time, we have, for any function $\pi$:
 \begin{equation}
    \|\Delta^i \pi\|_1  \leq  \frac{1}{2}  \|\pi\|_1 . \label{eq:mk_sampling_lemma_pf_Delta_half} 
 \end{equation}
 (To see this, for any $e \in \mathcal{E}$ consider the unit vector $\vec{e}$. Then, note that:
\begin{equation}
 \| \Delta^i \vec{e} \|_1  = 2\cdot \| \pi_e - M^i \vec{e} \|_{TV} \leq \frac{1}{2}.
\end{equation}
Then, for any $\pi$, we have:
\begin{equation}
     \| \Delta^i \pi \|_1 = \| \int_{e \in \mathcal{E}} \pi(e) \Delta^i \vec{e}  \,de\|_1 \leq  \int_{e \in \mathcal{E}} |  \pi(e) | \| \Delta^i \vec{e}  \|_1  de\leq  \frac{1}{2}  \|\pi\|_1.)
\end{equation}
Because $\pi_e$ is a stationary distribution of $M^i$, we also have that:
\begin{equation}
    \Delta^i \pi_e = (|M^i - \Pi) \pi_e = \pi_e -\pi_e = 0. \label{eq:mk_sampling_lemma_pf_Delta_pi_e_0}
\end{equation}
 Additionally, let $\Gamma$  be the linear operator defined as:
 \begin{equation}
     \Gamma \pi := \int_{e\in \mathcal{E}}   \pi(e)  f(e) \vec{e}\,\,  de
 \end{equation}
In other words, $(\Gamma\pi)(e) := f(e) \cdot \pi (e)$. One useful fact about this operator is that, for any function $\pi_0$ such that    $  \int_{e\in\mathcal{E}} \pi_0(e) de = 0,$ we have that
\begin{equation}
  \int_{e\in\mathcal{E}} (\Gamma \pi_0)(e) de \leq \frac{1}{2}  \| \pi_0 \|_1. \label{eq:mk_sampling_lemma_pf_gamma} 
\end{equation}
(To see this, note that we have:
\begin{equation}
  \int_{e\in\mathcal{E}} (\Gamma \pi_0)(e) de  +    \int_{e\in\mathcal{E}} ((I -\Gamma) \pi_0)(e) de   = 0,
\end{equation}
and also that, because $\Gamma \pi_0$ and $(I-\Gamma)\pi_0$ are nonzero for disjoint $e$'s:
\begin{equation}
  \|\Gamma \pi_0\|_1 + \|(I-\Gamma) \pi_0)\|_1 =    \| \pi_0 \|_1 . \label{eq:mk_sampling_lemma_pf_gamma_pi} 
\end{equation}
Then, we also have:
\begin{equation}
   \Big|\int_{e\in\mathcal{E}} (\Gamma \pi_0)(e) de \Big| \leq  \|\Gamma \pi_0\|_1,
\end{equation}
and 
\begin{equation}
   \Big|\int_{e\in\mathcal{E}} (\Gamma \pi_0)(e) de \Big| =  \Big|\int_{e\in\mathcal{E}} ((I -\Gamma) \pi_0)(e) de \Big| \leq  \|(I-\Gamma) \pi_0\|_1.
\end{equation}
Combining these equations and inequalities yields Equation \ref{eq:mk_sampling_lemma_pf_gamma}.)

Now, consider the operator $ M^i \Gamma$. This operator, when applied to a probability distribution $\pi$, yields the \textit{(unnormalized)} probability density that results from applying $\mathcal{T}_e$ to $e'$ $i$ times, where $e'$ is sampled from $\pi$ conditioned on $f(e') = 1$.  More precisely, it is the density of $e \sim \mathcal{T}_e^i (e') $, scaled down by the probability that  $f(e') = 1$. In other words, it is given by:
\begin{equation}
    ( M^i \Gamma\pi)(e) = p(e_i = e | e \sim \mathcal{T}_e^i (e') \land f(e') = 1 \land   e' \sim \pi) \cdot \Pr_{e' \sim \pi}(f(e') = 1). \label{eq:mk_sampling_lemma_pf_op_def}
\end{equation}
 Now, consider any vector $v$. Note that $v$ always can be uniquely decomposed as follows:
 \begin{equation}
     v := a \pi_e + b \bar{v} \text{   where  }  \int_{e\in\mathcal{E}} \bar{v}(e) de  = 0 \text{   and  }  \|\bar{v}\|_1 = 1  \text{   and  } b \geq 0. \label{eq:mk_sampling_lemma_pf_decomp_def}
 \end{equation}
 (Specifically, we must set $a :=  \int_{e\in\mathcal{E}} v(e) de$ and $b:=  \|v - a\pi_e\|_1$ and  $\bar{v}:=  (v - a\pi_e)/b$.) \\
Assume that $v$ is such that $a \geq 0$. Now, consider the equation:

 \begin{equation}
 v'=  M^i \Gamma v
 \end{equation}
 If we consider the above decomposition, we have:
  \begin{equation}
     a' \pi_e + b '\bar{v}' =  M^i \Gamma  (a \pi_e + b \bar{v})
 \end{equation}
 Note that this can be re-written as:
  \begin{equation}
     a' \pi_e + b '\bar{v}' = \Pi \Gamma  (a \pi_e + b \bar{v}) + \Delta^i \Gamma  (a \pi_e + b \bar{v}). \label{eq:mk_sampling_lemma_pf_decomposition}
 \end{equation}
Note that by Equation \ref{eq:mk_sampling_lemma_pf_Pi}, the image of $\Pi$ can be written in the form $a'\pi_e$, while, by Equation \ref{eq:mk_sampling_lemma_pf_Delta}, the image of $\Delta^i$ can be written as $b' \bar{v}$, where $b'$ and $\bar{v}$ are constrained as in Equation \ref{eq:mk_sampling_lemma_pf_decomposition}. Then, we have (using Equation \ref{eq:mk_sampling_lemma_pf_Pi}):
\begin{equation}
    a' =  \int_{e\ \in \mathcal{E}} (\Gamma  (a \pi_e + b \bar{v}) )(e) de =  a \int_{e\ \in \mathcal{E}} (\Gamma \pi_e ) )(e) de +  b \int_{e\ \in \mathcal{E}} (\Gamma  \bar{v}) )(e) de \label{eq:mk_sampling_lemma_pf_a_prime}
\end{equation}
and:
\begin{equation}
    b'  = \| \Delta^i \Gamma  (a \pi_e + b \bar{v})\|_1 \leq a  \| \Delta^i \Gamma \pi_e \|_1  + b \| \Delta^i \Gamma   \bar{v}\|_1.\label{eq:mk_sampling_lemma_pf_b_prime}
\end{equation}
Now, from Equation \ref{eq:mk_sampling_lemma_pf_a_prime}, we have:
\begin{equation}
    \begin{split}
        a' =&\,   a \int_{e\ \in \mathcal{E}} (\Gamma \pi_e  )(e) de +  b \int_{e\ \in \mathcal{E}} (\Gamma  \bar{v} )(e) de \\
      \leq&\,  a \int_{e\ \in \mathcal{E}} (\Gamma \pi_e  )(e) de +  \frac{b}{2}  \|\bar{v}\|_1  \text{\;\;\;\;\;\;\;\;\;\;(by Equation \ref{eq:mk_sampling_lemma_pf_gamma})}\\
      \leq&\,  a (1-\epsilon') +  \frac{b}{2}  \|\bar{v}\|_1  \text{\;\;\;\;\;\;\;\;\;\;(by Equation \ref{eq:eq:mk_sampling_lemma_pf_eps_prime})}\\
      \leq&\,  a (1-\epsilon') +  \frac{b}{2} \text{\;\;\;\;\;\;\;\;\;\;(by definition of $\bar{v}$)}
    \end{split}
\end{equation}
And, from Equation \ref{eq:mk_sampling_lemma_pf_b_prime}, we have:
\begin{equation}
    \begin{split}
        b' \leq&\, a  \| \Delta^i \Gamma \pi_e \|_1  + b \| \Delta^i \Gamma   \bar{v}\|_1 \\
        \leq&\, a  \| \Delta^i (I - ( I - \Gamma)) \pi_e \|_1  + b \| \Delta^i \Gamma   \bar{v}\|_1 \\
     \leq&\, a  \| \Delta^i\pi_e \|_1  +  a  \| \Delta^i (I - \Gamma)\pi_e \|_1  +   b \| \Delta^i \Gamma   \bar{v}\|_1 \\
     \leq&\, a  \| \Delta^i (I - \Gamma)\pi_e \|_1  +   b \| \Delta^i \Gamma   \bar{v}\|_1 \text{\;\;\;\;\;\;\;\;\;\;(by Equation \ref{eq:mk_sampling_lemma_pf_Delta_pi_e_0})}\\
     \leq&\, \frac{a}{2}  \| (I - \Gamma)\pi_e \|_1  +   \frac{b}{2}  \|\Gamma   \bar{v}\|_1 
    \text{\;\;\;\;\;\;\;\;\;\;(by Equation \ref{eq:mk_sampling_lemma_pf_Delta_half})}\\
     \leq&\, \frac{a\epsilon'}{2} +   \frac{b}{2}  \|\Gamma   \bar{v}\|_1 
    \text{\;\;\;\;\;\;\;\;\;\;(by Equation \ref{eq:eq:mk_sampling_lemma_pf_eps_prime})}\\
     \leq&\, \frac{a\epsilon'}{2} +   \frac{b}{2}  \|\bar{v}\|_1 
    \text{\;\;\;\;\;\;\;\;\;\;(by Equation  \ref{eq:mk_sampling_lemma_pf_gamma_pi}) }\\
     \leq&\, \frac{a\epsilon'}{2} +   \frac{b}{2} 
    \text{\;\;\;\;\;\;\;\;\;\;(by definition of $\bar{v}$.) }
    \end{split}
\end{equation}
We can summarize these results as:
\begin{equation}
    \begin{bmatrix}
a' \\
b'
\end{bmatrix}
\leq
\begin{bmatrix}
1 - \epsilon' & \frac{1}{2} \\
\frac{\epsilon'}{2} & \frac{1}{2}
\end{bmatrix}
\begin{bmatrix}
a \\
b
\end{bmatrix}. \label{eq:mk_sampling_lemma_pf_matrix_ineq}
\end{equation}
where there ``$\leq$'' sign applies elementwise. Also, because the elements of this matrix are all non-negative, we have:
\begin{equation}
    \begin{bmatrix}
x' \\
y'
\end{bmatrix}
\leq
\begin{bmatrix}
x \\
y
\end{bmatrix}
\implies
\begin{bmatrix}
1 - \epsilon' & \frac{1}{2} \\
\frac{\epsilon'}{2} & \frac{1}{2}
\end{bmatrix}
\begin{bmatrix}
x' \\
y'
\end{bmatrix}
\leq
\begin{bmatrix}
1 - \epsilon' & \frac{1}{2} \\
\frac{\epsilon'}{2} & \frac{1}{2}
\end{bmatrix}
\begin{bmatrix}
x \\
y
\end{bmatrix}.  \label{eq:mk_sampling_lemma_pf_matrix_implic}
\end{equation}

Now, let $\pi_{t_1}$ represent the probability distribution of the Markov chain at timestep $t_1$, and $\pi_{t_N + t_{\text{mix}}}$ be the probability distribution at timestep $t_N + t_{\text{mix}}$. Applying Equation \ref{eq:mk_sampling_lemma_pf_op_def} repeatedly gives us that:
\begin{equation}
\begin{split}
       M^{t_{\text{mix}}}\Gamma M^{t_N-t_{N-1}} \Gamma M^{t_{N-1}-t_{N-2}} \Gamma \,...\,   M^{t_2-t_1} \Gamma \pi_{t_1}   &= \\\pi_{t_N + t_{\text{mix}}} \cdot \Pr(f(e_{t_1}) = 1 ) \cdot \Pr(f(e_{t_2}) = 1 |f(e_{t_1}) = 1 ) \, ...\, &\\\Pr(f(e_{t_{N-1}}) = 1 | \cap_{i = 1} ^{N-2} f(e_{t_i}) =  1 ) \cdot \Pr(f(e_{t_{N}}) = 1  | \cap_{i = 1} ^{N-1} f(e_{t_i)} =  1 )  &
\end{split}
\end{equation}
This gives us that:
\begin{equation}
      \int_{e\in \mathcal{E}} (M^{t_{\text{mix}}}\Gamma M^{t_N-t_{N-1}} \Gamma \,...\,  M^{t_2-t_1} \Gamma \pi_{t_1})(e) de  = \ \Pr(\cap_{i = 1} ^{N} f(e_{t_i}) =  1 )  
\end{equation}
where the right-hand side is the probability that we are ultimately trying to bound.

Now, let $v_0 : = \pi_{t_1}$;  for $1 \leq i \leq N-1$, let $v_i := M^{t_{i+1} - t_{i}} \Gamma M^{t_{i} - t_{i-1}} \Gamma ... \Gamma \pi_{t_1}$; and finally let $v_N:= M^{t_{\text{mix}}} \Gamma M^{t_{N} - t_{N-1}} \Gamma  ... \Gamma \pi_{t_1}$. Let $a_i$ and $b_i$ represent the components $a$ and $b$ in the decomposition given in Equation \ref{eq:mk_sampling_lemma_pf_decomp_def} of $v_i$. Note that:
\begin{equation}
     a_{N}\hspace{-3pt}= \hspace{-3pt} \int_{e\in \mathcal{E}} \hspace{-3pt} (M^{t_{\text{mix}}}\Gamma M^{t_N-t_{N-1}} \Gamma \,...\,  M^{t_2-t_1} \Gamma \pi_{t_1})(e) de \hspace{-2pt}=\hspace{-3pt}\ \Pr(\cap_{i = 1} ^{N} f(e_{t_i}) =  1 ). \label{eq:mk_sampling_lemma_pf_an_is_result}
\end{equation}
 Also, note that $\forall j \in [1,N],$ we have that $v_j =  M^i \Gamma v_{j-1}$, for some $i \geq t_{\text{mix}}$. Therefore, by Equation \ref{eq:mk_sampling_lemma_pf_matrix_ineq},
\begin{equation} 
    \begin{bmatrix}
a_{i} \\
b_{i}
\end{bmatrix}
\leq
\begin{bmatrix}
1 - \epsilon' & \frac{1}{2} \\
\frac{\epsilon'}{2} & \frac{1}{2}
\end{bmatrix}
\begin{bmatrix}
a_{i-1} \\
b_{i-1}
\end{bmatrix}.
\end{equation} 
(Additionally, each $a_i$ represents a probability, and so  $ a_i \geq 0$, so Equation \ref{eq:mk_sampling_lemma_pf_matrix_ineq} is applicable.)
Now, due to the relation shown in Equation \ref{eq:mk_sampling_lemma_pf_matrix_implic}, we can apply this inequality recursively:
\begin{equation} 
\begin{split}  
    \begin{bmatrix}
a_{i} \\
b_{i}
\end{bmatrix}
\leq
\begin{bmatrix}
1 - \epsilon' & \frac{1}{2} \\
\frac{\epsilon'}{2} & \frac{1}{2}
\end{bmatrix}
\begin{bmatrix}
a_{i-1} \\
b_{i-1}
\end{bmatrix}
\bigwedge
    \begin{bmatrix}
a_{i-1} \\
b_{i-1}
\end{bmatrix}
\leq
\begin{bmatrix}
1 - \epsilon' & \frac{1}{2} \\
\frac{\epsilon'}{2} & \frac{1}{2}
\end{bmatrix}
\begin{bmatrix}
a_{i-2} \\
b_{i-2}
\end{bmatrix}
&\implies\\
    \begin{bmatrix}
a_{i} \\
b_{i}
\end{bmatrix}
\leq
\begin{bmatrix}
1 - \epsilon' & \frac{1}{2} \\
\frac{\epsilon'}{2} & \frac{1}{2}
\end{bmatrix}
\begin{bmatrix}
a_{i-1} \\
b_{i-1}
\end{bmatrix}
\bigwedge\hspace{20pt} &\\
\begin{bmatrix}
1 - \epsilon' & \frac{1}{2} \\
\frac{\epsilon'}{2} & \frac{1}{2}
\end{bmatrix}
    \begin{bmatrix}
a_{i-1} \\
b_{i-1}
\end{bmatrix}
\leq
\begin{bmatrix}
1 - \epsilon' & \frac{1}{2} \\
\frac{\epsilon'}{2} & \frac{1}{2}
\end{bmatrix}
\begin{bmatrix}
1 - \epsilon' & \frac{1}{2} \\
\frac{\epsilon'}{2} & \frac{1}{2}
\end{bmatrix}
\begin{bmatrix}
a_{i-2} \\
b_{i-2}
\end{bmatrix} &\implies \\
   \begin{bmatrix}
a_{i} \\
b_{i}
\end{bmatrix}
\leq
\begin{bmatrix}
1 - \epsilon' & \frac{1}{2} \\
\frac{\epsilon'}{2} & \frac{1}{2}
\end{bmatrix}
    \begin{bmatrix}
a_{i-1} \\
b_{i-1}
\end{bmatrix}
\leq
\begin{bmatrix}
1 - \epsilon' & \frac{1}{2} \\
\frac{\epsilon'}{2} & \frac{1}{2}
\end{bmatrix}
\begin{bmatrix}
1 - \epsilon' & \frac{1}{2} \\
\frac{\epsilon'}{2} & \frac{1}{2}
\end{bmatrix}
\begin{bmatrix}
a_{i-2} \\
b_{i-2}
\end{bmatrix}  &
\end{split}
\end{equation} 
So that we have:
\begin{equation} 
    \begin{bmatrix}
a_{N} \\
b_{N}
\end{bmatrix}
\leq
\left(
\begin{bmatrix}
1 - \epsilon' & \frac{1}{2} \\
\frac{\epsilon'}{2} & \frac{1}{2}
\end{bmatrix}
\right)^N
\begin{bmatrix}
a_{0} \\
b_{0}
\end{bmatrix}.
\end{equation} 
The matrix $\begin{bmatrix}
1 - \epsilon' & \frac{1}{2} \\
\frac{\epsilon'}{2} & \frac{1}{2}
\end{bmatrix}$ has eigenvalues $\frac{(\frac{3}{2} - \epsilon') \pm \sqrt{\epsilon'^2 + \frac{1}{4}}}{2}$; we can use the closed-form solution to the Nth power of an arbitrary $2\times 2$ matrix given by \cite{williams1992n} to exactly write this upper-bound on $a_N$:
\begin{equation*}
\begin{split}
        a_N  \leq&   \Bigg[\left(\frac{(\frac{3}{2} - \epsilon') + \sqrt{\epsilon'^2 + \frac{1}{4}}}{2} \right)^N  
   \frac{1}{\sqrt{\epsilon'^2 + \frac{1}{4}}}
     \begin{bmatrix} 1 - \epsilon' -  \frac{(\frac{3}{2} - \epsilon') - \sqrt{\epsilon'^2 + \frac{1}{4}}}{2}  \\ \frac{1}{2}\end{bmatrix}^T  \\
    - & \left(\frac{(\frac{3}{2} - \epsilon') - \sqrt{\epsilon'^2 + \frac{1}{4}}}{2} \right)^N  
     \frac{1}{\sqrt{\epsilon'^2 + \frac{1}{4}}}
     \begin{bmatrix} 1 - \epsilon' -  \frac{(\frac{3}{2} - \epsilon') + \sqrt{\epsilon'^2 + \frac{1}{4}}}{2}  \\ \frac{1}{2}\end{bmatrix}^T \Bigg]
     \begin{bmatrix} 1 \\ b_0\end{bmatrix}.
\end{split}
\end{equation*}
Where we are also using that $\pi_{t_1}$ is a normalized probability distribution, so $a_0 = 1$. Simplifying:
\begin{equation*}
\begin{split}
        a_N  \leq&   \Bigg[\left(\frac{(\frac{3}{2} - \epsilon') + \sqrt{\epsilon'^2 + \frac{1}{4}}}{2} \right)^N  
   \frac{1}{\sqrt{\epsilon'^2 + \frac{1}{4}}}
    \left( \frac{1}{4} - \frac{3 \epsilon'}{2} + \frac{\sqrt{\epsilon'^2 + \frac{1}{4}}}{2} + \frac{b_0}{2}\right)
     \\
    - & \left(\frac{(\frac{3}{2} - \epsilon') - \sqrt{\epsilon'^2 + \frac{1}{4}}}{2} \right)^N  
     \frac{1}{\sqrt{\epsilon'^2 + \frac{1}{4}}}
       \left( \frac{1}{4} - \frac{3 \epsilon'}{2} - \frac{\sqrt{\epsilon'^2 + \frac{1}{4}}}{2} + \frac{b_0}{2}\right) \Bigg]
\end{split}
\end{equation*}
This gives us:
\begin{equation}
\begin{split}
        a_N  \leq&   \Bigg[\left|\frac{(\frac{3}{2} - \epsilon') + \sqrt{\epsilon'^2 + \frac{1}{4}}}{2} \right|^N  
   \frac{1}{\sqrt{\epsilon'^2 + \frac{1}{4}}}
    \left| \frac{1}{4} - \frac{3 \epsilon'}{2} + \frac{\sqrt{\epsilon'^2 + \frac{1}{4}}}{2} + \frac{b_0}{2}\right|
     \\
    + & \left|\frac{(\frac{3}{2} - \epsilon') - \sqrt{\epsilon'^2 + \frac{1}{4}}}{2} \right|^N  
     \frac{1}{\sqrt{\epsilon'^2 + \frac{1}{4}}}
       \left| \frac{1}{4} - \frac{3 \epsilon'}{2} - \frac{\sqrt{\epsilon'^2 + \frac{1}{4}}}{2} + \frac{b_0}{2}\right| \Bigg]
\end{split}
\end{equation}
Note that because $\pi_{t_1}$ is a normalized probability distribution, we have that $b_0$ is at most 2 (with this maximum achieved when $\pi_{t_1}$ has disjoint support from $\pi_e$.) Numerically one can see that, for $\epsilon' \in [0,1]$,
\begin{equation}
   -.7 <  \frac{1}{4} - \frac{3 \epsilon'}{2} + \frac{\sqrt{\epsilon'^2 + \frac{1}{4}}}{2} \leq .5
\end{equation}
\begin{equation}
   -1.9 <  \frac{1}{4} - \frac{3 \epsilon'}{2} - \frac{\sqrt{\epsilon'^2 + \frac{1}{4}}}{2} \leq 0
\end{equation}
Additionally, we can bound $ \frac{1}{\sqrt{\epsilon'^2 + \frac{1}{4}}} \leq 2$. Then this gives us:
\begin{equation}
        a_N  \leq   \Bigg[ 4 \left|\frac{(\frac{3}{2} - \epsilon') + \sqrt{\epsilon'^2 + \frac{1}{4}}}{2} \right|^N  
    + 4 \left|\frac{(\frac{3}{2} - \epsilon') - \sqrt{\epsilon'^2 + \frac{1}{4}}}{2} \right|^N  
    \Bigg]
\end{equation}
Because, for  $\epsilon' \in [0,1]$, $\left|\frac{(\frac{3}{2} - \epsilon') - \sqrt{\epsilon'^2 + \frac{1}{4}}}{2} \right| < \left|\frac{(\frac{3}{2} - \epsilon') + \sqrt{\epsilon'^2 + \frac{1}{4}}}{2} \right|$, and using Equation \ref{eq:mk_sampling_lemma_pf_an_is_result}, we can bound:
\begin{equation}
          \Pr(\bigcap_{i = 1} ^{N} f(e_{t_i}) = 1) \leq   8 \left(\frac{(\frac{3}{2} - \epsilon') + \sqrt{\epsilon'^2 + \frac{1}{4}}}{2} \right)^N   \label{eq:mk_sampling_lemma_pf_main_bound_pt_1}
\end{equation}
This bound is clearly somewhat unwieldy; to obtain a more manageable bound, it is helpful to consider the asymptotic behavior near $\epsilon' = 0$. Letting $\delta :=  \Pr(\bigcap_{i = 1} ^{N} f(e_{t_i}) = 1) $, we have:
\begin{equation}
    \ln{\delta} \leq \ln{8} + N \ln{\frac{(\frac{3}{2} - \epsilon') + \sqrt{\epsilon'^2 + \frac{1}{4}}}{2} }
\end{equation}
For $\epsilon'$  small, we have $\epsilon'^2 \ll \frac{1}{4}$, so $\sqrt{\epsilon'^2 + \frac{1}{4}} \approx \frac{1}{2}$. Then:
\begin{equation}
    \ln{\delta} \lesssim \ln{8} + N \ln{\left(1 - \frac{\epsilon'}{2}\right)}
\end{equation}
Then, using the standard approximation $\ln(1-x) \approx -x$ gives us:
\begin{equation}
    \ln{\delta} \lesssim \ln{8} - \frac{N\epsilon'}{2}.
\end{equation}
This approximation would give us $\delta \lesssim 8 e^{-\frac{N\epsilon'}{2}}$. However, while this holds approximately for small $\epsilon'$, it does not hold exactly. Despite this, it does suggest a form for our final bound. If we try $8 e^{-\frac{N\epsilon'}{3}}$, we find that it holds that:
\begin{equation}
    \frac{(\frac{3}{2} - \epsilon') + \sqrt{\epsilon'^2 + \frac{1}{4}}}{2} \leq e^{-\frac{\epsilon'}{3}} \text{ on the interval }0 \leq \epsilon' \leq 0.44.
\end{equation}
Combining with Equation \ref{eq:mk_sampling_lemma_pf_main_bound_pt_1}, this implies that 
\begin{equation}
    \delta \leq 8 e^{-\frac{N\epsilon'}{3}} \text{ on the interval }0 \leq \epsilon' \leq 0.44. \label{eq:mk_sampling_lemma_pf_main_bound_pt_1_a}
\end{equation}
For very large $\epsilon'$, we can use a much simpler bound on $\Pr(\bigcap_{i = 1} ^{N} f(e_{t_i}) = 1) $.  Recall that:
\begin{equation}
\begin{split}
        \delta = \Pr\left(\bigcap_{i = 1} ^{N} f(e_{t_i}) = 1\right)  =&\Pr\left( f(e_{t_N}) = 1 \Bigg| \bigcap_{i = 1} ^{N-1} f(e_{t_i}) = 1\right) \\
        \cdot&  \Pr\left( f(e_{t_{N-1}}) = 1 \Bigg| \bigcap_{i = 1} ^{N-2} f(e_{t_i}) = 1\right) \\\cdot&\,...\,  \\
        \cdot &  \Pr( f(e_{t_{2}}) = 1 |  f(e_{t_1}) = 1) \cdot  \Pr( f(e_{t_{1}}) = 1)
\end{split}
\end{equation}
However, because $\forall i,\,t_{i} - t_{i-1} \geq t_{\text{mix}}$, we have that:
\begin{equation}
    \forall i > 1, \,\,\, \Pr\left( f(e_{t_i}) = 1 \Bigg| \bigcap_{j = 1} ^{i-1} f(e_{t_j}) = 1\right)  \leq \Pr_{e \sim \pi_e}\left( f(e) = 1 \right) + \frac{1}{4} = \frac{5}{4} - \epsilon' 
\end{equation}
Combining these equations gives us:
\begin{equation}
\begin{split}
     \delta &\leq \left( \frac{5}{4} - \epsilon'\right)^{N-1} \cdot  \Pr( f(e_{t_{1}}) = 1)\\ &\leq \left( \frac{5}{4} - \epsilon'\right)^{N-1} =   \left( \frac{5}{4} - \epsilon'\right)^{-1} \left( \frac{5}{4} - \epsilon'\right)^N  \leq 8 \left( \frac{5}{4} - \epsilon'\right)^N  
\end{split}
\end{equation}
Where we used that $\epsilon' \leq 1$ in the last step.
Finally, it holds that:
\begin{equation}
  \left(\frac{5}{4} -\epsilon'\right)  \leq e^{-\frac{\epsilon'}{3}} \text{ on the interval }0.37 \leq \epsilon' \leq 1.
\end{equation}
This then implies that:
\begin{equation}
    \delta \leq 8 e^{-\frac{N\epsilon'}{3}} \text{ on the interval }0.37 \leq \epsilon' \leq 1.
\end{equation}
Combining with Equation \ref{eq:mk_sampling_lemma_pf_main_bound_pt_1_a}, we have that 
\begin{equation}
   \forall \epsilon' \in [0,1],\,\,\,\, \Pr\left(\bigcap_{i = 1} ^{N} f(e_{t_i}) \right) \leq  8 e^{-\frac{N\epsilon'}{3}} 
\end{equation}
Because $\epsilon' \geq \epsilon$, we have:
\begin{equation}
  \Pr\left(\bigcap_{i = 1} ^{N} f(e_{t_i}) \right) \leq  8 e^{-\frac{N\epsilon'}{3}} \leq  8 e^{-\frac{N\epsilon}{3}}.
\end{equation}
which was to be proven.
\end{proof}
\section{Upper-bounding mixing times for examples} 
\label{sec:exp_facts}

Here, we prove that the values of $\hat{t}_{\text{mix}}$ used in the simulation experiments are in fact (somewhat loose) upper bounds on the true mixing times of $\mathcal{T}_e$ for these environments. While in practice, the true mixing times would not be known \textit{a priori}, it is important for the validity of our examples that the true $t_{\text{mix}}$ is in fact $\leq \hat{t}_{\text{mix}}$.

We use the following following well-known fact:

For distributions $\mathcal{A} := \mathcal{A}_1\otimes \mathcal{A}_2 \otimes ... \otimes \mathcal{A}_n $ and $\mathcal{B} := \mathcal{B}_1\otimes \mathcal{B}_2 \otimes ... \otimes \mathcal{B}_n $:
\begin{equation}
    \|\mathcal{A} - \mathcal{B}\|_{TV} \leq \sum_{i=1}^n  \|\mathcal{A}_i -\mathcal{B}_i\|_{TV} \label{eq:tv_subadditivity}
\end{equation}

First, we deal with the combination lock experiment. We can write the transition matrix for any arbitrary two-state Markov chain as

\begin{equation}
    \begin{bmatrix}
1 - \epsilon_0 &  \epsilon_1 \\
 \epsilon_0 & 1- \epsilon_1
\end{bmatrix}
\end{equation}

where $0 \leq \{\epsilon_0, \epsilon_1 \}\leq 1$. Note that in our particular example, we have $0.1 \leq \{\epsilon_0, \epsilon_1 \}\leq 0.9$.

This matrix has eigenvalues $1$ and $1-\epsilon_0 -\epsilon_1$, and the stationary distribution (the eigenvector corresponding to the eigenvalue 1) is \begin{equation}
    \pi_\infty:=[\epsilon_1/(\epsilon_0 +\epsilon_1), \epsilon_0/(\epsilon_0 +\epsilon_1) ]^T.
\end{equation}

Using the closed-form formula for the $n$'th power of a two-state Markov Chain given by \cite{williams1992n}, we have:

\begin{equation}
    \left(\begin{bmatrix}
1 - \epsilon_0 &  \epsilon_1 \\
 \epsilon_0 & 1- \epsilon_1
\end{bmatrix}\right)^n = \frac{1}{\epsilon_0 + \epsilon_1} \left[ \begin{bmatrix}
\epsilon_1 &  \epsilon_1 \\
 \epsilon_0 &\epsilon_0
\end{bmatrix} - (1-\epsilon_0-\epsilon_1)^n  \begin{bmatrix}
-\epsilon_0 &  \epsilon_1 \\
 \epsilon_0 & -\epsilon_1
\end{bmatrix} \right]
\end{equation}
To compute the mixing time, we compute the state distribution $\pi_n$, $n$ timesteps after each starting state:

\begin{equation}
    \left(\begin{bmatrix}
1 - \epsilon_0 &  \epsilon_1 \\
 \epsilon_0 & 1- \epsilon_1
\end{bmatrix}\right)^n
\begin{bmatrix}
1 \\
 0
\end{bmatrix}
= 
\frac{1}{\epsilon_0 + \epsilon_1}
\begin{bmatrix}
\epsilon_1 + (1-\epsilon_0-\epsilon_1)^n \epsilon_0 \\
 \epsilon_0 - (1-\epsilon_0-\epsilon_1)^n \epsilon_0
\end{bmatrix}
\end{equation}
and,
\begin{equation}
    \left(\begin{bmatrix}
1 - \epsilon_0 &  \epsilon_1 \\
 \epsilon_0 & 1- \epsilon_1
\end{bmatrix}\right)^n
\begin{bmatrix}
0 \\
 1
\end{bmatrix}
= 
\frac{1}{\epsilon_0 + \epsilon_1}
\begin{bmatrix}
\epsilon_1 - (1-\epsilon_0-\epsilon_1)^n \epsilon_1 \\
 \epsilon_0 + (1-\epsilon_0-\epsilon_1)^n \epsilon_1
\end{bmatrix}
\end{equation}

Note that the TV distance between either of these distributions and the stationary distribution $\pi$ is at most 
\begin{equation}
  \|\pi_n -\pi_\infty\|_{TV} \leq \frac{ |(1-\epsilon_0-\epsilon_1)|^n\max(\epsilon_0,\epsilon_1)}{(\epsilon_0 + \epsilon_1)} \leq  |(1-\epsilon_0-\epsilon_1)|^n.
\end{equation}

The parameters $\{\epsilon_0,\epsilon_1\}$ for each two-state Markov chain are chosen uniformly at random, such that  $0.1 \leq \{\epsilon_0, \epsilon_1 \}\leq 0.9$. Therefore, $0 \leq |(1-\epsilon_0-\epsilon_1)| \leq 0.8$. Then, for any individual chain, we have:
\begin{equation}
  \|\pi_n -\pi_\infty\|_{TV} \leq  |(1-\epsilon_0-\epsilon_1)|^n \leq 0.8^n
\end{equation}
In the combination lock experiments, there are up to $L=512$ of these noise Markov chains; the probability distribution over the exogenous noise $\mathcal{E}$ is the \textit{product distribution} over these chains. Then we use Equation \ref{eq:tv_subadditivity} to bound the \textit{total} TV distance for the chain $\mathcal{T}_e$ to its stationary distribution $\pi_\mathcal{E}$; that is:
\begin{equation}
  \|\pi^{\text{total}}_n -\pi_\mathcal{E}\|_{TV} \leq 512 \cdot 0.8^n.
\end{equation}
Then, by the definition of mixing time, we have:

\begin{equation}
    t_{\text{mix}} \leq \min n, \text{ such that } 512 \cdot 0.8^n \leq \frac{1}{4}
\end{equation}

Which gives us:

\begin{equation}
    t_{\text{mix}} \leq \left \lceil 
    \frac{-\log(2048)}{\log(0.8)}\right \rceil  = 35.
\end{equation}
So the value that we use in the experiment, $\hat{t}_{\text{mix}} = 40$, is a valid upper bound.

For the multi-maze experiment, the exogenous noise state consists of eight identical mazes, with agents moving uniformly at random in each of them. Unlike the ``combination lock'' example, where the individual components of the exogenous noise are conditioned on parameters $\epsilon_0$, $\epsilon_1$, which can vary, in the  multi-maze example the individual mazes always represent instances of \textit{exactly the same, specific Markov chain.} Let the transition matrix of this chain be $M$, with stationary distribution $\pi_M$. Then, by Equation \ref{eq:tv_subadditivity}, for the whole exogenous state $\mathcal{T}_e$, we have:

\begin{equation}
\begin{split}
        t_{\text{mix}}=&\min n, \text{ such that } \forall s_0, \|(\mathcal{T}_e)^n s_0 -\pi_{\mathcal{E}} \| \leq \frac{1}{4}\\
        = & \min n, \text{ such that } \forall s_0^{(1)}, s_0^{(2)},... s_0^{(8)}, \Big\|M^n s_0^{(1)}   \otimes M^n s_0^{(2)} \otimes ... \otimes  M^n s_0^{(8)} - \\
       &\hspace{22em} \pi_M  \otimes \pi_M  \otimes ... \otimes \pi_M   \Big\| \leq \frac{1}{4}\\
        \leq & \min n, \text{ such that } \forall s_0, \sum_{i=1}^8 \|M^n s_0  -\pi_M   \| \leq \frac{1}{4}\\
        = & \min n, \text{ such that } \forall s_0, \|M^n s_0   -\pi_M   \| \leq \frac{1}{32}\\
\end{split}
\end{equation}
Which is to say that $t_{mix}$ for the entire exogenous noise chain is upper-bounded by $t_{mix}(1/32)$ for the individual maze chain $M$. Furthermore, while the state space for the entire chain is of size $|\mathcal{E}| = 68^8$, the individual maze chain $M$ operates on a state of size 68. This is small enough that it is tractable to exactly compute  $t_{mix}(1/32)$ for $M$ using numerical techniques. We performed the computation (source code is provided with the supplementary materials), and found that  
\begin{equation}
     \min n, \text{ such that } \forall s_0, \|M^n s_0   -\pi_M   \| \leq \frac{1}{32} = 293.
\end{equation}
Therefore, for the full exogenous noise chain $\mathcal{T}_e$,  we have that $t_{mix} \leq 293$. Then the value that we use in the experiment, $\hat{t}_{\text{mix}} = 300$, is a valid upper bound.
\section{Discussion of assumptions} \label{sec:discuss_assumptions}
In this section, we argue that two major assumptions of the STEEL algorithm are \textit{necessary} assumptions. In other words, if we remove these assumptions, then, we argue, achieving similar guarantees to our algorithm would become effectively impossible for \textit{any} algorithm to accomplish in the single-trajectory, no-resets Ex-BMDP setting (at least without making \textit{additional} assumptions to compensate). These assumptions are (1) reachability: the assumption that all endogenous states $s \in \mathcal{S}$ are reachable from one another in  $T$; and (2) the availability of a known upper bound on the exogenous state mixing time $t_{\text{mix}}$.
\subsection{Reachability}
Here, we argue that \textit{any} algorithm for single-trajectory, no-resets representation learning of Ex-BMDPs must \textit{necessarily} make a reachability assumption: that all endogenous latent states must be reachable from one another. At a high level, we argue that, if all latent states are \textit{not} reachable from one another, then an algorithm must visit states in a particular order in order to explore the entire dynamics with a single trajectory. However, because the latent dynamics are not known \textit{a priori}, it is impossible for an algorithm to guarantee that it visits states in the appropriate order.

More formally, recall that the reachability assumption is that for any pair of latent states $s_1,s_2$, there exists both a path from $s_1$ to $s_2$ and a path from $s_2$ to $s_1$. Consider any Ex-BMDP $M$ where this condition does not hold, such that $|\mathcal{A}| \geq 2$, and any learning algorithm $A$. Firstly, if there is any pair of states $s_1$ and $s_2$ where \textit{neither} can reach the other, then clearly a single trajectory is insufficient to learn the Ex-BMDP, because it cannot visit both $s_1$ and $s_2$. Therefore, we restrict to the case where, for each pair of states, either both are reachable from each other, or (without loss of generality) $s_2$ is reachable from $s_1$ but $s_1$ is not reachable from $s_2$. 

Consider every edge $(s,a,s')$ of the state transition graph defined by the state transition function $T$. If, for all such edges, $s$ is reachable from $s'$, then the reachability assumption holds on the entire dynamics (because all single “steps” are invertible by some sequence of actions, so for any pair $s_1, s_2$, the path from $s_1$ to $s_2$ implies the existence of a path from $s_2$ to $s_1$). Therefore, if reachability does not hold, then there exists some edge $(s,a,s')$ such that $s$ is not reachable from $s'$. Now, consider the first time that the algorithm $A$ encounters the state $s$. Regardless of the details of $A$, there must exist some action $a'$ such that, on this first encounter, the probability of $A$ taking action $a'$ is at least $1/|\mathcal{A}|$. 

Therefore, if $a' = a$, then probability that the algorithm $A$ never revisits $s$, and so never takes any other action from $s$ apart from $a'$, is at least $1/|\mathcal{A}|$. Then, with substantial probability, the algorithm $A$ never explores the $|\mathcal{A}| -1$ other possible transitions from $s$, and cannot possibly learn the full dynamics of the Ex-BMDP.  (To be more precise, the algorithm's output will not depend \textit{at all} on the ground-truth value of $T(s,a'')$ for $a'' \in \mathcal{A} \setminus \{a\}$, and so is highly unlikely to return the correct values for these transitions on arbitrary Ex-BMDPs.)

Alternatively, if $a' \neq a$, then consider the alternative Ex-BMDP $M'$,  which is identical to $M$ in every way (in terms of dynamics, exogenous state, emission function, etc.), except that the effects of actions $a$ and $a'$ on the latent state $s$ are swapped. Note that before first  encountering the latent state $s$, the MDPs $M$ and $M'$  will produce identically-distributed sequences of observations, so the algorithm $A$ will behave identically on them, and have identical internal memory/state. Then, when first encountering $s$, the algorithm $M$ on $A$ will take action $a'$ with substantial probability, and then transition to $s'$ and be unable to revisit $s$.

Therefore, for any Ex-BMDP $M$ that violates reachability, any algorithm $A$ is either likely to fail on $M$, or to fail on a slightly-modified version of $M$. In any case, no such algorithm will be able to succeed with high probability on any general class of Ex-BMDPs that does not require reachability.
\subsection{Known upper bound on the mixing time \texorpdfstring{$t_\text{mix}$}{t mix}}
Here, we argue that the assumption that an upper bound on the mixing time is provided  is also necessary. We argue that:
\begin{enumerate}
    \item Any general, provably sample-efficient algorithm for learning, with high probability, an Ex-BMDP from a single trajectory must necessarily require an upper bound on the mixing time of the exogenous noise to be provided \textit{a priori} (unless some other information about the exogenous noise is provided.)

\item The runtime of any algorithm for learning an Ex-BMDP must, in the worst case, include some term that scales at least linearly with the mixing time of the exogenous noise.
\end{enumerate}
At a high level, we argue that, with a single trajectory, it can be impossible to distinguish between a static exogenous dynamics (i.e., where only one exogenous state exists) and an extremely slow-mixing exogenous dynamics, in time sublinear in the mixing time. For any proposed algorithm that does not have access to a bound on the mixing time, and any Ex-BMDP with a single exogenous state (that is, a Block MDP), we can construct an extremely slow-mixing Ex-BMDP that behaves identically to the Block MDP with substantial probability during the entire duration of the runtime of the learning algorithm, but which at equilibrium has a quite different distribution of observations. This will make the learned encoder fail on the Ex-BMDP at equilibrium.

More precisely, to show (1): suppose the converse is true: that there exists some algorithm $A$ that learns Ex-BMDP latent dynamics and state encoders, which is not provided any upper-bound on the mixing time of the exogenous noise of the Ex-BMDP, or any other information about the exogenous noise distribution. We assume that $A$ can learn the correct endogenous dynamics, as well as an encoder with accuracy  (for every endogenous state, under the stationary exogenous distribution) of at least $1-\epsilon$, with probability at least $1-\delta$, for small values of $\epsilon$ and $\delta$. 
 Then, consider any Ex-BMDP $M_1$  and related parameterized family of Ex-BMDPs $M_2(\gamma), M_3(\gamma)$ with the following properties:
\begin{itemize}
\item $M_1$ has N endogenous latent states $s_1,...,s_N$ with some transition function $T$, a single exogenous latent state $e_1$, and an emission function $\mathcal{Q}(s,e_1)$. (In other words, $M_1$ is any Block MDP).

\item $M_2(\gamma)$ has the same endogenous states and transition probabilities as $M_1$, but has two exogenous latent states $e_1, e_2$. The state $e_1$ transitions to $e_2$, and $e_2$ transitions to $e_1$, each with probability $\gamma$. Note that the stationary distribution of the exogenous state is uniform over $e_1$ and $e_2$.  We also assume that the initial exogenous state distribution is uniform over $e_1$ and $e_2$. Regardless of $\gamma$, the emission function of $M_2(\gamma)$ is defined such that $\forall s_i, \mathcal{Q}_{M_2}(s_i,e_1) = \mathcal{Q}_{M_1}(s_i,e_1)$. 
 
\item $M_3(\gamma)$ is identical to $M_2(\gamma)$, except that its emission function is defined as $\forall s_i, \mathcal{Q}_{M_3}(s_i,e_1) := \mathcal{Q}_{M_1}(s_i,e_1)$, but $\forall s_i, \mathcal{Q}_{M_3}(s_i,e_2) := \mathcal{Q}_{M_2}(s_{(i+1) \% N},e_2)$. In other words, when the exogenous state is equal to $e_2$, the emission distributions for the endogenous states are permuted in $M_3$ compared to $M_2$.
\end{itemize}
We also assume that the encoder hypothesis class can represent the inverses of $\mathcal{Q}_{M_1}$, $\mathcal{Q}_{M_2}$, and $\mathcal{Q}_{M_3}$. However, note that by construction, any fixed encoder $\phi(\cdot)$ which has accuracy at least $1- \epsilon$ on $M_2(\gamma)$  (for every endogenous state, under the stationary exogenous distribution) can only have accuracy at most $0.5 + \epsilon$ on  $M_3(\gamma)$, because, when the exogenous state is $e_2$, any time that the encoder returns the correct latent state for $M_2(\gamma)$, it will return the incorrect latent state for $M_3(\gamma)$.\footnote{There is some nuance involved here: note that for an encoder $\phi$ to achieve a high accuracy, it only needs to approximate $\sigma(\phi^*(\cdot))$ under \textit{some choice of} permutation $\sigma$. However, we know that for any $s_i, s_{(i+1)\%N} \in \mathcal{S}$, the $\phi$ which is highly accurate on $M_2(\gamma)$ must map all observations in the support of $\mathcal{Q}_{M_2}(s_i,e_1)$ and $\mathcal{Q}_{M_2}(s_{(i+1)\% N},e_2)$ to two \textit{distinct} codes, each with marginal probability at least $1-2\epsilon$. Therefore, this $\phi$ will map at least $0.5-\epsilon$ of the observations sampled from $\mathcal{Q}_{M_3}(s_{i},e)$ for $e\sim \pi_{\mathcal{E}}$ to one latent state, and at least another $0.5-\epsilon$ of the observations sampled from this distribution to a \textit{different} latent state. Then under \textit{any} choice of perturbation $\sigma$, the accuracy of $\phi(\cdot)$ on $M_3$ can be at most $0.5+\epsilon$.}

Now, consider what happens when we run the algorithm $A$ on $M_1$. The number of environment steps that $A$ takes on $M_1$ forms some distribution; let $t$ be the 90'th percentile of this distribution, so that with probability 0.9, $A$ stops sampling before step $t$.

Now, we can set $\gamma := 1 - 0.9^{1/t}$, so that, with probability 0.9, the endogenous state of $M_2(\gamma)$ or $M_3(\gamma)$ do not change before step $t$. Therefore, by union bound,  on  $M_2(\gamma)$ or  $M_3(\gamma)$, if the initial exogenous state is $e_1$, then with probability at least 0.8, the exogenous state will stay constant at $e_1$ up to  timestep $t$, and the algorithm $A$, \textit{seeing exactly the same distribution of observations as if the MDP was $M_1$}, will halt before timestep $t$.

 Considering the 50\% probability that the exogenous state starts at $e_1$, this gives at least a 40\% chance that $A$ applied to  $M_2(\gamma)$ or $M_3(\gamma)$ never encounter the exogenous state $e_2$. In this case, the distribution of encoders output by $A$ should be the same for the two Ex-BMDPs: let this distribution be $\mathcal{G}$. (That is, $\mathcal{G}$ is the conditional distribution of encoders output by $A$ when applied to $M_2(\gamma)$ or $M_3(\gamma)$, given that $A$ never encounters $e_2$.)  However, because  $A$ fails on $M_2(\gamma)$ with probability at most $\delta$, we can conclude that at least $(1 - 0.4^{-1} \delta)$ of the encoders from the distribution $\mathcal{G}$ represent \textit{successes} of $A$ on $M_2(\gamma)$. 

Because $A$ on $M_3(\gamma)$ also draws from  $\mathcal{G}$ with probability at least 0.4, we can conclude that at least $0.4 - \delta$ of the time, $A$ on $M_3(\gamma)$ produces an encoder that is highly accurate on $M_2(\gamma)$. As discussed above, any encoder that is highly-accurate on $M_2(\gamma)$ cannot be highly accurate on  $M_3(\gamma)$, so $A$ must fail on $M_3$ with substantial probability.

Therefore, we can conclude that for any algorithm $A$ that has no ``hint'' about the exogenous dynamics (such as knowing the mixing time), there exists some $\gamma$ such that $A$ cannot possibly succeed with high probability on both $M_2(\gamma)$ and $M_3(\gamma)$.

To show point (2), simply note that in this construction, in order to ensure that $e_2$ is observed with probability $(1-\delta)$, an algorithm must observe at least the first $\lceil \log(2\delta)/\log(1-\gamma) \rceil$ timesteps.  Meanwhile, the mixing time of the two-state Markov chain exogenous state is given by $\lceil -1/\log_2(1-2\gamma) \rceil$. For a fixed $\delta$ and as  $\gamma$ approaches 0, the ratio between these quantities approaches a constant: therefore, the number of steps required to ensure that $e_2$ is observed with high probability is linear in the mixing time.

\section{Double-Prime Loop Experiments} \label{sec:double_prime}
In this section, we present additional experiments which demonstrate that, on some types on Ex-BMDPs, the STEEL algorithm can empirically outperform the algorithms proposed in \cite{lamb2023guaranteed} and \cite{levine2024multistep}, which do not have sample-complexity guarantees. We show that there are certain structures of Ex-BMDP latent dynamics which are difficult for these prior methods to learn efficiently, but on which STEEL performs well. Specifically, we look at a family of ``double-prime loop'' tabular Ex-BMDPs which are discussed by \cite{levine2024multistep}. First, though, we describe the algorithms proposed by \cite{lamb2023guaranteed} and \cite{levine2024multistep}, and motivate why certain dynamics structures such as ``double-prime loops'' present a challenge to these methods.
\subsection{Background}
\subsubsection{AC-State and ACDF algorithms} \label{sec:acstate_theory}
\cite{lamb2023guaranteed} and \cite{levine2024multistep} both propose algorithms to learn endogenous state encoders in the Ex-BMDP framework.

\cite{lamb2023guaranteed} first proposed the AC-State algorithm, which aims to  learn an encoder $\phi : \mathcal{X} \rightarrow \mathbb{N}$, such that, under some one-to-one permutation $\sigma$, $\sigma(\phi(x)) = \phi^*(x)$. To accomplish this task, AC-State optimizes $\phi$ using a \textit{multistep inverse dynamics} objective.

Specifically, in the theoretical treatment, AC-State tries to find the encoder $\phi$ which optimizes the following objective:

\begin{equation}
\begin{split}
    &\mathcal{L}(\phi) := \min_{g\in \mathbb{N} \times \mathbb{N} \times \mathbb{N}  \rightarrow \mathcal{P}(\mathcal{A})} \,\,\,  \mathop\mathbb{E}_{k \sim \{1,...,K\} }\mathop\mathbb{E}_{(x_t,a_t, x_{t+k}) \sim \mathcal{D}}  -\log\Big(g\big(\phi(x_t) ,\phi(x_{t+k}), k\big)\big[a_t\big]\Big)\\
   & \phi_\text{AC-State} :=\mathop{\arg\min}_{\mathcal{L}(\phi) = \min_{\phi'} \mathcal{L}(\phi')} |
    \text{Range}(\phi)| 
\end{split} \label{eq:acstate}
\end{equation}

In other words, the loss $\mathcal{L}(\phi)$ is minimized on tuples $(x_t, a_t, x_{t+k})$ sampled from a trajectory, consisting of an observation $x_t$, the following action $a_t$, and the observation $k$ \textit{steps in the future}, $x_{t+k}$.

The encoder $\phi(x)$ is trained to retain any  information about the observations $\phi(x_t)$ and  $\phi(x_{t+k})$ that is useful for predicting $a_t$. Specifically, an inverse-dynamics model $g$ is simultaneously trained with $\phi$ to predict $a_t$ given $\phi(x_t)$,  $\phi(x_{t+k})$, and $k$. The loss $\mathcal{L}(\phi)$ is then taken as the minimum loss for $\phi$ over all such functions $g$.

However, note that, based on this loss function alone, the identity encoder $\phi(x):=x$ achieves the minimum possible value of $\mathcal{L}(\phi)$.\footnote{Assuming $\mathcal{X}$ is countable.} Therefore, in order to filter exogenous information, the final encoder returned by AC-State is the \textit{minimal range} encoder that minimizes the loss function.

\cite{lamb2023guaranteed} claims that, if the data $\mathcal{D}$ is collected  by a policy $\pi(x)$ that \textit{only} depends on $x$ through $\phi^*(x)$ (that is, the policy ignores exogenous noise); \textit{and} the maximum segment length $K$ is at least the endogenous dynamics diameter $D$; \textit{and} the endogenous dynamics are deterministic, then the final encoder $\phi_\text{AC-State}$ returned will correctly encode the endogenous latent state (in the limit of infinite data; and with sufficient function approximation).

When applied in practice, $\phi$ and $g$ are learned neural networks, and a vector quantization bottleneck is used to restrict the range of $\phi$ (so the output of $\phi$ is a quantized vector, rather than an integer.)

\cite{levine2024multistep} subsequently show that, in some cases, the AC-State objective in Equation \ref{eq:acstate} is not sufficient to learn a correct latent state encoder of an Ex-BMDP, even with unbounded amounts of data. \cite{levine2024multistep} identify specific flaws in the proofs of \cite{lamb2023guaranteed}, and propose a modified loss function to address these flaws. In particular, 
\begin{enumerate}
    \item The maximum segment length $K$ required to correctly learn the endogenous encoder must in some cases be larger than $D$; a corrected upper bound on the necessary segment length of $2D^2+D$ is given. 
    \item If the endogenous dynamics are \textit{periodic}, then the multistep inverse objective can be insufficient, for any choice of $K$. \cite{levine2024multistep} then propose to add an auxiliary loss function to the loss in Equation \ref{eq:acstate}, namely a \textit{latent forward dynamics loss}:
\begin{equation}
\mathcal{L}_{\text{forward}}(\phi) := \min_{h\in \mathbb{N} \times \mathcal{A} \rightarrow \mathcal{P}(\mathbb{N})} \,\,\, \mathop\mathbb{E}_{(x_t,a_t, x_{t+1}) \sim \mathcal{D}}  -\log\Big(h\big(\phi(x_t), a_t\big)\big[\phi(x_{t+1})\big]\Big). \label{eq:acdf}
\end{equation}
This loss term enforces that the transitions in the learned endogenous latent state space are deterministic; \cite{levine2024multistep} prove that enforcing this constraint is sufficient, when combined with the multistep inverse loss, to ensure that a correct endogenous representations are learned even when the endogenous latent dynamics are periodic. \cite{levine2024multistep} name the AC-State algorithm with this modified loss function ACDF.
\end{enumerate}
Note that because $D$ is in general unknown \textit{a priori}, $K$ must be treated as a hyperparameter of the algorithm, so point (1) above is primarily of theoretical interest, but may aid in setting this hyperparameter if there is some prior knowledge of $D$.

So far, we have only described the learning objectives of the two proposed algorithms: now, we address how these algorithms collect the trajectory from which the tuples $(x_t,a_t,x_{t+k})$ are sampled. Recall that a condition on the correctness of AC-State (and ACDF) is that the data-collection policy does not depend on the noise in the observation, but only on the endogenous latent state. This condition naturally raises the question of \textit{how exactly} an algorithm intended to discover $\phi^*$ can take actions that depend only on $\phi^*(x)$, without already knowing $\phi^*$ in the first place. 

\cite{lamb2023guaranteed} performs experiments using two data-collection policies: (1). a \textit{uniformly random} policy, which meets the condition simply by ignoring the observation $x$ entirely; and (2) a goal-seeking policy, which aims to maximize latent state coverage. For the latter policy, the data is collected simultaneously with training $\phi$, and the partially-trained encoder $\phi_t$ is used to (imperfectly) filter out exogenous noise and plan in (an approximation of) the endogenous latent space. \cite{lamb2023guaranteed} explicitly acknowledge that this bootstrapping approach \textit{breaks} the condition  that the action $a_t$ depends only on the true endogenous state $\phi^*(x_t)$, which is necessary in their proof of the  correctness of their algorithm. Despite this, they observe that the approach seems to work well empirically. Meanwhile, \cite{levine2024multistep} \textit{only} use uniformly random actions in their experiments. 

For now, we will assume that data is collected under a uniformly-random policy. We will return to discussing the  latent-state-coverage maximizing approach proposed by \cite{lamb2023guaranteed} later, in Section \ref{sec: active_exploration}.

\subsubsection{Double-Prime Loops}

To specifically demonstrate that $K \approx \mathcal{O}(D^2)$ can be necessary to learn a correct encoder, even when a forward dynamics loss is \textit{also} being used, \cite{levine2024multistep} give a concrete example. This example is the family of tabular ``double-prime loop'' Ex-BMDPs. For any two primes $p,q$, with $p < q$, let the Ex-BMDP $\text{DoublePrime}(p,q)$ be defined as follows:
\begin{itemize}
    \item $\mathcal{S} = \big\{0,1,...,(q-1),0',1',...,(q-1)'\big\}$; $\mathcal{A} = \{0,1\}$; $\mathcal{E} = \{e_0\}.$
    \item $\mathcal{X} = \mathcal{S}$; $\mathcal{Q}(s,e_0) = s$; $\mathcal{T}_e(e_0) = e_0$.
    \item The endogenous latent state transition function $T$ is defined as:
    \begin{equation}
        T(s,a) =  \begin{cases}
(s + 1)\%\, q &\text{if } s \ \in \{1,...,q-1\} \text{ or } (s = 0 \text{ and } a =0)\\
((s + 1)\%\, q)' &\text{if } s \ \in \{1',...,(q-1)'\}\text{ or } (s = 0' \text{ and } a =0)\\
(q-p+1)' &\text{if } (s = 0 \text{ and } a =1)\\
(q-p+1) &\text{if } (s = 0' \text{ and } a =1).
\end{cases}
    \end{equation}
\end{itemize}
In other words, $\text{DoublePrime}(p,q)$ is a noise-free Ex-BMDP, where all of the dynamics are deterministic, and the latent state $s$ is directly observed as $x$. The dynamics consist of two connected loops, each of $q$ states. The actions are generally ignored and the agent continues to progress through a loop, except for in states $0$ and $0'$, where taking the action $1$ transports the agent to the other loop, at position $q-p+1$. As an example, the dynamics of $\text{DoublePrime}(11,13)$ are shown in Figure \ref{fig:doubleprimeloop_example}. 

\begin{figure}
    \centering
\begin{tikzpicture}[auto,node distance=1.5cm]
\node[circle, draw, inner sep=1pt] (L0) at (1.87, 1.66) {\footnotesize 0};
\node[circle, draw, inner sep=1pt] (L1) at (2.43, 0.60) {\footnotesize 1};
\node[circle, draw, inner sep=1pt] (L2) at (2.43, -0.60) {\footnotesize 2};
\node[circle, draw, inner sep=1pt] (L3) at (1.87, -1.66) {\footnotesize 3};
\node[circle, draw, inner sep=1pt] (L4) at (0.89, -2.34) {\footnotesize 4};
\node[circle, draw, inner sep=1pt] (L5) at (-0.30, -2.48) {\footnotesize 5};
\node[circle, draw, inner sep=1pt] (L6) at (-1.42, -2.06) {\footnotesize 6};
\node[circle, draw, inner sep=1pt] (L7) at (-2.21, -1.16) {\footnotesize 7};
\node[circle, draw, inner sep=1pt] (L8) at (-2.50, 0.00) {\footnotesize 8};
\node[circle, draw, inner sep=1pt] (L9) at (-2.21, 1.16) {\footnotesize 9};
\node[circle, draw, inner sep=1pt] (L10) at (-1.42, 2.06) {\footnotesize 10};
\node[circle, draw, inner sep=1pt] (L11) at (-0.30, 2.48) {\footnotesize 11};
\node[circle, draw, inner sep=1pt] (L12) at (0.89, 2.34) {\footnotesize 12};
\node[circle, draw, inner sep=1pt] (R0) at (5.13, -1.66) {\footnotesize $0'$};
\node[circle, draw, inner sep=1pt] (R1) at (4.57, -0.60) {\footnotesize $1'$};
\node[circle, draw, inner sep=1pt] (R2) at (4.57, 0.60) {\footnotesize $2'$};
\node[circle, draw, inner sep=1pt] (R3) at (5.13, 1.66) {\footnotesize $3'$};
\node[circle, draw, inner sep=1pt] (R4) at (6.11, 2.34) {\footnotesize $4'$};
\node[circle, draw, inner sep=1pt] (R5) at (7.30, 2.48) {\footnotesize $5'$};
\node[circle, draw, inner sep=1pt] (R6) at (8.42, 2.06) {\footnotesize $6'$};
\node[circle, draw, inner sep=1pt] (R7) at (9.21, 1.16) {\footnotesize $7'$};
\node[circle, draw, inner sep=1pt] (R8) at (9.50, -0.00) {\footnotesize $8'$};
\node[circle, draw, inner sep=1pt] (R9) at (9.21, -1.16) {\footnotesize $9'$};
\node[circle, draw, inner sep=1pt] (R10) at (8.42, -2.06) {\footnotesize $10'$};
\node[circle, draw, inner sep=1pt] (R11) at (7.30, -2.48) {\footnotesize $11'$};
\node[circle, draw, inner sep=1pt] (R12) at (6.11, -2.34) {\footnotesize $12'$};
\path (L0) edge node {$ 0 $} (L1);
\path (R0) edge node {$ 0 $} (R1);
\path (L1) edge node {$ 0/1 $} (L2);
\path (R1) edge node {$ 0/1 $} (R2);
\path (L2) edge node {$ 0/1 $} (L3);
\path (R2) edge node {$ 0/1 $} (R3);
\path (L3) edge node {$ 0/1 $} (L4);
\path (R3) edge node {$ 0/1 $} (R4);
\path (L4) edge node {$ 0/1 $} (L5);
\path (R4) edge node {$ 0/1 $} (R5);
\path (L5) edge node {$ 0/1 $} (L6);
\path (R5) edge node {$ 0/1 $} (R6);
\path (L6) edge node {$ 0/1 $} (L7);
\path (R6) edge node {$ 0/1 $} (R7);
\path (L7) edge node {$ 0/1 $} (L8);
\path (R7) edge node {$ 0/1 $} (R8);
\path (L8) edge node {$ 0/1 $} (L9);
\path (R8) edge node {$ 0/1 $} (R9);
\path (L9) edge node {$ 0/1 $} (L10);
\path (R9) edge node {$ 0/1 $} (R10);
\path (L10) edge node {$ 0/1 $} (L11);
\path (R10) edge node {$ 0/1 $} (R11);
\path (L11) edge node {$ 0/1 $} (L12);
\path (R11) edge node {$ 0/1 $} (R12);
\path (L12) edge node {$ 0/1 $} (L0);
\path (R12) edge node {$ 0/1 $} (R0);
\path (L0) edge[bend left] node {{$1$}} (R3);
\path (R0) edge[bend left] node {{$1$}} (L3);
\end{tikzpicture}
    \caption{Latent dynamics of $\text{DoublePrime}(11,13)$.}
    \label{fig:doubleprimeloop_example}
\end{figure}

There is also a related family of ``single-prime loop'' Ex-BMDPs. Let  $\text{SinglePrime}(p,q)$ be defined as (borrowing notation from \cite{levine2024multistep}):
\begin{itemize}
    \item $\mathcal{S} = \big\{0^*,1^*,...,(q-1)^*\big\}$; $\mathcal{A} = \{0,1\}$; $\mathcal{E} = \{e_0\}$;  $\mathcal{T}_e(e_0) = e_0$.
    \item $\mathcal{X} =  \big\{0,1,...,(q-1),0',1',...,(q-1)'\big\}$.
    \item $\mathcal{Q}(s^*,e_0) = \begin{cases}
        s& \text{ with probability } 1/2\\
        s'& \text{ with probability } 1/2.
    \end{cases}$
    \item The endogenous latent state transition function $T$ is defined as:
    \begin{equation}
        T(s^*,a) =  \begin{cases}
((s + 1)\%\, q)^* &\text{if } s \ \in \{1^*,...,(q-1)^*\}\text{ or } (s = 0^* \text{ and } a =0)\\
(q-p+1)^* &\text{if } (s = 0^* \text{ and } a =1).
\end{cases}
    \end{equation}
\end{itemize}
In other words, $\text{SinglePrime}(p,q)$ has the same observation space $\mathcal{X}$ and action space $\mathcal{A}$ as $\text{DoublePrime}(p,q)$, but fewer controllable latent states. In $\text{SinglePrime}(p,q)$, when the agent is at latent state $s_t = i^*$, either the observation $x_t = i$ or the observation $x_t = i'$ is emitted, each with probability 0.5. The agent progreses through the loop of latent states regardless of actions, except in latent state $0^*$, where taking action $1$ transitions the latent state to $(q-p+1)^*$. (See Figure \ref{fig:singleprimeloop_example} for an example.) Note that if we ignore the distinction between the observations $i$ and $i'$, then $\text{DoublePrime}(p,q)$ and $\text{SinglePrime}(p,q)$ appear to have identical observed dynamics. 

\begin{figure}
    \centering
\begin{tikzpicture}[auto,node distance=1.5cm]
\node[circle, draw, inner sep=1pt] (L0) at (1.87, 1.66) {\footnotesize $0^*$};
\node[circle, draw, inner sep=1pt] (L1) at (2.43, 0.60) {\footnotesize $1^*$};
\node[circle, draw, inner sep=1pt] (L2) at (2.43, -0.60) {\footnotesize $2^*$};
\node[circle, draw, inner sep=1pt] (L3) at (1.87, -1.66) {\footnotesize $3^*$};
\node[circle, draw, inner sep=1pt] (L4) at (0.89, -2.34) {\footnotesize $4^*$};
\node[circle, draw, inner sep=1pt] (L5) at (-0.30, -2.48) {\footnotesize $5^*$};
\node[circle, draw, inner sep=1pt] (L6) at (-1.42, -2.06) {\footnotesize $6^*$};
\node[circle, draw, inner sep=1pt] (L7) at (-2.21, -1.16) {\footnotesize $7^*$};
\node[circle, draw, inner sep=1pt] (L8) at (-2.50, 0.00) {\footnotesize $8^*$};
\node[circle, draw, inner sep=1pt] (L9) at (-2.21, 1.16) {\footnotesize $9^*$};
\node[circle, draw, inner sep=1pt] (L10) at (-1.42, 2.06) {\footnotesize $10^*$};
\node[circle, draw, inner sep=1pt] (L11) at (-0.30, 2.48) {\footnotesize $11^*$};
\node[circle, draw, inner sep=1pt] (L12) at (0.89, 2.34) {\footnotesize $12^*$};

\path (L0) edge node {$ 0 $} (L1);
\path (L1) edge node {$ 0/1 $} (L2);
\path (L2) edge node {$ 0/1 $} (L3);
\path (L3) edge node {$ 0/1 $} (L4);
\path (L4) edge node {$ 0/1 $} (L5);
\path (L5) edge node {$ 0/1 $} (L6);
\path (L6) edge node {$ 0/1 $} (L7);
\path (L7) edge node {$ 0/1 $} (L8);
\path (L8) edge node {$ 0/1 $} (L9);
\path (L9) edge node {$ 0/1 $} (L10);
\path (L10) edge node {$ 0/1 $} (L11);
\path (L11) edge node {$ 0/1 $} (L12);
\path (L12) edge node {$ 0/1 $} (L0);
\path (L0) edge[bend right] node {{$1$}} (L3);
\end{tikzpicture}
    \caption{Latent dynamics of $\text{SinglePrime}(11,13)$.}
\label{fig:singleprimeloop_example}
\end{figure}

Let $\phi_{DP}(x):= x$ be the optimal encoder for $\text{DoublePrime}(p,q)$, and 
\begin{equation}
    \phi_{SP}(x) :=  i^* \text{    if   }x = i  \text{    or   }x=i'
\end{equation}
be the optimal encoder for $\text{SinglePrime}(p,q)$.

\cite{levine2024multistep} show that, on the Ex-BMDP $\text{DoublePrime}(p,q)$ under a uniformly random behavioral policy, \textit{in the limit of infinite training data}, the encoders $\phi_{DP}$ and $\phi_{SP}$ have \textit{exactly the same loss} under the loss function in Equation \ref{eq:acstate}, with $K \leq (q-1)p$. Note that if $p$ and $q$ are close, then  $(q-1)p \approx D^2/4$. Because Equation \ref{eq:acstate} prefers the smallest-range encoder among encoders with identical losses, this means that AC-State will incorrectly return $\phi_{SP}$ (which only has $q$ distinct outputs, rather than $2q$.). Futhermore, the forward-dynamics loss suggested by \cite{levine2024multistep} does not help in this case: it will be zero for both encoders under data generated by $\text{DoublePrime}(p,q)$.

It is important to note that  $\text{SinglePrime}(p,q)$ and $\text{DoublePrime}(p,q)$ have \textit{truly distinct} controllable latent dynamics: in $\text{DoublePrime}(p,q)$, given sufficient lead-time, the agent can (eventually) control whether it is in state $1'$ or $1$ at a future time step, while an agent in $\text{SinglePrime}(p,q)$ can \textit{never} control this. Therefore, AC-State is making an \textit{error} if it returns $\phi_{SP}$ rather than $\phi_{DP}$.

\cite{levine2024multistep} only considers the case of unlimited data, and only introduces the double-prime loop Ex-BMDPs to make the point that the hyperparameter $K$ must be set very high in some cases. Here, however, we will \textit{also} show empirically that, even when using a carefully-tuned and very large value of $K$, AC-State and ACDF can take many samples to correctly learn these dynamics. 

\subsubsection{Tabular Experiments in \texorpdfstring{\cite{levine2024multistep}}{Levine et al. (2024)}}

\cite{levine2024multistep}  empirically compare AC-State \citep{lamb2023guaranteed} to their proposed ACDF method, and empirically explore the effect of the hyperparameter $K$. To do so from a purely statistical perspective (i.e., controlling for differences in optimization), in one set of experiments, \cite{levine2024multistep} perform tests on small tabular Ex-BMDPs, including $\text{SinglePrime}(3,5)$ and $\text{DoublePrime}(3,5)$.\footnote{The experiment in \cite{levine2024multistep} on  $\text{SinglePrime}(3,5)$ is on a slightly different variant of the MDP from what is presented here: the controllable latent dynamics are the same, but the noise is time-correlated. }
For these experiments, the final learned $\phi_\text{AC-State}$ (or $\phi_\text{ACDF}$) is found by exhaustively computing the losses in Equations \ref{eq:acstate} and \ref{eq:acdf} on \textit{all possible} encoders $\phi \in \mathcal{X} \rightarrow \{0,...,|\mathcal{X}|-1\}$ (up to a relabeling perturbation of the output). For each encoder $\phi$, the multistep inverse model $g$ (and, for ACDF, the forward model $h$) is estimated as a tabular function based on the collected data:
\begin{equation}
   g(s,s',k)[a] :=  \scalebox{1.25}{$\frac{\#\text{ of instances of } (x_t,a_t,x_{t+k}) \text{ such that } \phi(x_t) = s; a_t = a; \text{ and }\phi(x_{t+k}) = s'}{\#\text{ of instances of } (x_t,a_t,x_{t+k}) \text{ such that } \phi(x_t) = s \text{ and }\phi(x_{t+k}) = s'}$}. \label{eq:def_g_empirical}
\end{equation}
However, due to overfitting, this tabular definition of $g$, if used naively, would always lead to the identity encoder $\phi(x) := x$ having the lowest empirical loss, regardless of the true latent encoder. Therefore, \cite{levine2024multistep} collect \textit{two} trajectories for each experiment, and uses one to \textit{fit} $g$ and $h$ for each possible encoder $\phi$, and the other to \textit{evaluate} the loss on each $\phi$ in order to determine the final output encoder $\phi_\text{AC-State}$  (or $\phi_\text{ACDF}$). We will call these sets of tuples the \textit{fitting set} and \textit{optimization set}, repectively. 

Because the number of possible encoders grows very quickly with $|\mathcal{X}|$, \cite{levine2024multistep} limit these experiments to very small tabular Ex-BMDPs, with $|\mathcal{X}| \leq 10$.

With this background out of the way, we describe \textit{our} experiments:
\subsection{Experiments}
\subsubsection{Setup} \label{sec:doubleprime_setup}
Given that \cite{levine2024multistep} describe how learning the correct latent state encoder for $\text{DoublePrime}(p,q)$ using a multistep-inverse loss requires taking into account specific \textit{long-duration} dependencies between states, we hypothesized that learning such an encoder using multistep-inverse methods may be considerably less sample-efficient that doing so using STEEL, particularly for large $p$ and $q$. We therefore adapted the tabular Ex-BMDP experimental setup from the released code of \cite{levine2024multistep} and tested the algorithms head-to-head.

However, for large $p$ and $q$, the process of exhaustively searching all possible encoders (that is, all partitions of $2q$ elements) quickly becomes intractable. Therefore, we restricted the hypothesis space of encoders to \textit{only} include the \textit{two} hypotheses $\phi_{SP}$ and $\phi_{DP}$. Note that this modification makes the learning task strictly ``easier.'' For a fair comparison, we also restricted the hypothesis class for STEEL to the minimum-possible set of ``one-versus-rest'' classifiers that would ensure realizability for both $\text{DoublePrime}(p,q)$ and $\text{SinglePrime}(p,q)$. In particular, this is $3q$ classifiers $f\in\mathcal{F}$: for each $i \in \{0,...,1-q\}$, $\mathcal{F}$ includes a hypothesis which distinguishes $i$ from all other observations, a hypothesis which distinguishes $i'$ from all other observations, and a hypothesis which distinguishes $i$ \textit{or} $i'$ from all other observations.

We make the following further modifications to the experimental protocol from \cite{levine2024multistep}:
\begin{itemize}
    \item When assessing the ``minimum-range'' minimal loss encoder as in Equation \ref{eq:acstate}, \cite{levine2024multistep} include an empirical ``fudge factor'': their protocol returns the minimum-range encoder that achieves a loss within 0.1\% of the true minimum loss over all possible encoders. We found that \textit{even without this fudge factor}, the multistep-inverse methods were still heavily biased towards returning $\phi_{SR}$, when applied  either to $\text{SinglePrime}(p,q)$; \textit{or to} $\text{DoublePrime}(p,q)$ with too-small $K$ or too-few samples. This observation has a simple statistical explanation: if, in the ``infinite sample'' limit, $\Pr(a_t|s_t= i, s_{t+1} = j) = \Pr(a_t|s_t= i', s_{t+1} = j') $, then fitting $g(i,j,k)$ to both samples of $(x_{t} = i, a_t, x_{t+k} = j)$ \textit{and} samples of $(x_{t} = i', a_t, x_{t+k} = j')$ from the ``fitting set'' will lead to a higher-quality multistep inverse model $g$, and therefore lower loss on the ``optimization set'', compared to using only the (smaller number of) samples of  $(x_{t} = i, a_t, x_{t+k} = j)$ alone. We therefore remove the fudge factor entirely, and in fact return $\phi_{GR}$ in the case of exact numerical ties.
    \item Because the forward-model loss in Equation \ref{eq:acdf} is zero for both $\phi_{SR}$ and $\phi_{GR}$ for \textit{any} dataset collected from $\text{DoublePrime}(p,q)$, we do not use it in our experiments: that is, we regard AC-State and ACDF as equivalent in this setting, and refer to AC-State alone from here onward. (Technically, the forward model loss \textit{would} help identify the correct encoder $\phi_{SR}$ in data generated from $\text{SinglePrime}(p,q)$. However, as mentioned above, $\phi_{SR}$ is \textit{already} returned essentially ``by default'' by AC-State, so this loss term turns out to be unnecessary.)
    \item To better match the spirit of ``learning an Ex-BMDP from a single trajectory'', we collect the ``fitting'' and ``optimization'' sets from one single trajectory, one after the other, with each corresponding to half of the trajectory: we regard the total sample complexfity as the length of the \textit{entire } trajectory. Within of these two sub-trajectories, we collect all available tuples $(x_t, a_t, x_{t+k})$ for all $ k \leq K$. There are therefore slightly fewer tuples with $k = K$ than with $k=1$: we weight all \textit{tuples} equally in the loss (rather than weighting all values of $k$ equally; \cite{lamb2023guaranteed} is unclear about the ``correct'' behavior here, and it is not theoretically important).
    \item Based on Equation \ref{eq:def_g_empirical}, $g(s,s',k)[a]$ can be zero, which, by Equation \ref{eq:acstate}, can lead to an infinite loss on a sample in the ``optimization'' set. In order to avoid this, \cite{levine2024multistep} set $g(s,s',k)[a]$ to an arbitrary floor value of $10^{-7}$. As a more principled and scale-invariant solution, we use standard Laplace (``add-one'') smoothing when fitting $g$.
\end{itemize}
We empirically compare the sample-efficiency of STEEL to AC-State on these problems with $(p,q) = (3,5), (5,7),(11,13),(17,19),(29,31)$, and $(41,43)$.

In our comparison, we run each algorithm under the ``best'' choice of hyperparameters. For AC-State, the conduct a large hyperparameter search to find the optimal $K$, while for STEEL, we simply choose a single set of hyperparameters that will succeed on both $\text{DoublePrime}(p,q)$ and $\text{SinglePrime}(p,q)$ based on our prior knowledge of the problems. Specifically:
\begin{itemize}
\item For AC-State, for each tested value of $(p,q)$, we first collect 20 validation trajectories of $\text{DoublePrime}(p,q)$ (under a uniformly random policy), for increasing dataset sizes starting at $T=100$ steps. At each tested dataset size $T$, we compute the optimal $\phi_\text{AC-State}$ for each of the 20 trajectories, with each possible value of $K$ from $K=1$ to $K=50,000$. We first repeatedly increase the dataset size $T$ by factors of 10. Once we first identify a $T=10^m$ such that, for at least some $K$, AC-State correctly returns $\phi_\text{AC-State} = \phi_{DP}$ for all 20 trajectories, we then test with  $T = \{2\cdot 10^{m-1}, 3\cdot 10^{m-1}, ..., 9\cdot 10^{m-1}\}$. At the earliest of \textit{these} timesteps at which, for some $K$, a correct encoder is learned for all 20 trajectories, we record the median such $K$ as the ``tuned'' value $K_\text{opt}$ of this hyperparameter. (See Figure~\ref{fig:hyperparameter_search} for the results of the hyperparameter search.) At this point, we perform the actual test: we evaluate the success rate of AC-State on $\text{DoublePrime}(p,q)$, using only $K =K_\text{opt} $, for values of $T$ starting at $T=10^{m-1}$ and increasing by increments of $10^{m-1}$, on 20 new trajectories for each tested dataset size. We stop when this test-time accuracy reaches 20/20 on trajectories from $\text{DoublePrime}(p,q)$, at some $T_{\text{max}}$.  Finally, we verify that AC-State can \textit{also} consistently, correctly learn $\phi_{SP}$ on data from $\text{SinglePrime}(p,q)$ with $K =K_\text{opt}$ and a trajectory length of $T_{\text{max}}$, for 20 trajectories. (This always held in practice.\footnote{We avoided more extensive testing with $\text{SinglePrime}(p,q)$, both because, as discussed above, AC-State tends to ``default'' to returning the  encoder $\phi_{SP}$, which is correct for $\text{SinglePrime}(p,q)$; and also because the deterministic dynamics of $\text{DoublePrime}(p,q)$ allowed us to take some computational ``shortcuts'' when evaluating the loss function in Equation \ref{eq:acstate} on data from $\text{DoublePrime}(p,q)$ for very large $K$, which were not possible for $\text{SinglePrime}(p,q)$. This made more exhaustive experimentation on $\text{DoublePrime}(p,q)$ significantly more tractable than it would be on $\text{SinglePrime}(p,q)$.  })
\item For STEEL, we set $N = \hat{D} = 2q$ (to match the maximum number of states in $\text{DoublePrime}(p,q)$ or $\text{SinglePrime}(p,q)$); $\hat{t}_{mix} = 1$ (because neither $\text{DoublePrime}(p,q)$ nor $\text{SinglePrime}(p,q)$ have time-correlated noise);  $\epsilon = 0.49$ (because, due to the simple emission distributions $\mathcal{Q}$ of $\text{SinglePrime}$ and $\text{DoublePrime}$, a $51\%$ encoder accuracy on each latent state implies a $100\%$ encoder accuracy); and $\delta = 0.05$.  We test for 20 trials each of $\text{SinglePrime}(p,q)$ and $\text{DoublePrime}(p,q)$. 
\textbf{All of these tests returned the correct encoders}, so the only metric we needed to consider was the number of environment steps taken for each run.
\end{itemize}
\begin{figure}
    \begin{subfigure}{0.49\textwidth}
\includegraphics[width=\textwidth]{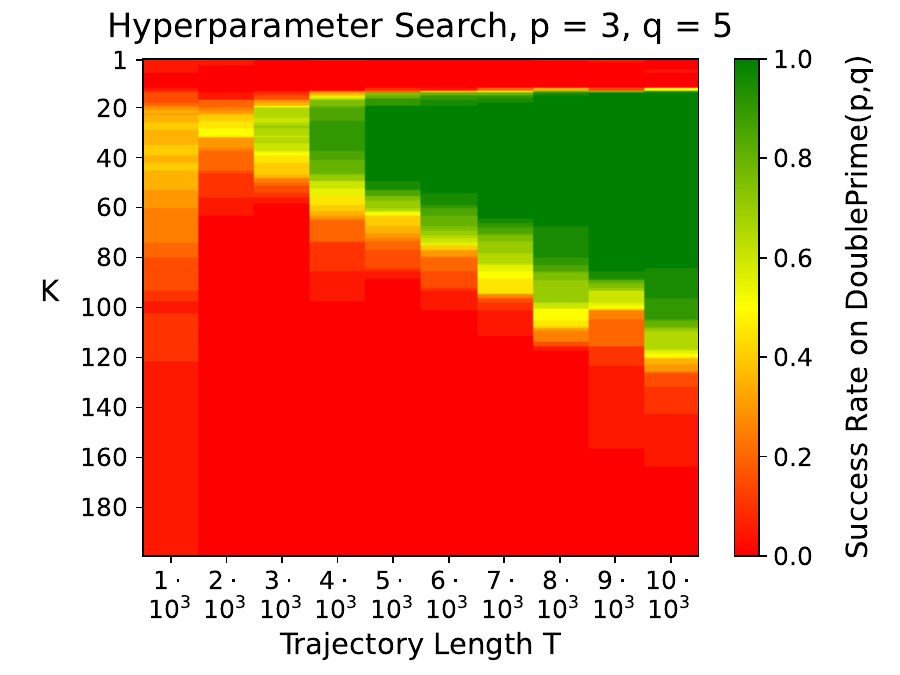}
    \end{subfigure}
    \begin{subfigure}{0.49\textwidth}
\includegraphics[width=\textwidth]{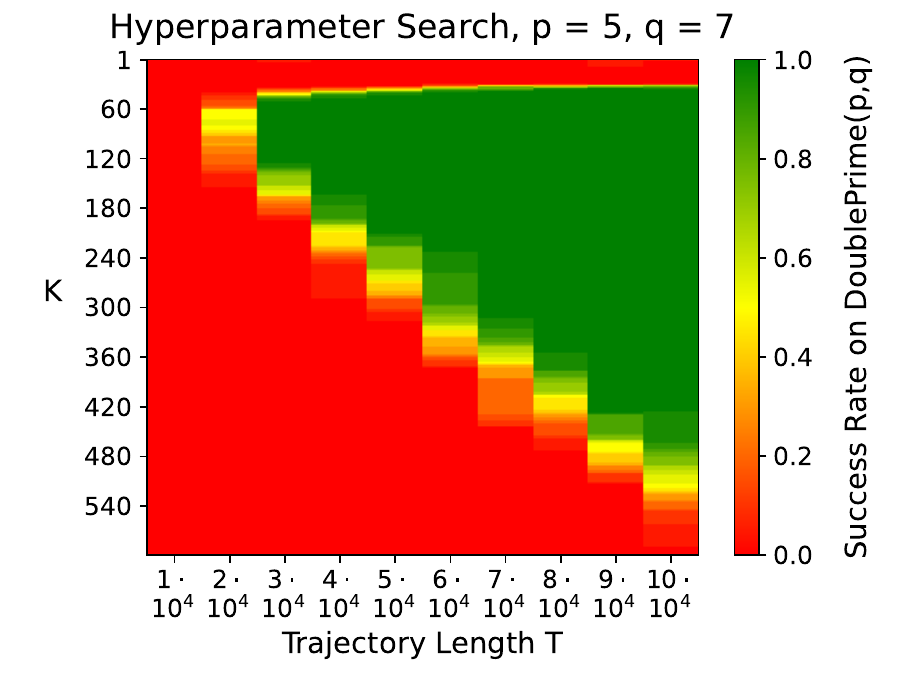}
    \end{subfigure}
    \begin{subfigure}{0.49\textwidth}
\includegraphics[width=\textwidth]{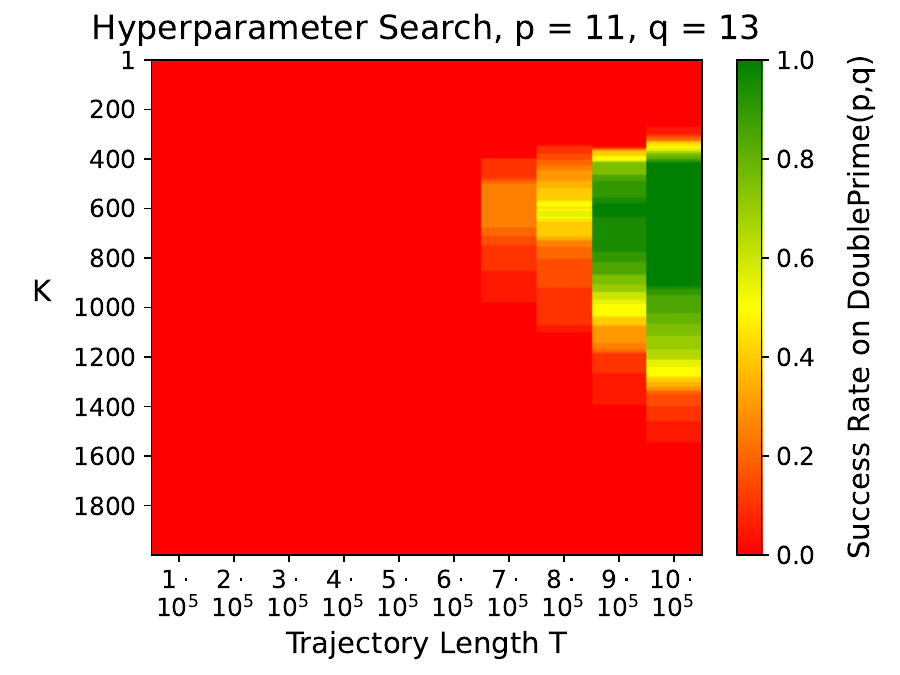}
    \end{subfigure}
    \begin{subfigure}{0.49\textwidth}
\includegraphics[width=\textwidth]{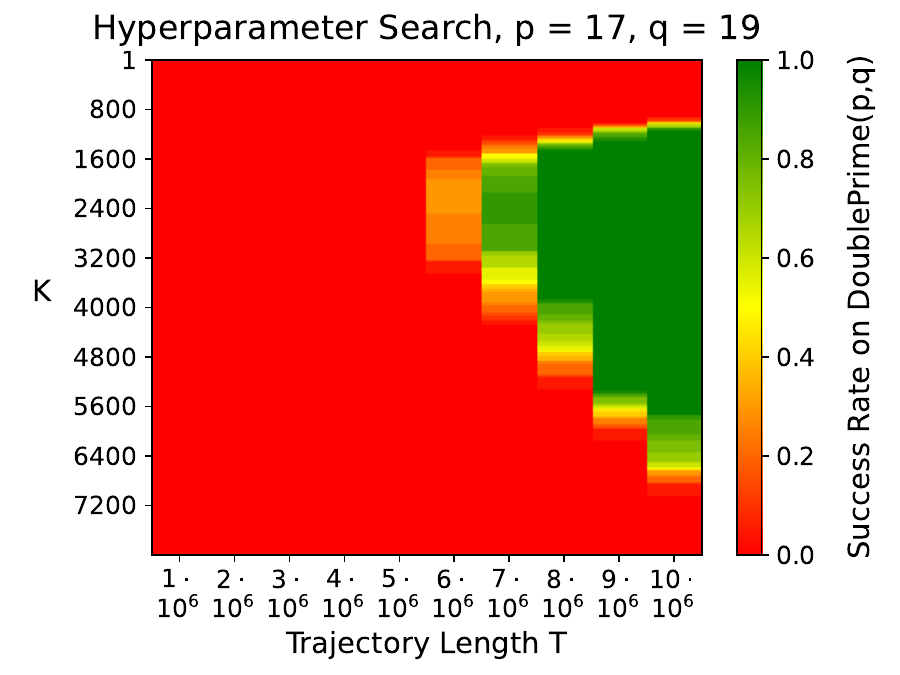}
    \end{subfigure}
    \begin{subfigure}{0.49\textwidth}
\includegraphics[width=\textwidth]{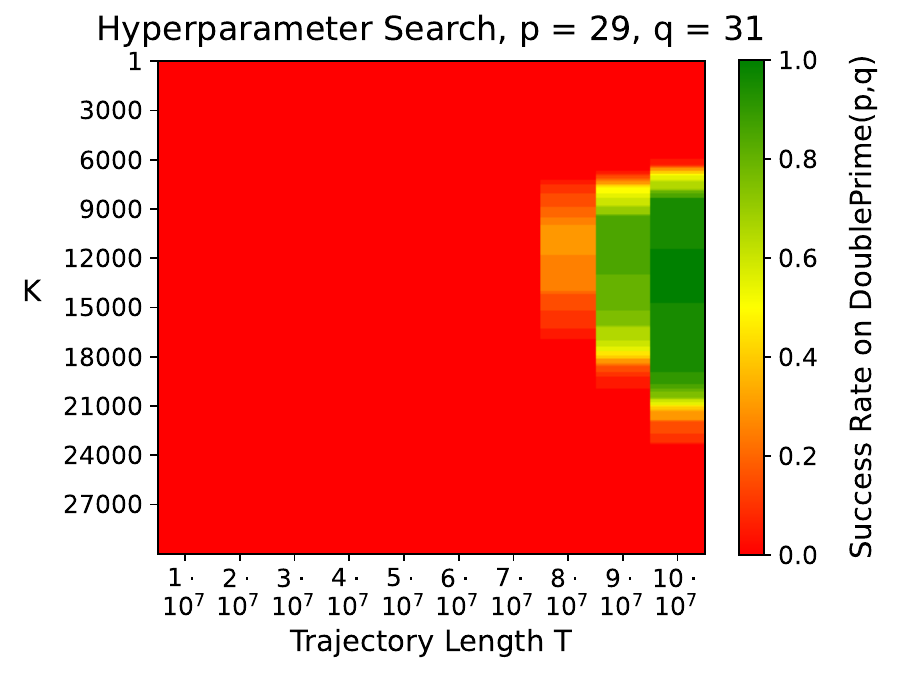}
    \end{subfigure}
    \begin{subfigure}{0.49\textwidth}
\includegraphics[width=\textwidth]{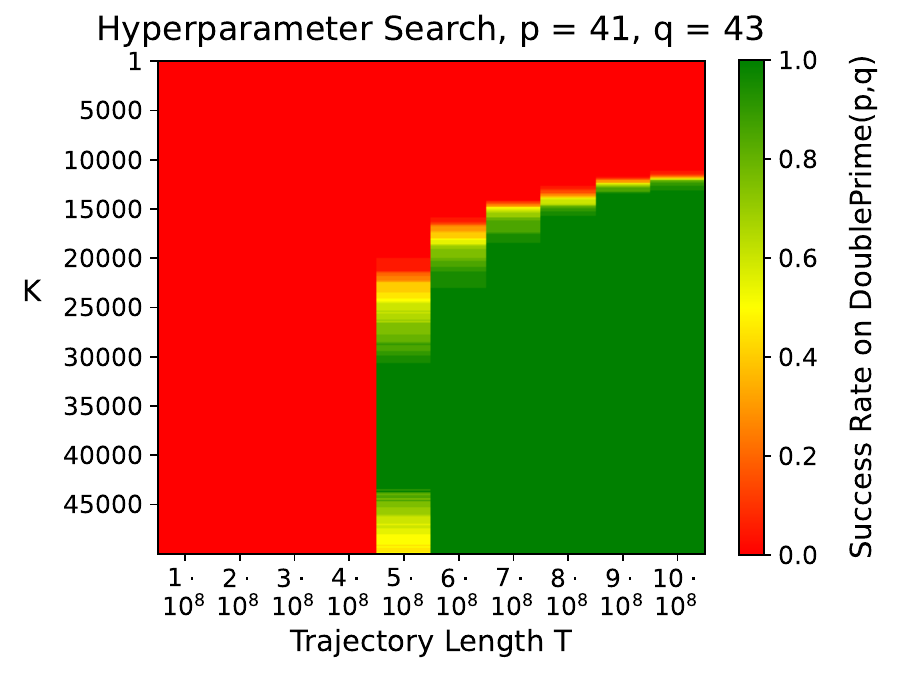}
    \end{subfigure}
    \caption{Results from the hyperparameter search for the maximum-step-count parameter $K$ in Equation \ref{eq:acstate}. The objective is to find a $K$ which leads to a high success rate in learning the correct latent state encoder for $\text{DoublePrime}(p,q)$, for the lowest possible number of environment steps. }
    \label{fig:hyperparameter_search}
\end{figure}
\subsubsection{Results} \label{sec:tabular_results}
Our top-line results are reported in Table \ref{tab:doubleprime_results}.Our first reported statistic is the ``Max Steps'': the number of environment steps at which all 20 out of 20 tested trajectories for each of $\text{DoublePrime}(p,q)$ and $\text{SinglePrime}(p,q)$ returned the correct encoder. (For STEEL, this is simply the maximum steps taken over all 40 trajectories; for AC-State, this is the $T_{\text{max}}$ discovered in the test-time search described in the previous section.) We see that, while for small instances ($|S| \leq 14$) of $\text{DoublePrime}(p,q)$, AC-State is more sample-efficient than STEEL, STEEL scales efficiently to larger instances of the problem. 

Further, we noticed an interesting trend in the data. AC-State tended to transition very abruptly from learning the wrong encoder on all tests of DoublePrime to learning the correct encoder on all tests as $T$ increased (as can be observed in the hyperparameter search in Figure \ref{fig:hyperparameter_search}). By contrast, the distribution of steps taken for STEEL on $\text{DoublePrime}(p,q)$ was highly skewed: most trials took significantly fewer than the maximum observed number of steps. We therefore also compare the \textit{median} number of steps taken to learn a correct encoder for $\text{DoublePrime}(p,q)$. For STEEL, this is computed as simply the median length of the 20 $\text{DoublePrime}(p,q)$ trajectories tested. For AC-State, it is the first timestep $T$ in the test-time search where AC-State with $K=K_\text{opt}$ learns a correct encoder for at least 10 of the 20 trajectories. Here, we find that, especially for large $(p,q)$, STEEL's advantage over AC-State becomes even more pronounced.

\begin{table}
\small{
\setlength\tabcolsep{1.5pt}
    \centering
    \begin{tabular}{|c|c|c|c|c|c|c|}
    \hline
    $(p,q)$&(3,5)&(5,7)&(11,13)&(17,19)&(29,31)&(41,43)\\
\hline
Max Steps for STEEL&$5.97\cdot 10^4$&$1.49\cdot 10^5$&$9.07\cdot 10^5$&$\mathbf{2.60\cdot 10^6}$&$\mathbf{1.10\cdot 10^7}$&$\mathbf{2.83\cdot 10^7}$\\
Max Steps for AC-State&$\mathbf{4\cdot 10 ^3}$&$\mathbf{4\cdot 10 ^4}$&$1\cdot 10 ^6$&$8\cdot 10 ^6$&$1\cdot 10 ^8$&$5\cdot 10 ^8$\\
Max Steps: (AC-State / STEEL)&$\approx 0.07$&$\approx 0.3$&$\approx 1$&$\approx 3$&$\approx 9$&$\approx 18$\\
\hline
Median Steps (STEEL, DoublePrime)&$2.50\cdot 10^4$&$7.48\cdot 10^4$&$5.23\cdot 10^5$&$
\mathbf{1.08\cdot 10^6}$&$\mathbf{1.74\cdot 10^6}$&$\mathbf{3.29\cdot 10^6}$\\
Median Steps (AC-State, DoublePrime)&$\mathbf{3\cdot 10 ^3}$&$\mathbf{3\cdot 10 ^4}$&$8\cdot 10^5$&$7\cdot 10^6$&$9\cdot 10^7$&$5\cdot 10 ^8$\\
Median Steps: (AC-State / STEEL)&$\approx 0.1$&$\approx 0.4$&$\approx 1.5$&$\approx 6$&$\approx 50$&$\approx 150$\\

\hline
    \end{tabular}
    }
    \caption{Comparison of the empirical sample complexity of STEEL and AC-State on prime-loop MDPs. See text of Section 
    \ref{sec:tabular_results}.}
    \label{tab:doubleprime_results}
\end{table}
\subsection{Exploration policies with AC-State} \label{sec: active_exploration}
So far, we have only compared STEEL to AC-State, where AC-State uses a uniformly random exploration policy. This may seem unfair: STEEL takes decidedly nonrandom actions, and \cite{lamb2023guaranteed} proposes an active exploration method for AC-State (albeit, as mentioned in Section \ref{sec:acstate_theory} above, a method without strict theoretical grounding).

However, there is reason to believe that  exploration policies similar to the one proposed by \cite{lamb2023guaranteed} would be \textit{unlikely} to help  correctly identify the dynamics of $\text{DoublePrime}(p,q)$. 

To start with, the stated goal of the the exploration policy in \cite{lamb2023guaranteed} is to ``achieve high coverage of the control-endogenous state space.'' A natural question to ask is: how poorly do uniformly random policies perform at achieving this goal? To quantify this, we examined the state coverage for a single $10^6$-step trajectory on $\text{DoublePrime}(41,43)$, and computed the ratio of the state visitation of the \textit{most-visited state} to the state visitation of the \textit{least-visited state}. Surprisingly, this was $\approx 2$. In other words, $\text{DoublePrime}$ is \textbf{not} a hard exploration problem, and so techniques designed to improve state coverage would seem to be unlikely to provide much of a benefit over a uniformly random policy.

Still, we will now examine the specific exploration technique proposed by \cite{lamb2023guaranteed}. At a high level, the technique proceeds as follows:
\begin{itemize}
    \item Throughout data collection, the agent maintains an approximate version of the encoder $\phi'$ obtained by optimizing the AC-State loss (Equation \ref{eq:acstate}), and an approximate transition function $T'$, obtained through counting. The algorithm also maintains state visitation counts for each of the learned latent states encoded by $\phi'$. 
    \item At the start of each round of exploration (time $t$), the agent selects a goal learned-latent state $s_g$, with probability inversely proportional to the visitation count. The agent then takes a single random action $a_t$, and then, starting at time $t+1$, plans shortest-path to $s_g$, and computes the number of steps $k'$ it will take to get to $s_g$. Then, the agent proceeds to navigate to $s_g$ in a closed-loop manner, using both $\phi'$ and $T'$. After $k'$ steps, \textit{regardless of whether $s_g$ is reached}, a new goal is set, and the process starts over.
    \item The encoder $\phi$ is \textit{only} trained on the tuples $(x_t, a_t,x_{t+k'+1})$ which begin with a random action and end with a goal state.
\end{itemize}

We performed experiments of a version of this algorithm adapted for our tabular setting, on the $\text{DoublePrime}$ environments. Here, we explain the modifications we made to this exploration algorithm from \cite{lamb2023guaranteed} for our tests.
Examining the original  algorithm, we immediately notice an issue: the encoder is only trained with segments of length $k = k' +1$, which (assuming $\phi'$ and $T'$ are anywhere close to accurate) will only scale linearly with $D$. However, properly learning the encoder for the $\text{DoublePrime}$ environments \textit{requires} examining longer segments of trajectories than the diameter of the dynamics graph. Therefore, in order adapt this exploration method to the setting we consider, we eliminate this feature of the algorithm and train $\phi$ on the entire collected  trajectory, as in Equation \ref{eq:acstate}. 

Furthermore, we give the exploration method what should be a large, unnatural advantage, by planning, choosing goals, and assessing state visitation on the \textit{true ground truth dynamics $T$, with the ground-truth state encoder $\phi$}. This represents what should be the ``best possible'' case for the algorithm. It also enables us to test in this setting without having to optimize $\phi'$ continuously during data collection, and to assess many values of the hyperparameter $K$ (which would impact the intermediate encoder $\phi'$) simultaneously on a single collected trajectory.

In our implementation, we reset the state visitation counts between collecting the ``fitting'' and ``optimization'' portions of the trajectory, in order to ensure that they are distributed in the same way. We also take uniformly random actions in cases where both actions produce a path of the same length to the goal. Otherwise, we simply take the shortest available path to the goal state, and, when it is reached, pick a new goal and take one random action.

Hyperparameter search results are shown in Figure \ref{fig:hyperparameter_search_2}, and top-line results are shown in Table \ref{tab:doubleprime_results_2}. We see in Table \ref{tab:doubleprime_results_2} that AC-State with the active exploration method seems to slightly \textit{underperform} random exploration on the DoublePrime$(p,q)$ environments, requiring slightly \textit{more} samples to find the correct encoder, for all  values of $(p,q)$ that were tested.\footnote{However, note that because we only tested trajectory lengths $T$ at particular values as described in Section \ref{sec:doubleprime_setup}, some of these performance differences may appear exaggerated: for example, the difference between a trajectory length of $1\cdot 10^6$ and $2\cdot 10^6$ must be interpreted with the understanding that no trajectory lengths \textit{between} $1\cdot 10^6$ and $2\cdot 10^6$ were tested.} While it is not immediately obvious why we see this gap, these results support our initial hypothesis that, because the DoublePrime environments are well-explored by uniformly random policies, techniques designed to improve latent state coverage are not helpful in improving AC-State's performance on these environments.

\begin{figure}
    \begin{subfigure}{0.49\textwidth}
\includegraphics[width=\textwidth]{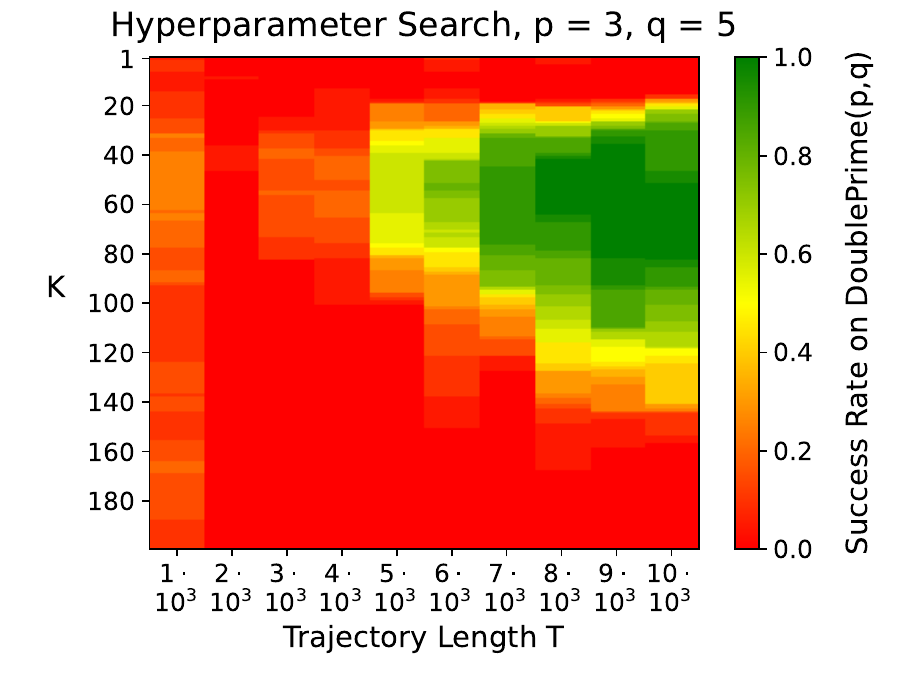}
    \end{subfigure}
    \begin{subfigure}{0.49\textwidth}
\includegraphics[width=\textwidth]{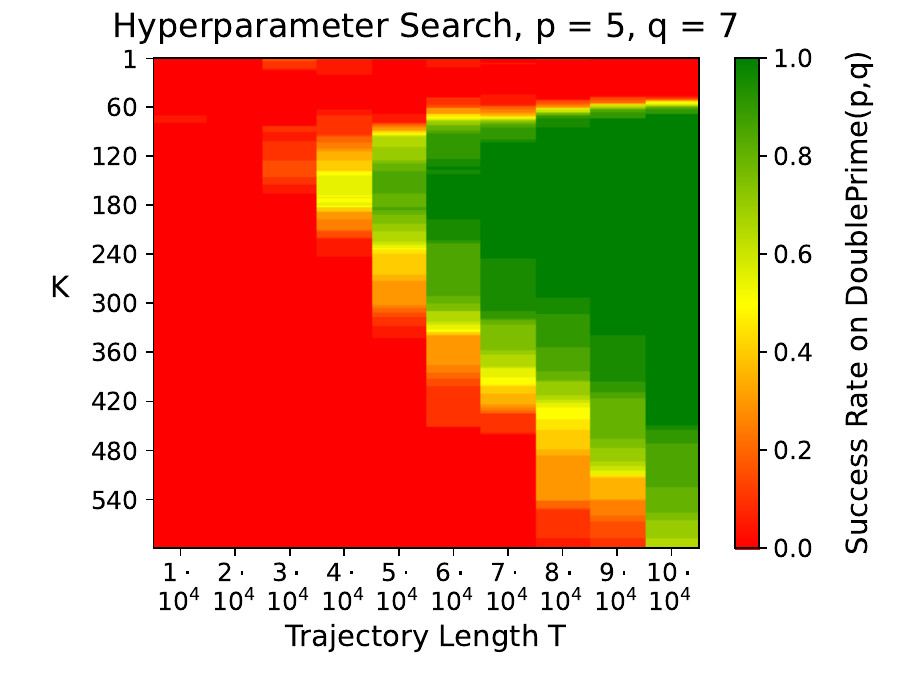}
    \end{subfigure}
    \begin{subfigure}{0.49\textwidth}
\includegraphics[width=\textwidth]{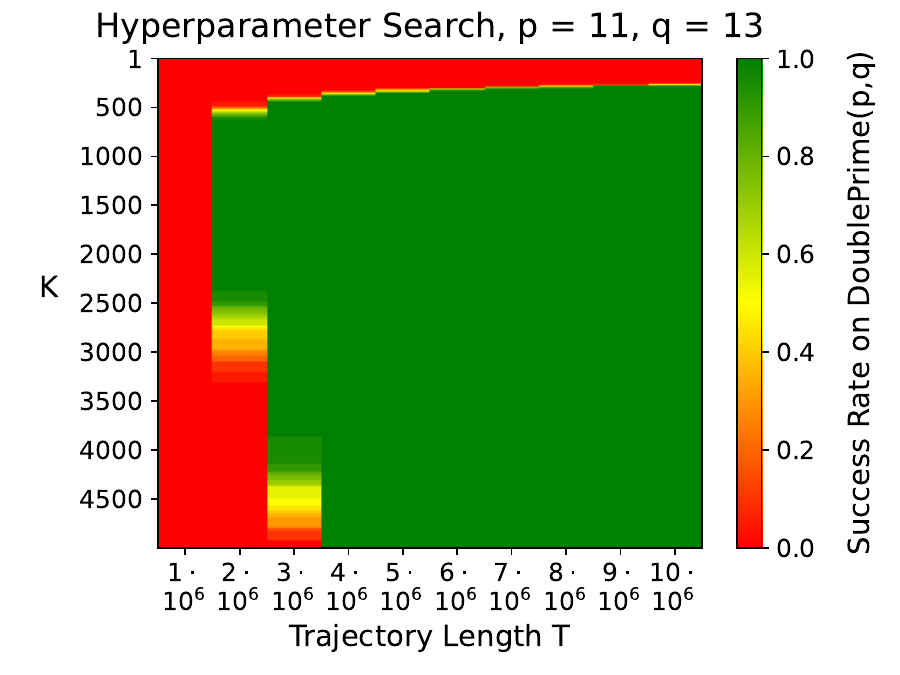}
    \end{subfigure}
    \begin{subfigure}{0.49\textwidth}
\includegraphics[width=\textwidth]{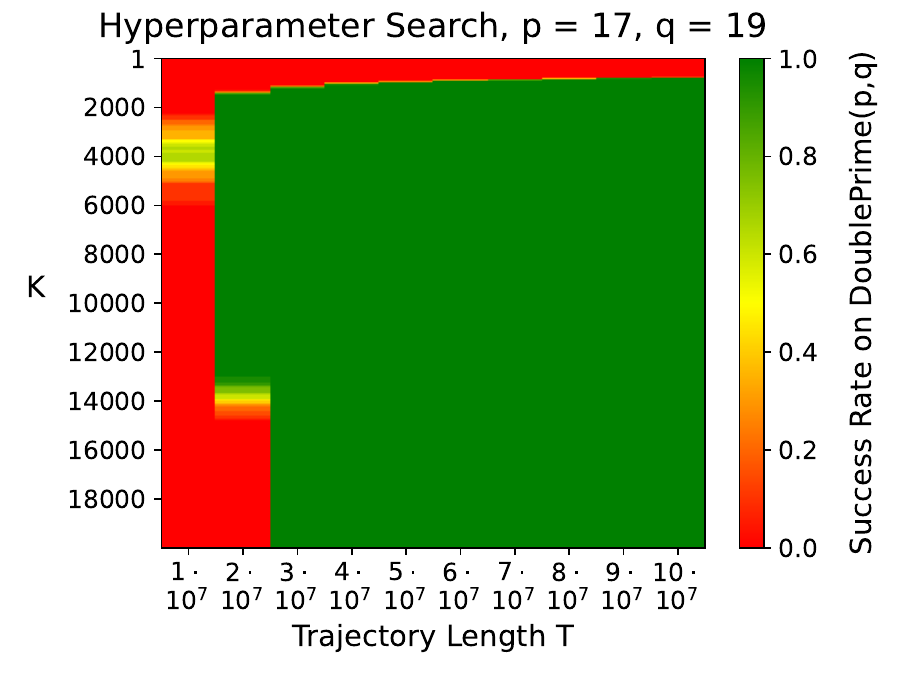}
    \end{subfigure}
    \begin{subfigure}{0.49\textwidth}
\includegraphics[width=\textwidth]{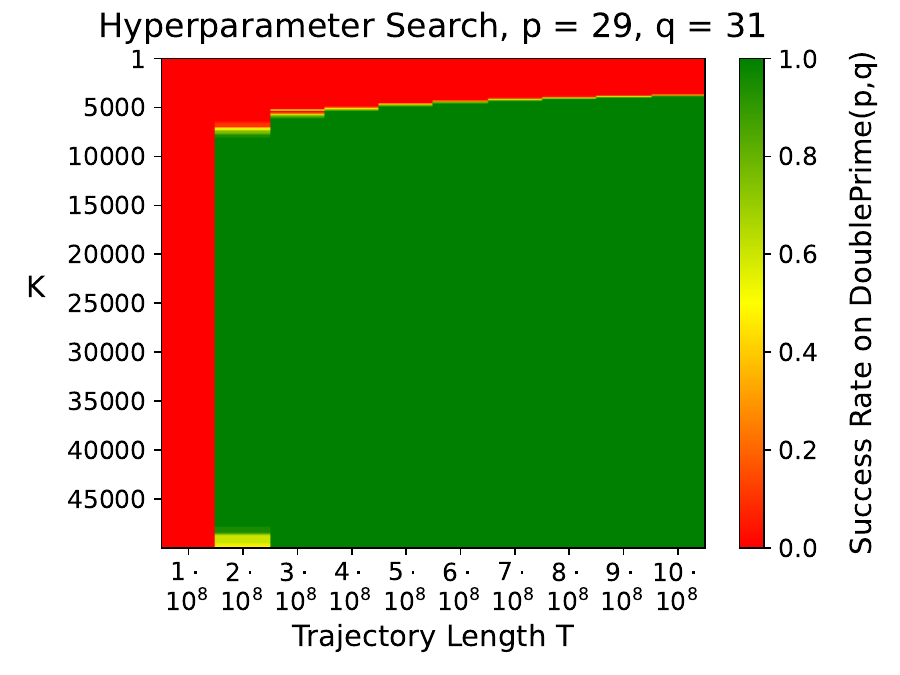}
    \end{subfigure}
    \begin{subfigure}{0.49\textwidth}
\includegraphics[width=\textwidth]{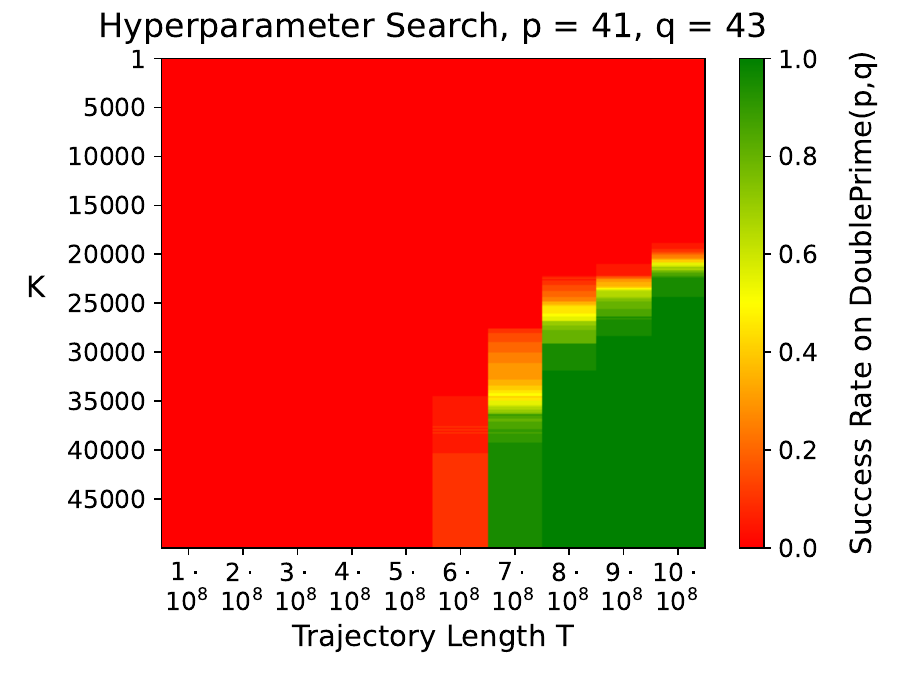}
    \end{subfigure}
    \caption{Results from the hyperparameter search for the maximum-step-count parameter $K$ for $\text{DoublePrime}(p,q)$, using the active exploration policy described in Section \ref{sec: active_exploration}. }
    \label{fig:hyperparameter_search_2}
\end{figure}

\begin{table}
\centering
\small{
\setlength\tabcolsep{2pt}
    \centering
    \begin{tabular}
    {|c|c|c|c|c|c|}
    \hline
    $(p,q)$&(3,5)&(5,7)&(11,13)&(17,19)&(29,31)\\
\hline
Max Steps (AC-State, DoublePrime, Uniform Policy)&$4\cdot 10 ^3$&$4\cdot 10 ^4$&$1\cdot 10 ^6$&$8\cdot 10 ^6$&$1\cdot 10 ^8$\\
Max Steps (AC-State, DoublePrime, Exploration)&$8\cdot 10^3$&$6\cdot10^4$&$2\cdot10^6$&$2\cdot10^7$&$2\cdot10^8$\\
\hline
Median Steps (AC-State, DoublePrime, Uniform Policy)&$3\cdot 10 ^3$&$3\cdot 10 ^4$&$8\cdot 10^5$&$7\cdot 10^6$&$9\cdot 10^7$\\
Median Steps (AC-State, DoublePrime, Exploration)&$6\cdot 10^3$&$4\cdot 10^4$&$2\cdot10^6$&$2\cdot10^7$&$2\cdot10^8$\\
\hline
    \end{tabular}
    }
    \caption{Comparison of the empirical sample complexity of AC-State on the DoublePrime$(p,q)$ environments, with and without active exploration. (Note that we did not perform a final comparison for $p=41, q=43$, because the hyperparameter search for the active exploration policy (see Figure \ref{fig:hyperparameter_search_2}) with a maximum $K$ of 50000 did not rule out a $K_{\text{opt}} > 50000$, so we could not fairly tune this hyperparameter for the active exploration policy. We were unable to perform a hyperparameter search with $K > 50000$ due to technical limitations.) }
    \label{tab:doubleprime_results_2}
\end{table}

\subsection{Discussion}
It is important to note that the $\text{DoublePrime}$ environments are \textit{not} hard-exploration environments, nor do they have \textit{any} time-correlated noise, nor do they have rich observations. In fact, adding time-correlated noise or rich observations would likely only scale the sample-complexity of STEEL on these environments only modestly (linearly in $\hat{t}_{mix}$ and $\log(|\mathcal{F}|)$). Rather, STEEL outperforms AC-State on large instances of these environments because the environments' transition dynamics are arranged in a way that is in a sense ``adversarial'' to multistep-inverse methods. By contrast, because STEEL has well-understood sample-complexity that applies for \textit{any} transition graph structure (as long as the reachability assumption holds), we do not expect that any such ``adversarial'' environments to exist for STEEL.

\end{document}